\documentclass[11pt]{article}

\usepackage[final]{acl}

\usepackage{times}
\usepackage{latexsym}

\usepackage[T1]{fontenc}

\usepackage[utf8]{inputenc}

\usepackage{microtype}

\usepackage{xcolor}

\usepackage{inconsolata}

\usepackage{graphicx}
\usepackage{amsmath}
\usepackage{amssymb}
\usepackage{booktabs}
\usepackage{multirow}
\usepackage{enumitem}
\usepackage{amsthm}

\newtheorem{theorem}{Theorem}[section]
\newtheorem{lemma}[theorem]{Lemma}
\newtheorem{corollary}[theorem]{Corollary}
\newtheorem{assumption}{Assumption}[section]
\theoremstyle{remark}
\newtheorem{remark}{Remark}[section]

\title{What Does Layer-Importance Reveal About Transformers and State-Space Models?}

\author{
  Istabrak Abbes$^{1,2,3}$,
  Nizar Islah$^{2,3}$,
  Irina Rish$^{2,3,5}$,
  Sarath Chandar$^{1,2,4,5}$ \\[1ex]
  $^{1}$Chandar Research Lab  , $^{2}$Mila - Quebec AI Institute , $^{3}$Université de Montréal \\ $^{4}$Polytechnique Montréal, $^{5}$Canada CIFAR AI Chair \\[1ex]
  \texttt{istabrak.abbes@mila.quebec}
}

\newenvironment{revblock}{\begin{quote}\itshape}{\end{quote}}
\begin{document}
\maketitle
\begin{abstract}
Transformers and state-space models (SSMs) are the two dominant families of sequence models, and a central open question is how far the analytical knowledge built for transformers transfers to SSMs. We address this through the lens of layer importance which underpins compression, selective fine-tuning, and interpretability across both families. We decompose layer importance into two distinct notions. \emph{Necessity} captures how much the pretrained model depends on a layer's existing contribution, measured by the loss increase from bypassing it. \emph{Plasticity} captures where the model absorbs new information during fine-tuning, measured by the magnitude of task-specific weight updates. Our analysis reveals that the two families behave fundamentally differently: in every evaluated residual transformer up to $14$B parameters, Necessity and Plasticity anti-align across depth, whereas in the evaluated Mamba-style SSMs they point to overlapping regions. The sign of this alignment also predicts downstream adaptation behavior. In the evaluated transformers, concentrating updates in the most plastic layers increases catastrophic forgetting, while this tier-dependent effect disappears in the evaluated Mamba-style SSMs.
\end{abstract}
\section{Introduction}
\label{sec:introduction}
Modern language modeling is still dominated by residual transformers \citep{vaswani2017attention}, but selective state-space models have emerged as a major alternative architecture \citep{mamba,mamba2}. A large body of work on transformers has produced mature tools for pruning \citep{gromov2025unreasonableineffectivenessdeeperlayers,shortgpt}, parameter-efficient fine-tuning \citep{lora}, and interpretability \citep{rome,demystifyingroles}, many of which rely on estimating \emph{layer importance} \citep{zhang2024cornerstone,telltale,layercard,yao2024layerwise}. On the other hand, SSMs inherit the same vocabulary but not, obviously, the same internal structure. The natural question is whether the analytical tools calibrated on transformers transfer to SSMs, or whether they silently measure something else when selective recurrent state dynamics replace attention-based token mixing.
We study this question by decomposing layer importance into two distinct notions. \emph{Necessity} measures how much the pretrained model depends on a layer’s existing contribution, quantified through the loss increase caused by bypassing the layer. \emph{Plasticity} measures where the model absorbs new information during adaptation, quantified through the magnitude of task-specific weight updates during fine-tuning. Although these notions are often conflated under a single notion of ``importance,’’ they capture fundamentally different properties: pruning and interpretability primarily rely on Necessity \citep{shortgpt,zhang2024cornerstone}, while selective fine-tuning relies on Plasticity \citep{pan2024lisalayerwiseimportancesampling,adalora,alphalora,plop}. The key question is whether these two quantities identify the same regions of a network, and whether this relationship transfers across architectures.

\begin{figure}[t]
\centering
\begin{minipage}[c]{0.5\linewidth}
\centering
\includegraphics[width=\linewidth]{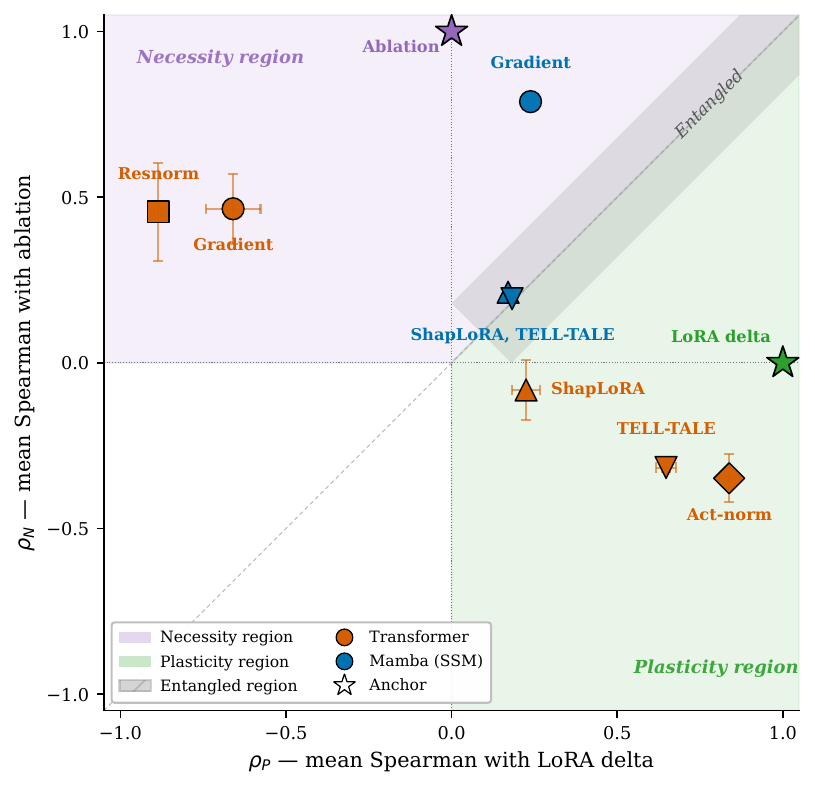}
\end{minipage}%
\hfill
\begin{minipage}[c]{0.49\linewidth}
\centering
\includegraphics[width=\linewidth]{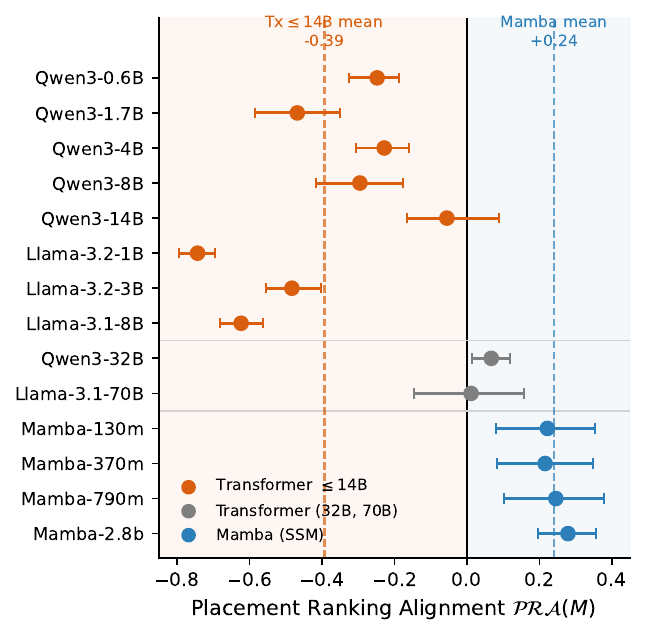}
\end{minipage}
\caption{\textbf{Layer importance is method-relative and architecture-conditional.}
\textbf{Left:} Estimators projected onto the Necessity--Plasticity plane (mean Spearman correlation with ablation, vertical; with LoRA-delta, horizontal). In transformers, gradient and resnorm methods align with Necessity; activation norm and TELL-TALE align with Plasticity. In Mamba, estimators cluster in the mixed-positive region. \textbf{Right:} $\mathcal{NPA}(M)$ across checkpoints (bars: bootstrap $95\%$ CI). Every evaluated transformer up to $14$B is negative; every evaluated Mamba-style SSM is positive; the two largest transformers collapse toward zero. Dashed lines mark family means.}   
\label{fig:overview}
\end{figure}

To make this distinction explicit, we introduce \emph{Necessity-Plasticity Alignment} ($\mathcal{NPA}$): the mean Spearman correlation, across tasks, between an ablation-based Necessity ranking and a LoRA-update-magnitude-based Plasticity ranking of the model's layers. Positive $\mathcal{NPA}$ means that the same layers are both necessary and plastic; negative $\mathcal{NPA}$ means that pretrained dependence and adaptation concentrate at opposite ends of the network; values near zero indicate that no stable depth ordering exists without specifying the estimator.

Across the evaluated transformer and Mamba-style checkpoints, $\mathcal{NPA}$ separates the two families cleanly. Every evaluated residual transformer up to $14$B parameters exhibits negative alignment: early layers are most necessary, while late layers are most plastic. In contrast, every evaluated Mamba-style SSM exhibits positive alignment, with Necessity and Plasticity concentrated in overlapping regions. This contrast persists across importance estimators, survives removal of the first layer, and extends beyond Mamba to RWKV and a hybrid architecture. It is also not an artifact of low-rank adaptation: an adapter-free control that recomputes Plasticity from full-parameter fine-tuning preserves the sign of $\mathcal{NPA}$ on every checkpoint we retrained (Appendix~\ref{app:fullft}). Larger transformers, including Qwen3-32B and Llama-3.1-70B, do not enter the SSM regime; instead, their alignment collapses toward zero. The same split predicts downstream adaptation behavior: transformers show tier-dependent forgetting when successive tasks adapt highly plastic layers, whereas this effect vanishes in Mamba-style SSMs \footnote{Code, per-layer importance scores, and analysis scripts: \url{https://github.com/chandar-lab/layer-importance-ssm-vs-transformers}.}

\paragraph{Contributions.}

\textbf{1)} We introduce \emph{Necessity-Plasticity Alignment} ($\mathcal{NPA}$), a diagnostic that measures the agreement between pretrained layer dependence and task-specific adaptation across depth.

\textbf{2)} We find a robust architecture-dependent split: every evaluated residual transformer up to $14$B exhibits negative $\mathcal{NPA}$, with early layers most necessary and late layers most plastic, whereas the evaluated Mamba-style SSMs exhibit positive $\mathcal{NPA}$, with Necessity and Plasticity concentrated in overlapping regions. The split survives an adapter-free full fine-tuning control and reappears \emph{within} a single hybrid checkpoint.

\textbf{3)} We show that $\mathcal{NPA}$ predicts selective fine-tuning behavior in a controlled two-task continual-learning probe: the evaluated transformers exhibit strong forgetting when both tasks adapt the same highly plastic layers, while this tier-dependent interference disappears in the evaluated Mamba-style SSMs.
\section{Related Work}
\label{sec:related_work}
\paragraph{Importance-guided PEFT and rank/placement allocation.}
A growing line of parameter-efficient fine-tuning (PEFT) work argues that adaptation should not be uniform across depth. AdaLoRA~\citep{adalora} reallocates rank by singular-value importance during training; AlphaLoRA~\citep{alphalora} assigns LoRA experts using heavy-tailed self-regularization statistics; Layer Card~\citep{layercard} introduces \emph{resnorm} as a reusable layer diagnostic for selective placement; ShapLoRA~\citep{shaplora} uses Shapley-style sensitivity scores; and PLoP~\citep{plop} proposes a precise placement signal driven by neural-feature norms. Each of these methods proposes a single importance estimator and studies it in isolation. Recent work has also begun to ask whether compression, PEFT, and interpretability tools developed for transformers transfer to selective state-space models. LoRA-style adaptation has been extended to Mamba-style architectures~\citep{mambapeft}, while layer pruning and relevance-propagation methods have been adapted to selective SSMs~\citep{mambashedder,mambalrp}. These works motivate cross-architecture comparisons, but they do not ask whether different task-sensitive importance estimators agree on the same checkpoints

\paragraph{Layer roles, redundancy, and depth-dependent task usage.}
A parallel literature studies which layers are useful, redundant, or task-specific in pretrained models. ShortGPT~\citep{shortgpt} shows that many transformer layers can be removed with limited degradation; TELL-TALE~\citep{telltale} performs task-aware layer elimination; \citet{demystifyingroles} argue that depth contributes differently to retrieval, knowledge, and reasoning; \citet{yao2024layerwise} study layer-wise importance for memory-efficient PEFT. \citet{zhang2024cornerstone} identify ``cornerstone'' layers via Shapley-style ablation, finding early-layer dominance, a phenomenon we recover under ablation but show dissolves under LoRA delta on the same checkpoint. \citet{nepal2025forged} report that the layers ablation identifies as important for mathematical reasoning are stable across pretraining and post-training. \citet{shen2024ila} learn binary masks for layer significance during alignment and observe high overlap across alignment datasets, all under a single estimator. These results document stability \emph{within} one estimator; we document the orthogonal fact that \emph{across} estimators, the very identity of ``important'' layers changes, and that whether it changes is architecture-conditional.

\paragraph{Method disagreement in attribution and importance.}
Disagreement between attribution methods is an established finding in the explainability literature. \citet{adebayo2018sanity} show that gradient-saliency maps survive parameter randomizations that destroy network behavior, evidence that gradient attribution does not measure the same thing as causal intervention. \citet{ancona2018towards} unify gradient-based methods as varying-fidelity linearizations of true ablation; \citet{integratedgradients} prove that no attribution method satisfies sensitivity, implementation invariance, and completeness simultaneously. \citet{krishna2022disagreement} measure the magnitude of feature-level attribution disagreement and characterize how practitioner interpretations shift with the method chosen. Our contribution shifts the disagreement from feature attributions to layer rankings, from a single architecture to a transformer/SSM contrast, and from ``expected at high resolution'' to a structurally predictable function of residual-stream gradient flow.

\section{Layer-Importance Diagnostic Protocol}

Our goal is to determine whether commonly used layer-importance estimators measure the same underlying property across architectures, or whether their behavior depends on the distinction between pretrained reliance and adaptation dynamics. We therefore study layer importance through two complementary quantities: \emph{Necessity} and \emph{Plasticity}.

\paragraph{Necessity.}
Necessity measures how much the pretrained model depends on a layer's existing contribution at inference time. A layer is considered necessary if bypassing it substantially degrades performance. For a model $M$ with layers $\{\ell_i\}_{i=1}^L$, we define the Necessity score of layer $\ell_i$ on task $t$ as:
\[
N(\ell_i; M,t)
=
\mathcal{L}\!\left(M_{\setminus \ell_i}, t\right)
-
\mathcal{L}(M,t),
\]
where $\mathcal{L}$ denotes the task loss and $M_{\setminus \ell_i}$ is the model with layer $\ell_i$ bypassed. Larger values indicate that the pretrained computation relies more strongly on that layer.

\paragraph{Plasticity.}
Plasticity measures where the model absorbs new information during adaptation. Rather than quantifying reliance on the pretrained computation, it measures how strongly each layer changes during task-specific fine-tuning. Using LoRA adaptation, we define the Plasticity score of layer $\ell_i$ as:
\[
P(\ell_i; M,t)
=
\left\|
\Delta W_i^{(t)}
\right\|_F,
\]
where $\Delta W_i^{(t)}$ is the learned LoRA update for layer $\ell_i$ on task $t$, and $\|\cdot\|_F$ denotes the Frobenius norm. Larger values indicate that adaptation concentrates more strongly in that layer.

Although both quantities are often grouped under a single notion of ``importance,'' they capture fundamentally different properties. Necessity measures dependence of the pretrained model on an existing computation, whereas Plasticity measures receptivity to new information during adaptation. The relationship between these two quantities is therefore an empirical and architectural question.

To study this relationship, we introduce \emph{Necessity-Plasticity Alignment} ($\mathcal{NPA}$), defined as the Spearman correlation between layer rankings induced by Necessity and Plasticity:
\begin{equation*}
\resizebox{0.95\linewidth}{!}{$%
\mathcal{NPA}(M,t)
=
\rho_{\mathrm{Spearman}}
\bigl(
R_{\mathrm{Nec}}(M,t),
R_{\mathrm{Plast}}(M,t)
\bigr),
$}
\end{equation*}
where $R_{\mathrm{Nec}}$ and $R_{\mathrm{Plast}}$ denote the corresponding layer rankings for model $M$ on task $t$.

Positive $\mathcal{NPA}$ indicates that the same layers are both necessary and plastic, while negative $\mathcal{NPA}$ indicates that pretrained reliance and adaptation concentrate on opposite ends of the network. Values near zero indicate that no stable agreement exists between the two notions of importance.

This protocol allows us to compare layer-importance structure across architectures independently of any single estimator. Rather than asking whether one importance metric is universally correct, we ask whether different estimators consistently align with Necessity or Plasticity, and whether the relationship between these quantities changes across model families.
\section{Transformers and SSMs Disagree on Layer Importance}
We now apply the diagnostic protocol to the transfer question directly. If transformer-derived layer-importance concepts transferred uniformly to SSMs, Necessity and Plasticity rankings should relate similarly across both families. Instead, their relationship changes sign across architectures.

\subsection{Necessity--Plasticity Alignment changes sign across architectures}
\label{sec:consistency_results}

\begin{figure*}[t]
    \centering
    \includegraphics[width=\textwidth]{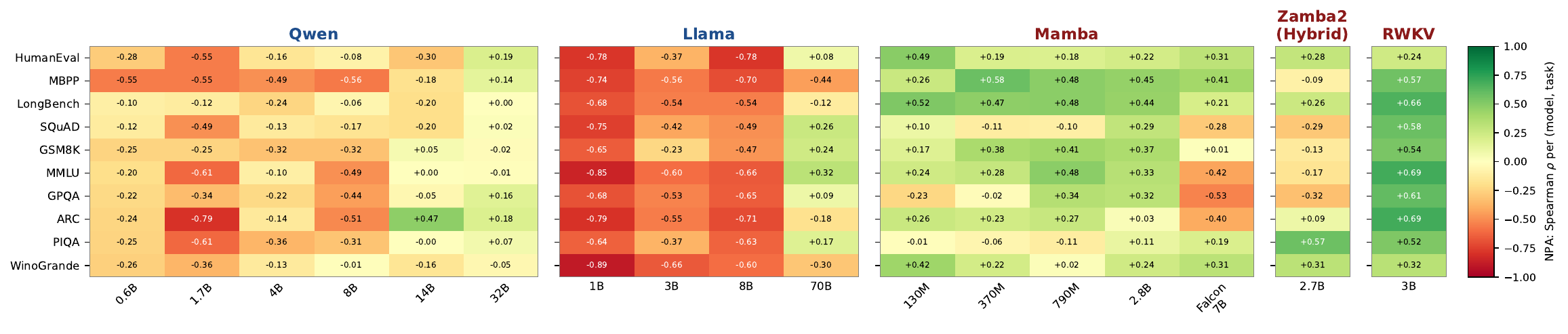}
    \caption{\textbf{Per-task $\mathcal{NPA}$ across architectures and scales.}
    Spearman~$\rho$ between Necessity and Plasticity rankings per (model, task) pair. Transformers are predominantly negative (red) up to $14$B, fading toward zero at larger scale. Mamba-family and RWKV6-3B are positive (green). Hybrid and boundary models sit near zero.}
    \label{fig:taskwise_npa}
\end{figure*}

Figure~\ref{fig:overview} reports $\mathcal{NPA}(M)$ between ablation-based Necessity rankings and LoRA-delta-based Plasticity rankings. The sign of $\mathcal{NPA}$ cleanly separates the two families. All evaluated Mamba-style SSMs show positive alignment: layers receiving larger LoRA updates are also those whose ablation causes larger loss increases. In contrast, every evaluated residual transformer up to $14$B shows negative alignment: the layers most necessary for preserving pretrained computation are not the layers where fine-tuning writes the largest task-specific updates. Figure~\ref{fig:taskwise_npa} resolves the same measurement per (model, task) pair and shows that the split is not carried by a handful of outlier tasks: within each family the sign is consistent across the task suite.

The largest transformers weaken this pattern without becoming SSM-like. Qwen3-32B and Llama-3.1-70B move toward near-zero alignment, suggesting a partial reorganization of Plasticity rather than recovery of a universal layer ordering. This disagreement is not estimator noise: LoRA-delta rankings remain self-consistent under matched scopes while continuing to disagree with ablation. Strong negative $\mathcal{NPA}$ requires a concentrated high-Necessity region together with an adaptation profile that avoids it, and both ingredients weaken with scale; since $\mathcal{NPA}$ is a rank correlation, flattening either profile drives it toward zero. We therefore state the negative-transformer result for the evaluated $\le 14$B regime and do not extrapolate beyond the measured checkpoints.

\paragraph{The split is not an artifact of low-rank adaptation.}
Because Plasticity is read off a LoRA update, the split could in principle reflect where LoRA places updates rather than a property of the architecture. An adapter-free control on $13$ checkpoints rules this out (Appendix~\ref{app:fullft}): full-FT and LoRA update profiles are positively rank-correlated on every one, and the sign of the alignment is preserved throughout, negative for every evaluated transformer with ablation baselines and positive or near zero for the evaluated Mamba, RWKV, and hybrid checkpoints. Normalizing updates by pretrained weight norm leaves this unchanged. The conclusion therefore does not depend on LoRA rank, scaling, initialization, or target-module selection.

\paragraph{Generalization beyond Mamba.}
The positive-alignment regime is not specific to Mamba. RWKV ~\citep{RWKV} exhibits strongly positive alignment ($\mathcal{NPA}=+0.54$, with all per-task correlations positive). Zamba2 ~\citep{zamba2}, a hybrid SSM+attention model, lies near zero but slightly positive ($\mathcal{NPA}=+0.05$), while Falcon-Mamba-7B~\citep{falconmamba} lies slightly below zero ($\mathcal{NPA}=-0.02$). Together, these results suggest the ordering
\begin{equation*}
\resizebox{0.95\linewidth}{!}{$%
\mathcal{NPA}_{\mathrm{SSM}}
>
\mathcal{NPA}_{\mathrm{Hybr.}}
\approx 0
>
\mathcal{NPA}_{\mathrm{Transf.} \leq 14\mathrm{B}}.
$}
\end{equation*}
The transformer-SSM contrast is strongest at small and medium scale, while very large transformers and boundary SSMs move toward a shared near-zero regime.

\paragraph{Necessity and Plasticity also dissociate \emph{within} a hybrid model.}
Zamba2-2.7B interleaves $9$ shared-attention blocks among $45$ Mamba blocks, which lets us test the dissociation without changing checkpoints. Attention blocks are only $16.7\%$ of the layers but absorb $36.3\%$ of LoRA and $31.6\%$ of full-FT update mass, while carrying just $10.4\%$ of ablation Necessity mass, which concentrates in the Mamba blocks (Appendix~\ref{app:zamba-components}). Zamba2's near-zero whole-model $\mathcal{NPA}$ is therefore the average of a transformer-like and an SSM-like component, not the absence of the effect.

\subsection{Depth profiles explain the architecture split}

\begin{figure}[t]
\centering
\includegraphics[width=\linewidth]{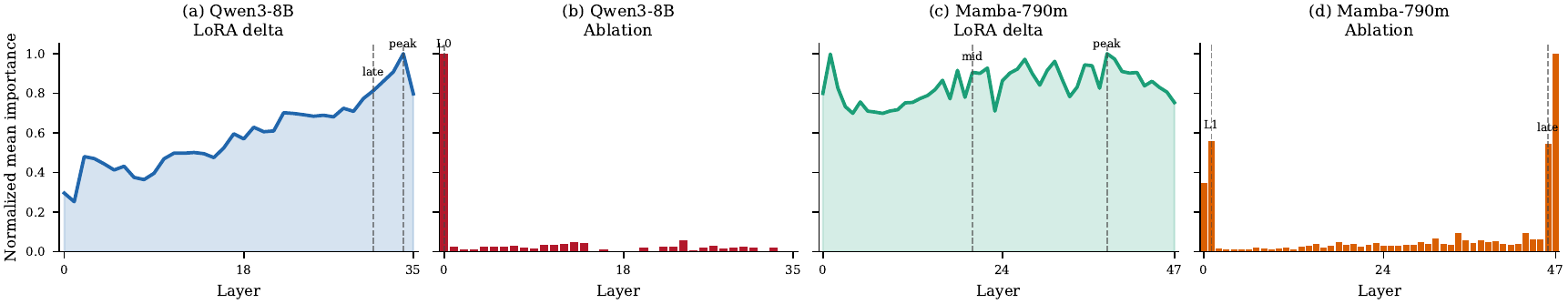}
\caption{\textbf{Representative Necessity and Plasticity depth profiles.}
Qwen3-8B: LoRA delta concentrates near the final layers (a); ablation is dominated by an early layer-0 spike (b). Mamba-790m: LoRA delta peaks mid-network (c); ablation emphasizes boundary layers with partial overlap (d).
Transformers select opposite ends of depth; Mamba profiles differ but are not spatially opposed.}
\label{fig:money}
\end{figure}

Figure~\ref{fig:money} shows the spatial origin of the sign change. In transformers, Plasticity concentrates near the output side of the network, whereas Necessity concentrates near the input side, often dominated by a strong layer-$0$ component. The two quantities therefore select opposite ends of depth, producing negative alignment. In Mamba-style SSMs, the profiles are different but not spatially opposed. Plasticity peaks in middle layers, while Necessity emphasizes boundary layers with substantial overlap. Because the two quantities co-occupy partially overlapping regions, the resulting alignment remains positive.

This depth organization also explains why large transformers move toward zero alignment. Their Necessity profiles remain strongly early-layer dominated, but Plasticity becomes less purely terminal and develops additional early-layer mass. The resulting partial overlap weakens anti-alignment without producing the positive agreement observed in SSMs.

\subsection{Layer-\texorpdfstring{$0$}{0} and metric controls}

\begin{figure}[t]
\centering
\includegraphics[width=\linewidth]{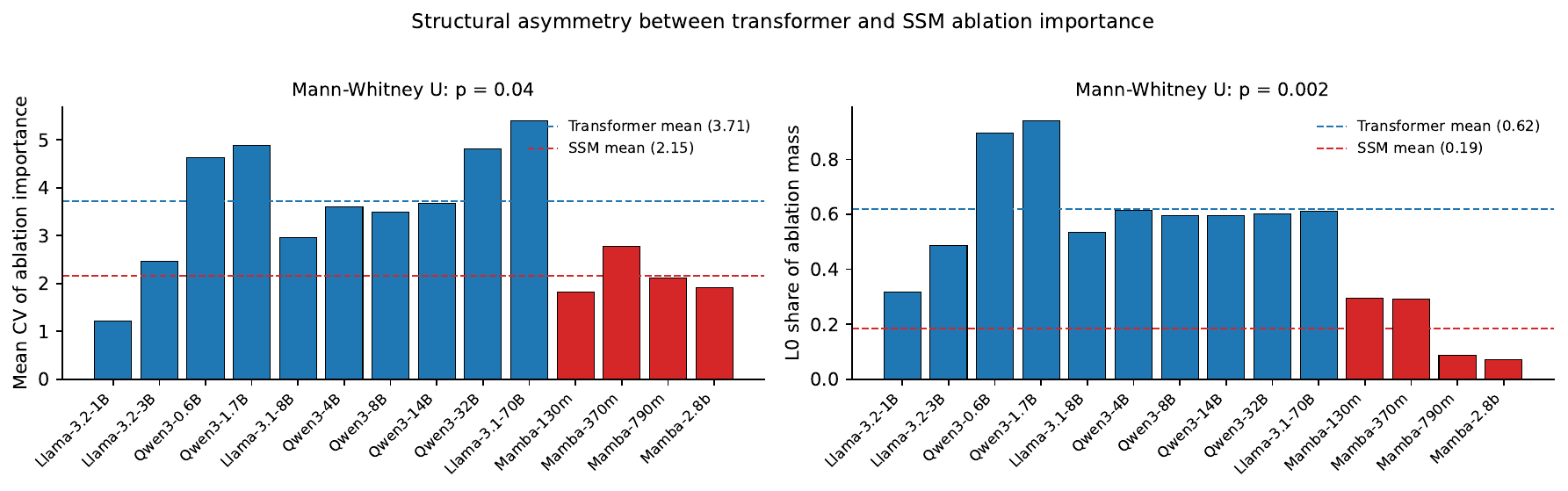}
\caption{%
  Leave-out-L0 and L0-share controls per checkpoint.
  Transformers concentrate more ablation mass on layer~$0$ than Mamba models,
  yet removing layer~$0$ shifts transformer alignment toward zero without reversing its sign,
  ruling out the embedding bottleneck as the sole driver of the architecture split.
}
\label{fig:robustness-l0}
\end{figure}

A natural alternative explanation is that the transformer--SSM contrast is driven entirely by layer~$0$. Figure~\ref{fig:robustness-l0} rules out this strong form. Removing layer~$0$ shifts transformer checkpoints toward zero, but every evaluated transformer up to $14$B remains negatively aligned, while Mamba checkpoints change minimally. Transformers do allocate substantially more Necessity mass to layer~$0$ than Mamba models, but the anti-alignment persists even after removing it, indicating a broader depth-wise separation between Necessity and Plasticity.

We also test whether ranking-based placement conclusions transfer uniformly across validation metrics. They do not. On multiple-choice evaluation, the relative advantage of top-$k$ versus bottom-$k$ placement depends on both the estimator and the model family. Even after rankings are computed under a shared proxy protocol, their downstream interpretation remains metric-relative.

\subsection{Estimator spectrum and implications}

The Necessity--Plasticity distinction also organizes estimators beyond the two anchors. In transformers, gradient- and resnorm-based estimators align more strongly with Necessity, while activation norm and TELL-TALE align more strongly with Plasticity. ShapLoRA occupies an intermediate position. In Mamba, where Necessity and Plasticity are already positively aligned, the estimators cluster within the same mixed-positive region.

These results show that layer importance is not a universal ordering of depth. Ablation-based, adaptation-based, and proxy estimators are not noisy measurements of a single latent ranking; they measure different properties of the model. A layer can be important because removing it disrupts pretrained computation, or because fine-tuning preferentially writes into it.

Importance-guided placement should therefore be treated as a measurement problem rather than a model-independent fact.

\begin{figure}[t]
\centering
\includegraphics[width=\linewidth]{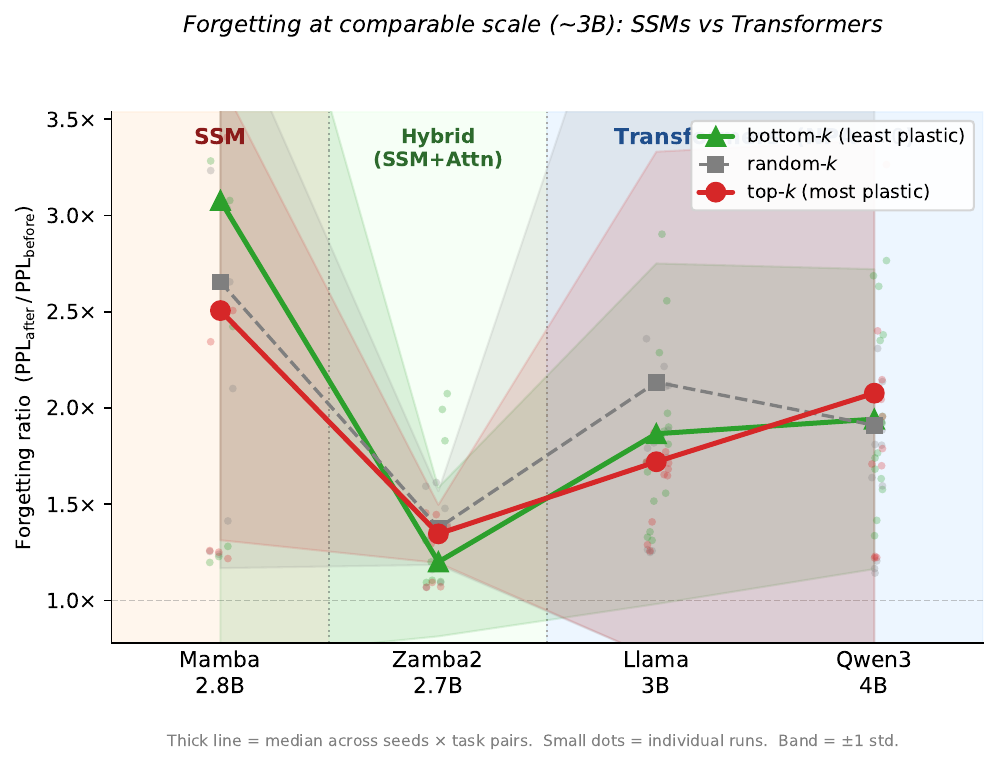}
\caption{\textbf{Tier-dependent forgetting is a transformer effect.}
Forgetting ratio ($\text{PPL}_\text{after}/\text{PPL}_\text{before}$) for
top-$k$, random-$k$, and bottom-$k$ LoRA placement ($k{=}6$).
In transformers (right, NPA~$<0$), top-$k$ placement causes significantly more forgetting than bottom-$k$ ($p{<}0.001$, $66$ pairs).
In SSMs (left, NPA~$>0$), no significant difference is observed ($p{=}0.45$, $54$ pairs).}
\label{fig:cl_forgetting}
\end{figure}

\subsection{A two-task probe of the forgetting mechanism}
\label{sec:cl_forgetting}

If $\mathcal{NPA}$ captures the spatial relationship between Necessity and Plasticity, it should predict when selective adaptation creates stability failures. We test this in a controlled two-task setup where only the adapted layers change. This experiment is a \emph{mechanism probe}, not a continual-learning benchmark: two tasks is the minimal setting in which forgetting is well defined, and restricting to it isolates the Necessity--Plasticity overlap from confounds longer sequences introduce (interference accumulation, capacity saturation, implicit replay through task similarity). We scope its conclusions to the evaluated two-task setting and the fixed LoRA configuration described here. For each model, we compute LoRA-delta scores on a source task~A, select either the top-$k$ most plastic or bottom-$k$ least plastic layers, fine-tune on task~A, then fine-tune on task~B using the same layer subset. Forgetting is measured on task~A as
\[
\mathrm{FR}
=
\frac{\mathrm{PPL}_{\mathrm{after}}}{\mathrm{PPL}_{\mathrm{before}}}.
\]

Figure~\ref{fig:cl_forgetting} shows that the plasticity-stability tradeoff is architecture-dependent. In transformers ($\mathcal{NPA}<0$), adapting through the most plastic layers causes substantially more forgetting than adapting through the least plastic layers. The top-$k$ condition consistently lies above bottom-$k$, indicating that highly plastic late layers form a shared adaptation bottleneck: successive tasks write into the same narrow region and partially overwrite one another. In Mamba-style SSMs ($\mathcal{NPA}>0$), this tier effect disappears. Top-$k$ and bottom-$k$ placement produce statistically indistinguishable forgetting ratios ($p=0.45$), consistent with Plasticity being distributed rather than concentrated in a single vulnerable tier. Falcon-Mamba-7B lies between these regimes: its near-zero alignment ($\mathcal{NPA}=-0.02$) is matched by a weak, marginally significant forgetting effect ($p=0.055$). Thus, $\mathcal{NPA}$ acts as a continuous predictor of bottleneck risk: negative alignment produces localized interference, near-zero alignment produces unstable effects, and positive alignment produces no reliable tier-dependent bottleneck. The same qualitative pattern is stable across scale; we report the full scale-wise plot in Figure~\ref{fig:cl_forgetting_scale} of Appendix~\ref{app:forgetting}.

\paragraph{EWC regularization baseline.}
We also compare placement against elastic weight consolidation (EWC)~\citep{kirkpatrick2017overcoming}, which regularizes parameter movement according to Fisher information rather than restricting which layers are adapted. Figure~\ref{fig:ewc_comparison} shows that the plasticity--stability tradeoff has different geometry across architectures. In both Mamba checkpoints, EWC achieves the lowest median forgetting. For Mamba-790M, EWC reaches a median forgetting ratio of $1.45{\times}$, compared with $2.21{\times}$ for top-$k$ and $1.68{\times}$ for bottom-$k$ ($p=0.004$); Mamba-2.8B shows the same qualitative ordering. This supports the view that when Plasticity is distributed across depth, stability is better recovered by globally constraining update magnitude than by selecting a small subset of layers.

In transformers, the tradeoff is more localized. For Qwen3-0.6B, EWC reduces forgetting relative to top-$k$ but remains worse than bottom-$k$. For Llama-3.2-3B, EWC is again worse than bottom-$k$ ($p=0.004$). Thus, transformers benefit more from avoiding the high-plasticity bottleneck, whereas SSMs benefit more from global regularization.

\paragraph{Recommendation for adapter placement.}
In the evaluated transformers ($\mathcal{NPA}<0$), the high-Necessity and high-Plasticity regions are disjoint, so placement is consequential: place adapters away from the high-Necessity early block, or protect that block when tasks arrive in sequence. In the evaluated Mamba-style SSMs ($\mathcal{NPA}>0$), Plasticity is distributed rather than concentrated in one vulnerable tier, so globally constraining update magnitude (EWC) helps more than placement. Both rules are read off the $\le 14$B regime and the two-task probe.

\begin{figure}[t]
\centering
\includegraphics[width=\linewidth]{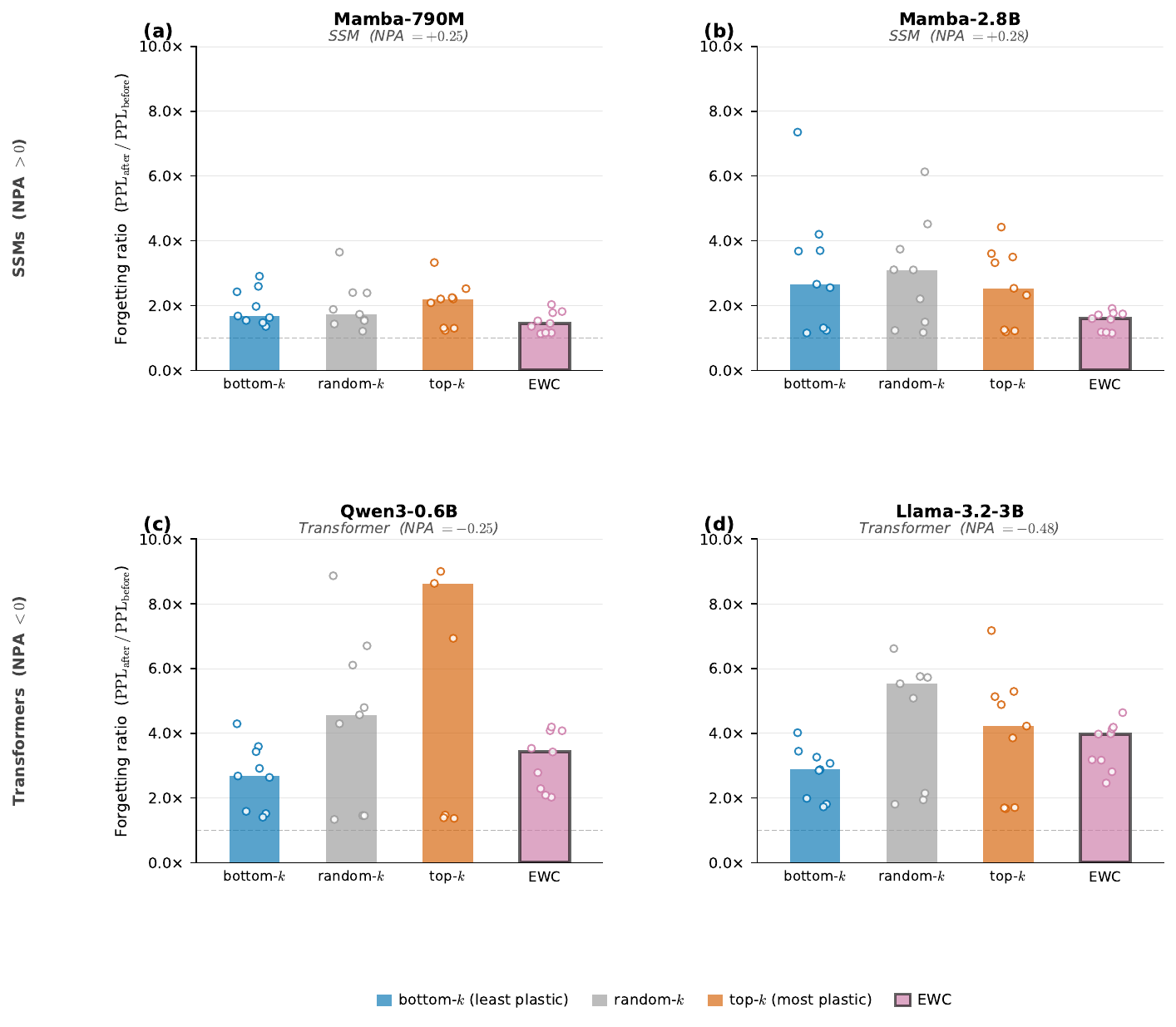}
\caption{\textbf{EWC versus placement strategies across architectures.}
Forgetting ratio for bottom-$k$, random-$k$, top-$k$, and EWC (Fisher-weighted, all layers). In Mamba-790M (NPA~$>0$, left), EWC achieves the lowest forgetting. In both transformers (NPA~$<0$, centre and right), bottom-$k$ placement matches or beats EWC, consistent with the shared-bottleneck mechanism.}
\label{fig:ewc_comparison}
\end{figure}

\section{Mechanistic Account}
\label{sec:mechanism}
The previous section showed that Necessity and Plasticity anti-align in residual transformers but align in Mamba-style SSMs. We now give a mechanistic account of this split. The central idea is that transformer anti-alignment arises from a depth-wise separation between pretrained dependence and low-cost adaptation: early layers are costly to remove, whereas later layers provide favorable sites for task-specific updates. Mamba-style SSMs do not exhibit the same systematic opposition across depth, explaining why transformer-calibrated layer-importance tools do not transfer uniformly across architectures.

Necessity and Plasticity coincide only under restrictive assumptions, such as locally isotropic curvature, infinitesimal interventions, and unconstrained optimization. Large pretrained models do not satisfy these assumptions in practice. The relevant question is therefore not whether ablation and LoRA-delta estimate the same latent quantity, but when architecture makes their rankings align or diverge.

We approach this question from two complementary perspectives. The first is a \emph{curvature} view: ablation importance measures the cost of removing a layer's pretrained contribution, whereas LoRA-delta importance measures where optimization can write a large, low-cost task-specific update.

The second is a \emph{residual-Jacobian} view. In residual networks, removing layer $l$ induces a perturbation that propagates through the downstream Jacobian product
\[
\prod_{j>l}(I+J_{f_j}),
\]
creating a directional asymmetry across depth. Both perspectives predict the same empirical pattern: early-layer concentration of Necessity and late-layer concentration of Plasticity in transformers.

\subsection{Residual geometry biases Necessity toward early layers in transformers}

Consider a pre-LN residual block
\[
h_l = h_{l-1} + f_l(h_{l-1}),
\]
with the loss attached to the final representation $h_L$. Bypassing the layer removes the residual computation $f_l(h_{l-1})$ while preserving the skip path. Linearizing the resulting perturbation gives
\begin{multline*}
I_{\mathrm{abl}}(l,t)
\approx
\mathbb{E}_{x}\!\Big[
\Big\langle
\tfrac{\partial L_t}{\partial h_L},\\
\Big(
\prod_{j=l+1}^{L}(I + J_{f_j})
\Big)
f_l(h_{l-1})
\Big\rangle
\Big]\\
+
O(\|f_l(h_{l-1})\|^2),
\end{multline*}
where $J_{f_j}$ denotes the Jacobian of block $j$.

This expression does not imply that early layers must always dominate ablation. The downstream Jacobian product can amplify, suppress, rotate, or cancel perturbations. However, earlier residual contributions propagate through more downstream blocks and participate in constructing the representations on which all later layers operate. This creates a systematic structural bias toward the input side of the network.

Figure~\ref{fig:money} shows that this bias dominates empirically. In transformers, Necessity mass concentrates strongly in early layers, often with a pronounced layer-$0$ spike, whereas Plasticity concentrates near the output side of the network. The result is a spatial separation between pretrained dependence and task adaptation.

Under the curvature interpretation, the same pattern emerges for a complementary reason. Late transformer layers can absorb task-specific updates with more direct influence on the final output and less disruption to earlier computations, making them favorable locations for adaptation. Plasticity therefore accumulates late even when Necessity remains concentrated early.

Together, these effects predict negative alignment between Necessity and Plasticity in residual transformers.

\textbf{Prediction 1:}
In residual transformers, Necessity and Plasticity should be negatively correlated across depth.

\subsection{Mamba-style SSMs weaken this depth-wise separation}

The transformer mechanism relies not only on residual depth, but on a particular depth organization: pretrained dependence concentrates early, while low-cost adaptation concentrates late. Mamba-style SSMs retain residual depth, but their selective recurrent dynamics do not induce the same empirical separation.

First, every Mamba layer directly reads the current token through a learned selective gate. Input information is therefore injected continuously across depth rather than once at the beginning of a residual stream. No single layer plays the privileged initialization role associated with transformer layer~$0$, so there is no structural reason for Necessity to collapse toward the input side. Second, the selective recurrence propagates information through an impulse-response operator $\Phi_{j,l}(x)$ whose eigenvalues remain bounded inside the unit disk~\citep{mamba}. The expected propagation norm therefore depends primarily on distance $|j-l|$ rather than direction from input to output. Gradient propagation through the recurrence does not preferentially amplify late-layer adaptation.

Together, these properties remove the architectural mechanism forcing separation between Necessity and Plasticity. Figure~\ref{fig:money} shows the resulting behavior empirically: in Mamba, Necessity and Plasticity occupy partially overlapping regions rather than opposite ends of depth.

Appendix~\ref{app:thm-mamba} formalizes this intuition. Under assumptions B1$'$--B2, the mass-displacement coefficient $\delta$ governing separation between Necessity and Plasticity vanishes asymptotically:
\[
|\delta|
\le
\frac{2K}{L}
\to 0
\qquad
\text{as}
\qquad
L/K \to \infty.
\]

Thus, under the stated assumptions, the mechanism producing systematic rank anti-correlation in transformers vanishes asymptotically.

\textbf{Prediction 2:}
Mamba-style SSMs should not exhibit systematic negative Necessity--Plasticity Alignment; their alignment should be non-negative or near zero.

\subsection{Layer-$0$ controls rule out a pure embedding-bottleneck explanation}
\label{sec:l0-asymmetry}

The residual account further predicts that transformers should allocate unusually large Necessity mass to layer~$0$. Figure~\ref{fig:robustness-l0} confirms this prediction: transformers assign substantially more ablation mass to layer~$0$ than Mamba models.

However, layer-$0$ dominance alone does not explain the full architecture split. If the effect were purely an embedding bottleneck artifact, removing layer~$0$ from the rankings should eliminate the transformer--SSM contrast. Instead, Figure~\ref{fig:robustness-l0} shows that the contrast survives. Transformer alignment becomes less negative but remains negative, whereas Mamba checkpoints remain positive.

The phenomenon therefore reflects a broader separation between early-side Necessity and late-side Plasticity rather than a single anomalous layer.

\textbf{Prediction 3:}
Removing layer~$0$ should weaken negative alignment in transformers.


\subsection{Curvature measurements support the decomposition}
\label{sec:scale-failure}
The curvature account predicts that layer-wise curvature should co-vary with Necessity and anti-correlate with Plasticity in transformers, but not in SSMs. We test this using diagonal Fisher information:
\[
F_l
=
\mathbb{E}_x
\left[
\|
\nabla_{\theta_l}\mathcal{L}
\|^2
\right].
\]

\begin{figure}[t]
\centering
\includegraphics[width=\linewidth]{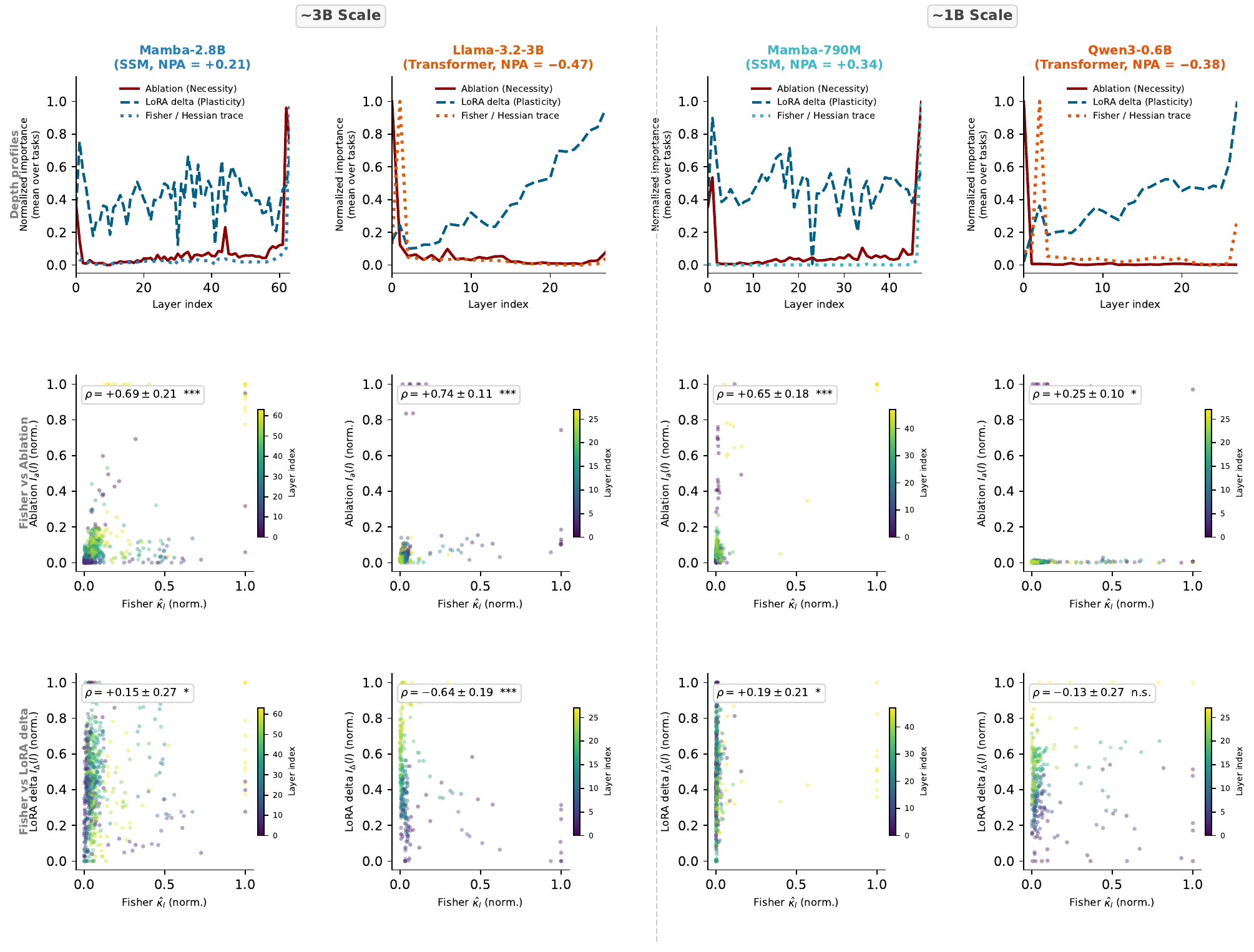}
\caption{%
  \textbf{Fisher curvature supports the Necessity--Plasticity split.}
  Two size-matched SSM--Transformer pairs ($\sim$3B: Mamba-2.8B vs.\ Llama-3.2-3B; $\sim$1B: Mamba-790M vs.\ Qwen3-0.6B).
  \textbf{Top}: depth profiles of Fisher curvature (dotted), Necessity (solid), and Plasticity (dashed).
  \textbf{Middle/Bottom}: scatter of Fisher vs.\ ablation and vs.\ LoRA delta.
  In transformers, Fisher curvature co-varies with Necessity and anti-correlates with Plasticity.
  In SSMs, both correlations are non-negative.
}
\label{fig:hessian-curvature}
\end{figure}

Figure~\ref{fig:hessian-curvature} supports this prediction. In transformers, Fisher curvature co-varies with Necessity and anti-correlates with Plasticity: high-Fisher layers are costly to remove but receive smaller LoRA updates. In SSMs, Fisher remains positively associated with Necessity and no longer anti-correlates with Plasticity. Thus, curvature separates pretrained dependence from adaptation in transformers but not in the same way in Mamba-style SSMs.

\subsection{Scale and estimator structure}

The same account explains why transformer anti-alignment weakens at larger scale. Negative alignment requires early Necessity together with late-concentrated Plasticity. Qwen3-32B and Llama-3.1-70B retain early Necessity, but their Plasticity profiles develop an additional early-layer mode (Figure~\ref{fig:bimodality}, and greater depth and redundancy flatten both profiles. This partial overlap collapses $\mathcal{NPA}$ toward zero without producing the positive alignment observed in SSMs. The mechanism predicts the direction of the trend but not where it terminates, so we do not extrapolate the sign beyond the measured checkpoints.

Finally, the decomposition also organizes proxy estimators. Figure~\ref{fig:overview} shows that, in transformers, gradient- and resnorm-based methods align more strongly with Necessity, while activation norm and TELL-TALE align more strongly with Plasticity; ShapLoRA lies between them. In Mamba-style SSMs, where the anchors are already positively aligned, estimators cluster in the same mixed-positive region. Layer importance is therefore not a universal ordering of depth, but a measurement whose meaning depends on both architecture and intervention.

\section{Conclusion}

We showed that layer importance is not a model-independent ordering of depth. By separating importance into \emph{Necessity}, the dependence of the pretrained model on a layer's existing computation, and \emph{Plasticity}, the tendency of adaptation to write task-specific updates into that layer, we find a systematic architecture-dependent split. Every evaluated residual transformer up to $14$B exhibits negative Necessity--Plasticity Alignment: early layers are most necessary, while later layers are most plastic. The evaluated Mamba-style SSMs instead show positive or near-zero alignment, with the two quantities occupying overlapping regions of depth. The split survives an adapter-free control and reappears within a hybrid checkpoint.

This split is not only diagnostic; it predicts the stability cost of selective adaptation. In a controlled two-task probe, adapting the most plastic layers of the evaluated transformers creates a shared late-layer bottleneck and increases forgetting, whereas in the evaluated Mamba-style SSMs this tier-dependent effect largely disappears. Layer-$0$ controls, and corroborating curvature measurements, support the same interpretation: transformer anti-alignment reflects a broader separation between pretrained dependence and adaptation, not a single anomalous layer.

Overall, layer-importance methods should be treated as measurements of specific properties rather than interchangeable estimates of one universal ranking. Tools calibrated on transformers may therefore fail silently on recurrent or hybrid architectures, where the geometry of stored computation and new adaptation can differ fundamentally.

\section*{Acknowledgements}
Sarath Chandar is supported by the Canada CIFAR AI Chairs program, the Canada Research Chair in Lifelong Machine Learning, and the NSERC Discovery Grant. Irina Rish is supported by the Canada CIFAR AI Chairs program and the Canada Excellence Research Chair in Autonomous AI. Experiments were conducted using computational resources provided by Mila Quebec AI Institute.
\section{Limitations}

Our analysis studies layer importance through two operational anchors: ablation-induced loss increase for Necessity and update magnitude for Plasticity. These definitions are well matched to pruning and parameter-efficient adaptation, but they do not cover all possible notions of importance. Plasticity in particular is defined through the outcome of an adaptation procedure rather than as an intrinsic layer property. We reduce, but do not eliminate, the resulting dependence: Appendix~\ref{app:fullft} shows that replacing LoRA with unconstrained full-parameter fine-tuning, and normalizing updates by pretrained weight norm, both preserve the sign of $\mathcal{NPA}$ on every retrained checkpoint. Other adaptation procedures (prefix tuning, other PEFT families, different optimizers or budgets) were not tested. On the Necessity side, ablation measures coarse total functional reliance at the layer level; activation patching, causal tracing, and optimizer-state analyses could decompose that reliance into finer mechanisms and may expose additional structure. Our model coverage spans transformers, Mamba-style SSMs, RWKV, and hybrid checkpoints, but remains limited relative to the full architecture space, and all architecture-level statements in this paper should be read as scoped to the evaluated checkpoints and scales. In particular, hybrid models and very large transformers move toward a near-zero alignment regime, so $\mathcal{NPA}$ should be interpreted as a continuous diagnostic rather than a strict family label. The negative-transformer result is established at $\le 14$B; at $32$B and $70$B the effect is near zero, and we have no evidence about scales beyond $70$B. The hybrid component analysis rests on a single checkpoint with one attention/SSM ratio. The mechanistic account is also partial. The residual-Jacobian and curvature views explain the observed depth-wise patterns and are supported by layer-$0$ controls and Fisher measurements, but the theoretical argument relies on simplifying assumptions. Diagonal Fisher is a coarse proxy for curvature: it discards off-diagonal terms and is sensitive to parameter scale, so we use it as corroborating evidence only. None of the paper's main claims depend on it. Finally, our continual-learning experiments are a two-task mechanism probe rather than a benchmark. They isolate layer placement under fixed LoRA configurations, task pairs, and optimization budgets. Longer or more varied task sequences, different ranks, longer training, replay, or full-model updates may change the magnitude and even the ordering of forgetting effects. Our conclusion is therefore not that one placement rule is universally optimal, but that the meaning and risk of layer selection depend on architecture and on what the estimator measures.

\bibliography{custom}

@misc{vaswani2017attention,
      title={Attention Is All You Need}, 
      author={Ashish Vaswani and Noam Shazeer and Niki Parmar and Jakob Uszkoreit and Llion Jones and Aidan N. Gomez and Lukasz Kaiser and Illia Polosukhin},
      year={2023},
      eprint={1706.03762},
      archivePrefix={arXiv},
      primaryClass={cs.CL},
      url={https://arxiv.org/abs/1706.03762}, 
}

@misc{qwen3,
      title={Qwen3 Technical Report}, 
      author={An Yang and Anfeng Li and Baosong Yang and Beichen Zhang and Binyuan Hui and Bo Zheng and Bowen Yu and Chang Gao and Chengen Huang and Chenxu Lv and Chujie Zheng and Dayiheng Liu and Fan Zhou and Fei Huang and Feng Hu and Hao Ge and Haoran Wei and Huan Lin and Jialong Tang and Jian Yang and Jianhong Tu and Jianwei Zhang and Jianxin Yang and Jiaxi Yang and Jing Zhou and Jingren Zhou and Junyang Lin and Kai Dang and Keqin Bao and Kexin Yang and Le Yu and Lianghao Deng and Mei Li and Mingfeng Xue and Mingze Li and Pei Zhang and Peng Wang and Qin Zhu and Rui Men and Ruize Gao and Shixuan Liu and Shuang Luo and Tianhao Li and Tianyi Tang and Wenbiao Yin and Xingzhang Ren and Xinyu Wang and Xinyu Zhang and Xuancheng Ren and Yang Fan and Yang Su and Yichang Zhang and Yinger Zhang and Yu Wan and Yuqiong Liu and Zekun Wang and Zeyu Cui and Zhenru Zhang and Zhipeng Zhou and Zihan Qiu},
      year={2025},
      eprint={2505.09388},
      archivePrefix={arXiv},
      primaryClass={cs.CL},
      url={https://arxiv.org/abs/2505.09388}, 
}

@misc{llama3,
      title={The Llama 3 Herd of Models}, 
      author={Aaron Grattafiori and Abhimanyu Dubey and Abhinav Jauhri and Abhinav Pandey and Abhishek Kadian and Ahmad Al-Dahle and Aiesha Letman and Akhil Mathur and Alan Schelten and Alex Vaughan and Amy Yang and Angela Fan and Anirudh Goyal and Anthony Hartshorn and Aobo Yang and Archi Mitra and Archie Sravankumar and Artem Korenev and Arthur Hinsvark and Arun Rao and Aston Zhang and Aurelien Rodriguez and Austen Gregerson and Ava Spataru and Baptiste Roziere and Bethany Biron and Binh Tang and Bobbie Chern and Charlotte Caucheteux and Chaya Nayak and Chloe Bi and Chris Marra and Chris McConnell and Christian Keller and Christophe Touret and Chunyang Wu and Corinne Wong and Cristian Canton Ferrer and Cyrus Nikolaidis and Damien Allonsius and Daniel Song and Danielle Pintz and Danny Livshits and Danny Wyatt and David Esiobu and Dhruv Choudhary and Dhruv Mahajan and Diego Garcia-Olano and Diego Perino and Dieuwke Hupkes and Egor Lakomkin and Ehab AlBadawy and Elina Lobanova and Emily Dinan and Eric Michael Smith and Filip Radenovic and Francisco Guzmán and Frank Zhang and Gabriel Synnaeve and Gabrielle Lee and Georgia Lewis Anderson and Govind Thattai and Graeme Nail and Gregoire Mialon and Guan Pang and Guillem Cucurell and Hailey Nguyen and Hannah Korevaar and Hu Xu and Hugo Touvron and Iliyan Zarov and Imanol Arrieta Ibarra and Isabel Kloumann and Ishan Misra and Ivan Evtimov and Jack Zhang and Jade Copet and Jaewon Lee and Jan Geffert and Jana Vranes and Jason Park and Jay Mahadeokar and Jeet Shah and Jelmer van der Linde and Jennifer Billock and Jenny Hong and Jenya Lee and Jeremy Fu and Jianfeng Chi and Jianyu Huang and Jiawen Liu and Jie Wang and Jiecao Yu and Joanna Bitton and Joe Spisak and Jongsoo Park and Joseph Rocca and Joshua Johnstun and Joshua Saxe and Junteng Jia and Kalyan Vasuden Alwala and Karthik Prasad and Kartikeya Upasani and Kate Plawiak and Ke Li and Kenneth Heafield and Kevin Stone and Khalid El-Arini and Krithika Iyer and Kshitiz Malik and Kuenley Chiu and Kunal Bhalla and Kushal Lakhotia and Lauren Rantala-Yeary and Laurens van der Maaten and Lawrence Chen and Liang Tan and Liz Jenkins and Louis Martin and Lovish Madaan and Lubo Malo and Lukas Blecher and Lukas Landzaat and Luke de Oliveira and Madeline Muzzi and Mahesh Pasupuleti and Mannat Singh and Manohar Paluri and Marcin Kardas and Maria Tsimpoukelli and Mathew Oldham and Mathieu Rita and Maya Pavlova and Melanie Kambadur and Mike Lewis and Min Si and Mitesh Kumar Singh and Mona Hassan and Naman Goyal and Narjes Torabi and Nikolay Bashlykov and Nikolay Bogoychev and Niladri Chatterji and Ning Zhang and Olivier Duchenne and Onur Çelebi and Patrick Alrassy and Pengchuan Zhang and Pengwei Li and Petar Vasic and Peter Weng and Prajjwal Bhargava and Pratik Dubal and Praveen Krishnan and Punit Singh Koura and Puxin Xu and Qing He and Qingxiao Dong and Ragavan Srinivasan and Raj Ganapathy and Ramon Calderer and Ricardo Silveira Cabral and Robert Stojnic and Roberta Raileanu and Rohan Maheswari and Rohit Girdhar and Rohit Patel and Romain Sauvestre and Ronnie Polidoro and Roshan Sumbaly and Ross Taylor and Ruan Silva and Rui Hou and Rui Wang and Saghar Hosseini and Sahana Chennabasappa and Sanjay Singh and Sean Bell and Seohyun Sonia Kim and Sergey Edunov and Shaoliang Nie and Sharan Narang and Sharath Raparthy and Sheng Shen and Shengye Wan and Shruti Bhosale and Shun Zhang and Simon Vandenhende and Soumya Batra and Spencer Whitman and Sten Sootla and Stephane Collot and Suchin Gururangan and Sydney Borodinsky and Tamar Herman and Tara Fowler and Tarek Sheasha and Thomas Georgiou and Thomas Scialom and Tobias Speckbacher and Todor Mihaylov and Tong Xiao and Ujjwal Karn and Vedanuj Goswami and Vibhor Gupta and Vignesh Ramanathan and Viktor Kerkez and Vincent Gonguet and Virginie Do and Vish Vogeti and Vítor Albiero and Vladan Petrovic and Weiwei Chu and Wenhan Xiong and Wenyin Fu and Whitney Meers and Xavier Martinet and Xiaodong Wang and Xiaofang Wang and Xiaoqing Ellen Tan and Xide Xia and Xinfeng Xie and Xuchao Jia and Xuewei Wang and Yaelle Goldschlag and Yashesh Gaur and Yasmine Babaei and Yi Wen and Yiwen Song and Yuchen Zhang and Yue Li and Yuning Mao and Zacharie Delpierre Coudert and Zheng Yan and Zhengxing Chen and Zoe Papakipos and Aaditya Singh and Aayushi Srivastava and Abha Jain and Adam Kelsey and Adam Shajnfeld and Adithya Gangidi and Adolfo Victoria and Ahuva Goldstand and Ajay Menon and Ajay Sharma and Alex Boesenberg and Alexei Baevski and Allie Feinstein and Amanda Kallet and Amit Sangani and Amos Teo and Anam Yunus and Andrei Lupu and Andres Alvarado and Andrew Caples and Andrew Gu and Andrew Ho and Andrew Poulton and Andrew Ryan and Ankit Ramchandani and Annie Dong and Annie Franco and Anuj Goyal and Aparajita Saraf and Arkabandhu Chowdhury and Ashley Gabriel and Ashwin Bharambe and Assaf Eisenman and Azadeh Yazdan and Beau James and Ben Maurer and Benjamin Leonhardi and Bernie Huang and Beth Loyd and Beto De Paola and Bhargavi Paranjape and Bing Liu and Bo Wu and Boyu Ni and Braden Hancock and Bram Wasti and Brandon Spence and Brani Stojkovic and Brian Gamido and Britt Montalvo and Carl Parker and Carly Burton and Catalina Mejia and Ce Liu and Changhan Wang and Changkyu Kim and Chao Zhou and Chester Hu and Ching-Hsiang Chu and Chris Cai and Chris Tindal and Christoph Feichtenhofer and Cynthia Gao and Damon Civin and Dana Beaty and Daniel Kreymer and Daniel Li and David Adkins and David Xu and Davide Testuggine and Delia David and Devi Parikh and Diana Liskovich and Didem Foss and Dingkang Wang and Duc Le and Dustin Holland and Edward Dowling and Eissa Jamil and Elaine Montgomery and Eleonora Presani and Emily Hahn and Emily Wood and Eric-Tuan Le and Erik Brinkman and Esteban Arcaute and Evan Dunbar and Evan Smothers and Fei Sun and Felix Kreuk and Feng Tian and Filippos Kokkinos and Firat Ozgenel and Francesco Caggioni and Frank Kanayet and Frank Seide and Gabriela Medina Florez and Gabriella Schwarz and Gada Badeer and Georgia Swee and Gil Halpern and Grant Herman and Grigory Sizov and Guangyi and Zhang and Guna Lakshminarayanan and Hakan Inan and Hamid Shojanazeri and Han Zou and Hannah Wang and Hanwen Zha and Haroun Habeeb and Harrison Rudolph and Helen Suk and Henry Aspegren and Hunter Goldman and Hongyuan Zhan and Ibrahim Damlaj and Igor Molybog and Igor Tufanov and Ilias Leontiadis and Irina-Elena Veliche and Itai Gat and Jake Weissman and James Geboski and James Kohli and Janice Lam and Japhet Asher and Jean-Baptiste Gaya and Jeff Marcus and Jeff Tang and Jennifer Chan and Jenny Zhen and Jeremy Reizenstein and Jeremy Teboul and Jessica Zhong and Jian Jin and Jingyi Yang and Joe Cummings and Jon Carvill and Jon Shepard and Jonathan McPhie and Jonathan Torres and Josh Ginsburg and Junjie Wang and Kai Wu and Kam Hou U and Karan Saxena and Kartikay Khandelwal and Katayoun Zand and Kathy Matosich and Kaushik Veeraraghavan and Kelly Michelena and Keqian Li and Kiran Jagadeesh and Kun Huang and Kunal Chawla and Kyle Huang and Lailin Chen and Lakshya Garg and Lavender A and Leandro Silva and Lee Bell and Lei Zhang and Liangpeng Guo and Licheng Yu and Liron Moshkovich and Luca Wehrstedt and Madian Khabsa and Manav Avalani and Manish Bhatt and Martynas Mankus and Matan Hasson and Matthew Lennie and Matthias Reso and Maxim Groshev and Maxim Naumov and Maya Lathi and Meghan Keneally and Miao Liu and Michael L. Seltzer and Michal Valko and Michelle Restrepo and Mihir Patel and Mik Vyatskov and Mikayel Samvelyan and Mike Clark and Mike Macey and Mike Wang and Miquel Jubert Hermoso and Mo Metanat and Mohammad Rastegari and Munish Bansal and Nandhini Santhanam and Natascha Parks and Natasha White and Navyata Bawa and Nayan Singhal and Nick Egebo and Nicolas Usunier and Nikhil Mehta and Nikolay Pavlovich Laptev and Ning Dong and Norman Cheng and Oleg Chernoguz and Olivia Hart and Omkar Salpekar and Ozlem Kalinli and Parkin Kent and Parth Parekh and Paul Saab and Pavan Balaji and Pedro Rittner and Philip Bontrager and Pierre Roux and Piotr Dollar and Polina Zvyagina and Prashant Ratanchandani and Pritish Yuvraj and Qian Liang and Rachad Alao and Rachel Rodriguez and Rafi Ayub and Raghotham Murthy and Raghu Nayani and Rahul Mitra and Rangaprabhu Parthasarathy and Raymond Li and Rebekkah Hogan and Robin Battey and Rocky Wang and Russ Howes and Ruty Rinott and Sachin Mehta and Sachin Siby and Sai Jayesh Bondu and Samyak Datta and Sara Chugh and Sara Hunt and Sargun Dhillon and Sasha Sidorov and Satadru Pan and Saurabh Mahajan and Saurabh Verma and Seiji Yamamoto and Sharadh Ramaswamy and Shaun Lindsay and Shaun Lindsay and Sheng Feng and Shenghao Lin and Shengxin Cindy Zha and Shishir Patil and Shiva Shankar and Shuqiang Zhang and Shuqiang Zhang and Sinong Wang and Sneha Agarwal and Soji Sajuyigbe and Soumith Chintala and Stephanie Max and Stephen Chen and Steve Kehoe and Steve Satterfield and Sudarshan Govindaprasad and Sumit Gupta and Summer Deng and Sungmin Cho and Sunny Virk and Suraj Subramanian and Sy Choudhury and Sydney Goldman and Tal Remez and Tamar Glaser and Tamara Best and Thilo Koehler and Thomas Robinson and Tianhe Li and Tianjun Zhang and Tim Matthews and Timothy Chou and Tzook Shaked and Varun Vontimitta and Victoria Ajayi and Victoria Montanez and Vijai Mohan and Vinay Satish Kumar and Vishal Mangla and Vlad Ionescu and Vlad Poenaru and Vlad Tiberiu Mihailescu and Vladimir Ivanov and Wei Li and Wenchen Wang and Wenwen Jiang and Wes Bouaziz and Will Constable and Xiaocheng Tang and Xiaojian Wu and Xiaolan Wang and Xilun Wu and Xinbo Gao and Yaniv Kleinman and Yanjun Chen and Ye Hu and Ye Jia and Ye Qi and Yenda Li and Yilin Zhang and Ying Zhang and Yossi Adi and Youngjin Nam and Yu and Wang and Yu Zhao and Yuchen Hao and Yundi Qian and Yunlu Li and Yuzi He and Zach Rait and Zachary DeVito and Zef Rosnbrick and Zhaoduo Wen and Zhenyu Yang and Zhiwei Zhao and Zhiyu Ma},
      year={2024},
      eprint={2407.21783},
      archivePrefix={arXiv},
      primaryClass={cs.AI},
      url={https://arxiv.org/abs/2407.21783}, 
}

@misc{RWKV,
      title={RWKV: Reinventing RNNs for the Transformer Era}, 
      author={Bo Peng and Eric Alcaide and Quentin Anthony and Alon Albalak and Samuel Arcadinho and Stella Biderman and Huanqi Cao and Xin Cheng and Michael Chung and Matteo Grella and Kranthi Kiran GV and Xuzheng He and Haowen Hou and Jiaju Lin and Przemyslaw Kazienko and Jan Kocon and Jiaming Kong and Bartlomiej Koptyra and Hayden Lau and Krishna Sri Ipsit Mantri and Ferdinand Mom and Atsushi Saito and Guangyu Song and Xiangru Tang and Bolun Wang and Johan S. Wind and Stanislaw Wozniak and Ruichong Zhang and Zhenyuan Zhang and Qihang Zhao and Peng Zhou and Qinghua Zhou and Jian Zhu and Rui-Jie Zhu},
      year={2023},
      eprint={2305.13048},
      archivePrefix={arXiv},
      primaryClass={cs.CL},
      url={https://arxiv.org/abs/2305.13048}, 
}

@misc{pan2024lisalayerwiseimportancesampling,
      title={LISA: Layerwise Importance Sampling for Memory-Efficient Large Language Model Fine-Tuning}, 
      author={Rui Pan and Xiang Liu and Shizhe Diao and Renjie Pi and Jipeng Zhang and Chi Han and Tong Zhang},
      year={2024},
      eprint={2403.17919},
      archivePrefix={arXiv},
      primaryClass={cs.LG},
      url={https://arxiv.org/abs/2403.17919}, 
}

@misc{gromov2025unreasonableineffectivenessdeeperlayers,
      title={The Unreasonable Ineffectiveness of the Deeper Layers}, 
      author={Andrey Gromov and Kushal Tirumala and Hassan Shapourian and Paolo Glorioso and Daniel A. Roberts},
      year={2025},
      eprint={2403.17887},
      archivePrefix={arXiv},
      primaryClass={cs.CL},
      url={https://arxiv.org/abs/2403.17887}, 
}

@misc{mamba,
      title={Mamba: Linear-Time Sequence Modeling with Selective State Spaces}, 
      author={Albert Gu and Tri Dao},
      year={2024},
      eprint={2312.00752},
      archivePrefix={arXiv},
      primaryClass={cs.LG},
      url={https://arxiv.org/abs/2312.00752}, 
}

@misc{mamba2,
      title={Transformers are SSMs: Generalized Models and Efficient Algorithms Through Structured State Space Duality}, 
      author={Tri Dao and Albert Gu},
      year={2024},
      eprint={2405.21060},
      archivePrefix={arXiv},
      primaryClass={cs.LG},
      url={https://arxiv.org/abs/2405.21060}, 
}

@misc{lora,
      title={LoRA: Low-Rank Adaptation of Large Language Models}, 
      author={Edward J. Hu and Yelong Shen and Phillip Wallis and Zeyuan Allen-Zhu and Yuanzhi Li and Shean Wang and Lu Wang and Weizhu Chen},
      year={2021},
      eprint={2106.09685},
      archivePrefix={arXiv},
      primaryClass={cs.CL},
      url={https://arxiv.org/abs/2106.09685}, 
}

@misc{adalora,
      title={AdaLoRA: Adaptive Budget Allocation for Parameter-Efficient Fine-Tuning}, 
      author={Qingru Zhang and Minshuo Chen and Alexander Bukharin and Nikos Karampatziakis and Pengcheng He and Yu Cheng and Weizhu Chen and Tuo Zhao},
      year={2023},
      eprint={2303.10512},
      archivePrefix={arXiv},
      primaryClass={cs.CL},
      url={https://arxiv.org/abs/2303.10512}, 
}

@inproceedings{alphalora,
    title = "{A}lpha{L}o{RA}: Assigning {L}o{RA} Experts Based on Layer Training Quality",
    author = "Qing, Peijun  and
      Gao, Chongyang  and
      Zhou, Yefan  and
      Diao, Xingjian  and
      Yang, Yaoqing  and
      Vosoughi, Soroush",
    editor = "Al-Onaizan, Yaser  and
      Bansal, Mohit  and
      Chen, Yun-Nung",
    booktitle = "Proceedings of the 2024 Conference on Empirical Methods in Natural Language Processing",
    month = nov,
    year = "2024",
    address = "Miami, Florida, USA",
    publisher = "Association for Computational Linguistics",
    url = "https://aclanthology.org/2024.emnlp-main.1141/",
    doi = "10.18653/v1/2024.emnlp-main.1141",
    pages = "20511--20523"
}

@misc{shaplora,
      title={ShapLoRA: Allocation of Low-rank Adaption on Large Language Models via Shapley Value Inspired Importance Estimation}, 
      author={Yi Zhao and Qinghua Yao and Xinyuan song and Wei Zhu},
      year={2026},
      eprint={2601.17921},
      archivePrefix={arXiv},
      primaryClass={cs.CL},
      url={https://arxiv.org/abs/2601.17921}, 
}

@misc{mambapeft,
      title={Parameter-Efficient Fine-Tuning of State Space Models}, 
      author={Kevin Galim and Wonjun Kang and Yuchen Zeng and Hyung Il Koo and Kangwook Lee},
      year={2025},
      eprint={2410.09016},
      archivePrefix={arXiv},
      primaryClass={cs.LG},
      url={https://arxiv.org/abs/2410.09016}, 
}

@inproceedings{
plop,
title={{PL}oP: Precise Lo{RA} Placement for Efficient Finetuning of Large Models},
author={Soufiane Hayou and Nikhil Ghosh and Bin Yu},
booktitle={The Fourteenth International Conference on Learning Representations},
year={2026},
url={https://openreview.net/forum?id=3lGkVgNZ5a}
}

@misc{layercard,
      title={Understanding and Guiding Layer Placement in Parameter-Efficient Fine-Tuning of Large Language Models}, 
      author={Yichen Xu and Yuyang Liang and Shan Dai and Tianyang Hu and Tsz Nam Chan and Chenhao Ma},
      year={2026},
      eprint={2602.04019},
      archivePrefix={arXiv},
      primaryClass={cs.LG},
      url={https://arxiv.org/abs/2602.04019}, 
}

@misc{telltale,
      title={TELL-TALE: Task Efficient LLMs with Task Aware Layer Elimination}, 
      author={Omar Naim and Krish Sharma and Niyar R Barman and Nicholas Asher},
      year={2026},
      eprint={2510.22767},
      archivePrefix={arXiv},
      primaryClass={cs.LG},
      url={https://arxiv.org/abs/2510.22767}, 
}

@inproceedings{shortgpt,
    title = "{S}hort{GPT}: Layers in Large Language Models are More Redundant Than You Expect",
    author = "Men, Xin  and
      Xu, Mingyu  and
      Zhang, Qingyu  and
      Yuan, Qianhao  and
      Wang, Bingning  and
      Lin, Hongyu  and
      Lu, Yaojie  and
      Han, Xianpei  and
      Chen, Weipeng",
    editor = "Che, Wanxiang  and
      Nabende, Joyce  and
      Shutova, Ekaterina  and
      Pilehvar, Mohammad Taher",
    booktitle = "Findings of the Association for Computational Linguistics: ACL 2025",
    month = jul,
    year = "2025",
    address = "Vienna, Austria",
    publisher = "Association for Computational Linguistics",
    url = "https://aclanthology.org/2025.findings-acl.1035/",
    doi = "10.18653/v1/2025.findings-acl.1035",
    pages = "20192--20204",
    ISBN = "979-8-89176-256-5"
}

@misc{yao2024layerwise,
      title={Layer-wise Importance Matters: Less Memory for Better Performance in Parameter-efficient Fine-tuning of Large Language Models}, 
      author={Kai Yao and Penglei Gao and Lichun Li and Yuan Zhao and Xiaofeng Wang and Wei Wang and Jianke Zhu},
      year={2024},
      eprint={2410.11772},
      archivePrefix={arXiv},
      primaryClass={cs.CL},
      url={https://arxiv.org/abs/2410.11772}, 
}

@inproceedings{zhang2024cornerstone,
    title = "Investigating Layer Importance in Large Language Models",
    author = "Zhang, Yang  and
      Dong, Yanfei  and
      Kawaguchi, Kenji",
    editor = "Belinkov, Yonatan  and
      Kim, Najoung  and
      Jumelet, Jaap  and
      Mohebbi, Hosein  and
      Mueller, Aaron  and
      Chen, Hanjie",
    booktitle = "Proceedings of the 7th BlackboxNLP Workshop: Analyzing and Interpreting Neural Networks for NLP",
    month = nov,
    year = "2024",
    address = "Miami, Florida, US",
    publisher = "Association for Computational Linguistics",
    url = "https://aclanthology.org/2024.blackboxnlp-1.29/",
    doi = "10.18653/v1/2024.blackboxnlp-1.29",
    pages = "469--479"
}

@misc{nepal2025forged,
      title={Layer Importance for Mathematical Reasoning is Forged in Pre-Training and Invariant after Post-Training}, 
      author={Aadim Nepal and Safal Shrestha and Anubhav Shrestha and Minwu Kim and Jalal Naghiyev and Ravid Shwartz-Ziv and Keith Ross},
      year={2025},
      eprint={2506.22638},
      archivePrefix={arXiv},
      primaryClass={cs.LG},
      url={https://arxiv.org/abs/2506.22638}, 
}

@misc{shen2024ila,
      title={Understanding Layer Significance in LLM Alignment}, 
      author={Guangyuan Shi and Zexin Lu and Xiaoyu Dong and Wenlong Zhang and Xuanyu Zhang and Yujie Feng and Xiao-Ming Wu},
      year={2025},
      eprint={2410.17875},
      archivePrefix={arXiv},
      primaryClass={cs.CL},
      url={https://arxiv.org/abs/2410.17875}, 
}

@misc{demystifyingroles,
      title={Demystifying the Roles of LLM Layers in Retrieval, Knowledge, and Reasoning}, 
      author={Xinyuan Song and Keyu Wang and PengXiang Li and Lu Yin and Shiwei Liu},
      year={2026},
      eprint={2510.02091},
      archivePrefix={arXiv},
      primaryClass={cs.AI},
      url={https://arxiv.org/abs/2510.02091}, 
}

@inproceedings{mambashedder,
    title = "Mamba-Shedder: Post-Transformer Compression for Efficient Selective Structured State Space Models",
    author = "Munoz, Juan Pablo  and
      Yuan, Jinjie  and
      Jain, Nilesh",
    editor = "Chiruzzo, Luis  and
      Ritter, Alan  and
      Wang, Lu",
    booktitle = "Proceedings of the 2025 Conference of the Nations of the Americas Chapter of the Association for Computational Linguistics: Human Language Technologies (Volume 1: Long Papers)",
    month = apr,
    year = "2025",
    address = "Albuquerque, New Mexico",
    publisher = "Association for Computational Linguistics",
    url = "https://aclanthology.org/2025.naacl-long.195/",
    doi = "10.18653/v1/2025.naacl-long.195",
    pages = "3851--3863",
    ISBN = "979-8-89176-189-6"
}

@misc{integratedgradients,
      title={Axiomatic Attribution for Deep Networks}, 
      author={Mukund Sundararajan and Ankur Taly and Qiqi Yan},
      year={2017},
      eprint={1703.01365},
      archivePrefix={arXiv},
      primaryClass={cs.LG},
      url={https://arxiv.org/abs/1703.01365}, 
}

@misc{ancona2018towards,
      title={Towards better understanding of gradient-based attribution methods for Deep Neural Networks}, 
      author={Marco Ancona and Enea Ceolini and Cengiz Öztireli and Markus Gross},
      year={2018},
      eprint={1711.06104},
      archivePrefix={arXiv},
      primaryClass={cs.LG},
      url={https://arxiv.org/abs/1711.06104}, 
}

@misc{adebayo2018sanity,
      title={Sanity Checks for Saliency Maps}, 
      author={Julius Adebayo and Justin Gilmer and Michael Muelly and Ian Goodfellow and Moritz Hardt and Been Kim},
      year={2020},
      eprint={1810.03292},
      archivePrefix={arXiv},
      primaryClass={cs.CV},
      url={https://arxiv.org/abs/1810.03292}, 
}

@article{
krishna2022disagreement,
title={The Disagreement Problem in Explainable Machine Learning: A Practitioner{\textquoteright}s Perspective},
author={Satyapriya Krishna and Tessa Han and Alex Gu and Steven Wu and Shahin Jabbari and Himabindu Lakkaraju},
journal={Transactions on Machine Learning Research},
issn={2835-8856},
year={2024},
url={https://openreview.net/forum?id=jESY2WTZCe},
note={}
}

@misc{rome,
      title={Locating and Editing Factual Associations in GPT}, 
      author={Kevin Meng and David Bau and Alex Andonian and Yonatan Belinkov},
      year={2023},
      eprint={2202.05262},
      archivePrefix={arXiv},
      primaryClass={cs.CL},
      url={https://arxiv.org/abs/2202.05262}, 
}

@misc{mambalrp,
      title={MambaLRP: Explaining Selective State Space Sequence Models}, 
      author={Farnoush Rezaei Jafari and Grégoire Montavon and Klaus-Robert Müller and Oliver Eberle},
      year={2025},
      eprint={2406.07592},
      archivePrefix={arXiv},
      primaryClass={cs.LG},
      url={https://arxiv.org/abs/2406.07592}, 
}

@misc{jacot2018ntk,
      title={Neural Tangent Kernel: Convergence and Generalization in Neural Networks}, 
      author={Arthur Jacot and Franck Gabriel and Clément Hongler},
      year={2020},
      eprint={1806.07572},
      archivePrefix={arXiv},
      primaryClass={cs.LG},
      url={https://arxiv.org/abs/1806.07572}, 
}

@inproceedings{aghajanyan2021intrinsic,
    title = "Intrinsic Dimensionality Explains the Effectiveness of Language Model Fine-Tuning",
    author = "Aghajanyan, Armen  and
      Gupta, Sonal  and
      Zettlemoyer, Luke",
    editor = "Zong, Chengqing  and
      Xia, Fei  and
      Li, Wenjie  and
      Navigli, Roberto",
    booktitle = "Proceedings of the 59th Annual Meeting of the Association for Computational Linguistics and the 11th International Joint Conference on Natural Language Processing (Volume 1: Long Papers)",
    month = aug,
    year = "2021",
    address = "Online",
    publisher = "Association for Computational Linguistics",
    url = "https://aclanthology.org/2021.acl-long.568/",
    doi = "10.18653/v1/2021.acl-long.568",
    pages = "7319--7328"
}

@inproceedings{
mmlu,
title={Measuring Massive Multitask Language Understanding},
author={Dan Hendrycks and Collin Burns and Steven Basart and Andy Zou and Mantas Mazeika and Dawn Song and Jacob Steinhardt},
booktitle={International Conference on Learning Representations},
year={2021},
url={https://openreview.net/forum?id=d7KBjmI3GmQ}
}

@misc{gsm8k,
      title={Training Verifiers to Solve Math Word Problems}, 
      author={Karl Cobbe and Vineet Kosaraju and Mohammad Bavarian and Mark Chen and Heewoo Jun and Lukasz Kaiser and Matthias Plappert and Jerry Tworek and Jacob Hilton and Reiichiro Nakano and Christopher Hesse and John Schulman},
      year={2021},
      eprint={2110.14168},
      archivePrefix={arXiv},
      primaryClass={cs.LG},
      url={https://arxiv.org/abs/2110.14168}, 
}

@misc{mbpp,
      title={Program Synthesis with Large Language Models}, 
      author={Jacob Austin and Augustus Odena and Maxwell Nye and Maarten Bosma and Henryk Michalewski and David Dohan and Ellen Jiang and Carrie Cai and Michael Terry and Quoc Le and Charles Sutton},
      year={2021},
      eprint={2108.07732},
      archivePrefix={arXiv},
      primaryClass={cs.PL},
      url={https://arxiv.org/abs/2108.07732}, 
}

@misc{arc,
      title={Think you have Solved Question Answering? Try ARC, the AI2 Reasoning Challenge}, 
      author={Peter Clark and Isaac Cowhey and Oren Etzioni and Tushar Khot and Ashish Sabharwal and Carissa Schoenick and Oyvind Tafjord},
      year={2018},
      eprint={1803.05457},
      archivePrefix={arXiv},
      primaryClass={cs.AI},
      url={https://arxiv.org/abs/1803.05457}, 
}

@misc{piqa,
      title={PIQA: Reasoning about Physical Commonsense in Natural Language}, 
      author={Yonatan Bisk and Rowan Zellers and Ronan Le Bras and Jianfeng Gao and Yejin Choi},
      year={2019},
      eprint={1911.11641},
      archivePrefix={arXiv},
      primaryClass={cs.CL},
      url={https://arxiv.org/abs/1911.11641}, 
}

@misc{gpqa,
      title={GPQA: A Graduate-Level Google-Proof Q&A Benchmark}, 
      author={David Rein and Betty Li Hou and Asa Cooper Stickland and Jackson Petty and Richard Yuanzhe Pang and Julien Dirani and Julian Michael and Samuel R. Bowman},
      year={2023},
      eprint={2311.12022},
      archivePrefix={arXiv},
      primaryClass={cs.AI},
      url={https://arxiv.org/abs/2311.12022}, 
}

@misc{squad,
      title={SQuAD: 100,000+ Questions for Machine Comprehension of Text}, 
      author={Pranav Rajpurkar and Jian Zhang and Konstantin Lopyrev and Percy Liang},
      year={2016},
      eprint={1606.05250},
      archivePrefix={arXiv},
      primaryClass={cs.CL},
      url={https://arxiv.org/abs/1606.05250}, 
}

@article{
kirkpatrick2017overcoming,
author = {James Kirkpatrick  and Razvan Pascanu  and Neil Rabinowitz  and Joel Veness  and Guillaume Desjardins  and Andrei A. Rusu  and Kieran Milan  and John Quan  and Tiago Ramalho  and Agnieszka Grabska-Barwinska  and Demis Hassabis  and Claudia Clopath  and Dharshan Kumaran  and Raia Hadsell },
title = {Overcoming catastrophic forgetting in neural networks},
journal = {Proceedings of the National Academy of Sciences},
volume = {114},
number = {13},
pages = {3521-3526},
year = {2017},
doi = {10.1073/pnas.1611835114},
URL = {https://www.pnas.org/doi/abs/10.1073/pnas.1611835114},
eprint = {https://www.pnas.org/doi/pdf/10.1073/pnas.1611835114}}

@misc{falconmamba,
      title={Falcon Mamba: The First Competitive Attention-free 7B Language Model}, 
      author={Jingwei Zuo and Maksim Velikanov and Dhia Eddine Rhaiem and Ilyas Chahed and Younes Belkada and Guillaume Kunsch and Hakim Hacid},
      year={2024},
      eprint={2410.05355},
      archivePrefix={arXiv},
      primaryClass={cs.CL},
      url={https://arxiv.org/abs/2410.05355}, 
}

@misc{zamba2,
      title={The Zamba2 Suite: Technical Report}, 
      author={Paolo Glorioso and Quentin Anthony and Yury Tokpanov and Anna Golubeva and Vasudev Shyam and James Whittington and Jonathan Pilault and Beren Millidge},
      year={2024},
      eprint={2411.15242},
      archivePrefix={arXiv},
      primaryClass={cs.LG},
      url={https://arxiv.org/abs/2411.15242}, 
}

\appendix

\section{Formal Theory: Necessity and Plasticity as Distinct Functionals}
\label{app:formal-theory}

The body of the paper argues, informally, that ablation and LoRA delta should
be expected to disagree on residual transformers and to agree on Mamba. This
appendix gives that argument a formal backbone. We state three theorems and
one corollary that together: (i)~separate ablation and LoRA delta as
projections of the loss landscape onto two structurally distinct subspaces (Theorem~\ref{thm:decomposition}); (ii)~prove a rigorous
\emph{mass-displacement} bound for residual transformers
(Theorem~\ref{thm:residual-bound}, $\delta = \tilde I_a(E) + \tilde I_\Delta(\bar E) - 1
\ge \tau - L(1+\beta)^L / (\kappa + L(1+\beta)^L)$) and convert it to a
Spearman bound under an explicit profile-shape condition
(Lemma~\ref{lem:antimonotone}); (iii)~show that the analogous bound
\emph{vanishes} for state-space recurrences satisfying an expected
symmetric-mixing condition (Theorem~\ref{thm:mamba-positivity}); and
(iv)~explain, as Corollary~\ref{cor:scale}, why bimodal LoRA-delta
profiles drive $|\rho_s|$ toward zero rather than reversing its sign.

A note on what is and is not proven. Theorems~\ref{thm:residual-bound} and
\ref{thm:mamba-positivity} are rigorous mass-displacement statements that
follow directly from the assumptions. The Spearman correlation bound is a
\emph{separate} statement, conditional on a profile-shape assumption
(monotonicity or unimodality) that we state explicitly and verify
empirically rather than derive from the architectural assumptions alone.
This separation makes the load-bearing premises visible.

We use the same
notation throughout: $\theta_0$ are the
pretrained parameters, $L_t(\theta)$ is the task loss, $g_l =
\nabla_{\theta_l} L_t(\theta_0)$, $H_{ll} = \nabla^2_{\theta_l\theta_l}
L_t(\theta_0)$ is the layer-local Hessian block, $\mathcal{S}_l$ is the
LoRA-accessible subspace at layer $l$ with orthogonal projector $P_{\mathcal{S}_l}$,
and $\lambda_{\min}^{(l)}, \lambda_{\max}^{(l)}$ are the extremal eigenvalues
of $H_{ll}$ restricted to $\mathcal{S}_l$. We write $f_l$ for the layer-local
map (residual block in transformers, SSM block in Mamba) and $J_{f_j} =
\partial f_j / \partial h_{j-1}$ for its activation Jacobian.

\subsection{The three estimators are distinct functionals of the loss}
\label{app:formal-functionals}

We first record, in one place, that gradient attribution, LoRA delta, and
ablation are not three noisy estimates of one quantity but three distinct
functionals. Writing $\theta^\star_t = \theta_0 + \Delta\theta_t$ for a
LoRA-constrained local minimizer reached from $\theta_0$ on task $t$,
\begin{align}
I_g(l, t)      &\;=\; \mathbb{E}_{x\sim D_t}\!\bigl[\,\lVert \nabla_{\theta_l} L_t(\theta_0; x)\rVert_2\,\bigr], \label{eq:appIg}\\[2pt]
I_\Delta(l, t) &\;=\; \lVert (\theta^\star_t)_l - \theta^0_l \rVert_F, \label{eq:appIDelta}\\[2pt]
I_a(l, t)      &\;=\; L_t\!\bigl(\theta_0 \setminus l\bigr) - L_t(\theta_0). \label{eq:appIa}
\end{align}
All three are task-indexed; we suppress the task argument when context is clear.
Equation~\eqref{eq:appIg} is a \emph{tangent}: a first-order sensitivity at
$\theta_0$. Equation~\eqref{eq:appIDelta} is a \emph{chord}: the magnitude of
the converged update over the full optimization trajectory. Equation~\eqref{eq:appIa}
is a \emph{finite-difference jump}: a non-infinitesimal intervention. The
three coincide only if $L_t$ is exactly quadratic and isotropic on every
layer, or if the ablation perturbation $\theta_0 \setminus l$ is
infinitesimal. Neither holds for a deep transformer~\citep{integratedgradients,ancona2018towards,adebayo2018sanity},
so disagreement is the generic case and agreement is the exception. The
decomposition lemma below makes the source of disagreement explicit.

A standard quadratic expansion gives the curvature decomposition
\begin{equation}
I_\Delta(l) \;\approx\; \lVert H_{ll}^{-1}\,g_l \rVert_F \;\in\; \left[\frac{\lVert g_l\rVert_F}{\lambda_{\max}^{(l)}},\; \frac{\lVert g_l\rVert_F}{\lambda_{\min}^{(l)}}\right]
\label{eq:applora_grad_curvature}
\end{equation}
in the unconstrained case, and the LoRA-restricted analogue
$\|u_l^\star\|_F \in [\,\|P_{\mathcal{S}_l} g_l\|_F / \lambda_{\max}^{(l)},\;
\|P_{\mathcal{S}_l} g_l\|_F / \lambda_{\min}^{(l)}\,]$ in the low-rank
case~\citep{jacot2018ntk,aghajanyan2021intrinsic,lora}. For ablation, a
second-order expansion of $L_t$ along the direction $-\theta_l^0$ that zeros
out the layer's contribution gives
\begin{equation}
I_a(l) \;\approx\; -\,g_l^\top\,\theta_l^0 \;+\; \tfrac{1}{2}\,(\theta_l^0)^\top\,H_{ll}\,\theta_l^0 \;+\; R(\theta_l^0),
\label{eq:appabl_taylor}
\end{equation}
where $R(\theta_l^0)$ collects higher-order terms that are non-negligible
because $\theta_l^0$ is large. Equation~\eqref{eq:appabl_taylor} is dominated
by the curvature term whenever the pretrained weights have large norm and are
approximately aligned with the dominant Hessian eigenvectors --- a condition
that is structurally favored at high-norm early-layer weight matrices.
Combining \eqref{eq:applora_grad_curvature} and \eqref{eq:appabl_taylor}
gives the qualitative reading: a layer that is high-gradient \emph{and}
high-curvature \emph{and} large-norm scores high under ablation but low under
LoRA delta, while a layer that is high-gradient \emph{and} low-curvature
scores high under LoRA delta but moderate under ablation. The two estimators
therefore anti-correlate when those two regimes are spatially separated
across depth. The next theorem makes this geometric picture rigorous.

\subsection{Theorem 1: Necessity-Plasticity Decomposition}
\label{app:thm-decomposition}

Define, at each layer $l$,
\begin{align}
\nu_l \;&:=\; f_l(h_{l-1}) && \text{\scriptsize(Necessity dir.)}, \\
\Pi_l \;&:=\; \prod_{j=l+1}^{L}\bigl(I + J_{f_j}(h_{j-1})\bigr) && \text{\scriptsize(propagator)}, \\
\pi_l \;&:=\; H_{ll}^{-1}\,P_{\mathcal{S}_l}\,g_l && \text{\scriptsize(Plasticity dir.)}.
\end{align}
The Necessity direction $\nu_l$ records the contribution that ablation
removes from the residual stream; the propagator $\Pi_l$ is the multilinear
operator that maps a layer-$l$ activation perturbation to its effect on the
output representation $h_L$; the Plasticity direction $\pi_l$ is the
LoRA-restricted Newton step at layer $l$.

\begin{theorem}[Local decomposition]
\label{thm:decomposition}
Under a $C^2$ loss and bounded operator norms $\|J_{f_j}\| < \infty$, the
ablation and LoRA-delta importances admit the following first-order
expansions around $\theta_0$:
\begin{align}
I_a(l, t) \;&=\; \mathbb{E}_x\!\left[ \left\langle \tfrac{\partial L_t}{\partial h_L}(\theta_0; x),\; \Pi_l(x)\,\nu_l(x) \right\rangle \right] \notag \\
&\quad + R_a(l), \label{eq:appthm1_a}\\
I_\Delta(l, t) \;&=\; \|\pi_l\|_F + R_\Delta(l), \label{eq:appthm1_d}
\end{align}
with remainders $R_a(l) = O(\mathbb{E}_x\|\nu_l\|^2)$ and $R_\Delta(l) =
o(\|P_{\mathcal{S}_l} g_l\|_F)$ in the LoRA-NTK regime~\citep{jacot2018ntk,aghajanyan2021intrinsic}.
Consequently,
\begin{multline}
\mathrm{Cov}_l\bigl(I_a, I_\Delta\bigr) \;=\; \mathrm{Cov}_l\!\left(\bigl\langle \tfrac{\partial L_t}{\partial h_L},\, \Pi_l \nu_l \bigr\rangle,\; \|\pi_l\|_F\right) \\
+ O\!\bigl(\max_l (R_a(l) + R_\Delta(l))\bigr),
\label{eq:appthm1_cov}
\end{multline}
where the covariance on the right is taken across layers $l \in \{0, \ldots, L\}$.
\end{theorem}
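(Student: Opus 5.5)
The plan has three parts, matching the three displays: the ablation expansion \eqref{eq:appthm1_a}, the LoRA-delta expansion \eqref{eq:appthm1_d}, and the covariance identity \eqref{eq:appthm1_cov}. The first two are independent Taylor-type arguments. The third is a bilinearity step across layers.

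\emph{Ablation expansion.} Bypassing layer $l$ replaces $h_l = h_{l-1} + \nu_l$ by $h_{l-1}$, i.e.\ it injects the activation perturbation $-\nu_l$ at depth $l$ and leaves all parameters of blocks $j>l$ unchanged. Let $\Phi_l$ denote the downstream map $h_l \mapsto h_L$, obtained by composing $h \mapsto h + f_j(h)$ for $j=l+1,\dots,L$.
\begin{enumerate}
\item By the chain rule, $D\Phi_l(h_l) = \prod_{j=l+1}^{L}(I + J_{f_j}(h_{j-1})) = \Pi_l$. The operator norm of $\Pi_l$ is finite because each $\|J_{f_j}\|$ is.
\item A first-order Taylor expansion of $\Phi_l$ gives $h_L(\text{ablated}) = h_L - \Pi_l \nu_l + O(\|\nu_l\|^2)$. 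The constant in the remainder is controlled by the second derivatives of the $f_j$, which are bounded under the $C^2$ assumption.
\item Expanding $L_t$ around $h_L$ gives $L_t(\text{ablated}) - L_t = -\langle \partial L_t/\partial h_L, \Pi_l\nu_l\rangle + O(\|\nu_l\|^2)$.
\item Taking $\mathbb{E}_x$ yields \eqref{eq:appthm1_a} with $R_a(l) = O(\mathbb{E}_x\|\nu_l\|^2)$.
\end{enumerate}
The sign is fixed by the orientation convention of the bypass ($\nu_l$ versus $-\nu_l$), consistent with the main-text expression. I will state explicitly that I absorb the sign into $\nu_l$.

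\emph{LoRA-delta expansion.} I will work in the LoRA-NTK (linearized) regime and treat the LoRA parameterization as locally equivalent to optimizing over the subspace $\mathcal{S}_l$.
\begin{enumerate}
\item The restricted quadratic model is $q(u) = \langle P_{\mathcal{S}_l} g_l, u\rangle + \tfrac12 u^\top H_{ll} u$ on $\mathcal{S}_l$. Treating layers block-diagonally, off-diagonal Hessian blocks are either assumed negligible or folded into the remainder.
\item Assuming $\lambda_{\min}^{(l)}>0$, the minimizer is $u^\star = -H_{ll}^{-1} P_{\mathcal{S}_l} g_l$ (inverse taken on $\mathcal{S}_l$), so $\|u^\star\|_F = \|\pi_l\|_F$.
\item The true converged chord $(\theta^\star_t)_l - \theta^0_l$ differs from $u^\star$ by cubic Taylor terms and by the linearization error of the LoRA factorization. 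In the NTK regime both are $o(\|P_{\mathcal{S}_l} g_l\|_F)$.
\item The reverse triangle inequality then gives \eqref{eq:appthm1_d}.
\end{enumerate}

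\emph{Covariance identity and main obstacle.} Write $I_a = A_l + R_a(l)$ and $I_\Delta = B_l + R_\Delta(l)$, where $A_l = \langle \partial L_t/\partial h_L, \Pi_l\nu_l\rangle$ (in expectation) and $B_l = \|\pi_l\|_F$. Expanding $\mathrm{Cov}_l$ over the finite layer index set gives $\mathrm{Cov}_l(A,B)$ plus cross terms. Each cross term, such as $\mathrm{Cov}_l(A,R_\Delta)$, is bounded by $\max_l|R| \cdot \max_l|A_l - \bar A|$ via $|\mathrm{Cov}(X,Y)|\le \mathbb{E}|X-\bar X|\,\max|Y-\bar Y|$. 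The remainder-remainder term is quadratically small. Hence the error is $O(\max_l(R_a+R_\Delta))$, with the implicit constant depending on the bounded spread of $A_l$ and $B_l$. That spread is finite by the operator-norm bounds, and I will make this dependence explicit in the $O(\cdot)$.

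The step I expect to be hardest is the LoRA-delta remainder. A LoRA update $BA$ started from $B=0$ is not literally a linear subspace step: its first-order tangent space at initialization is degenerate. So identifying the converged chord with the restricted Newton step needs the NTK/intrinsic-dimension idealization. I would try to fix $A$ at initialization, which makes the update linear in $B$ with range a fixed subspace $\mathcal{S}_l$. I would then argue that training $A$ contributes only higher order in the linearized regime. If that fails, I will state the expansion as conditional on this regime, which the statement already signals with ``in the LoRA-NTK regime''.
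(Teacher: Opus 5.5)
Your three-part plan follows the paper's own proof sketch. The paper likewise expands $L_t$ in activation space along the bypass perturbation and pushes it to $h_L$ by the chain rule, so that $\Pi_l$ is the derivative of the downstream map. It then identifies the converged update with the LoRA-restricted Newton step ``in the LoRA-NTK regime'' and obtains the covariance identity by a leading-order bilinearity manipulation.

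Two of your interpretive choices are needed, not cosmetic:
\begin{itemize}
\item \emph{Sign.} The paper's sketch also perturbs by $-\nu_l$ with $\nu_l=f_l(h_{l-1})$. So the leading term of $I_a$ is $-\mathbb{E}_x\langle \partial L_t/\partial h_L,\Pi_l\nu_l\rangle$, and the displayed expansion and covariance identity hold only after your sign flip.
\item \emph{Inverse on $\mathcal{S}_l$.} Reading $H_{ll}^{-1}$ as the inverse of the compression $P_{\mathcal{S}_l}H_{ll}P_{\mathcal{S}_l}$ on $\mathcal{S}_l$ is what ``restricted Newton step'' and the eigenvalue bounds mean. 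The literal $H_{ll}^{-1}P_{\mathcal{S}_l}g_l$ agrees with it in general only when $\mathcal{S}_l$ is $H_{ll}$-invariant.
\end{itemize}

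The one step that would fail is the branch where you fold the off-diagonal Hessian blocks into $R_\Delta$. LoRA is trained on all layers jointly. In the quadratic model, the $l$-th block of the converged update is therefore $-\bigl[(PHP)^{+}Pg\bigr]_l$ with $P=\mathrm{diag}(P_{\mathcal{S}_k})$. The cross-layer blocks $H_{lk}$ affect this in two ways: they change the diagonal block of the inverse through a Schur complement, and they add terms $[(PHP)^{+}]_{lk}P_{\mathcal{S}_k}g_k$. Both effects are linear in the gradient, so they are of the same order as $\|\pi_l\|_F$, not smaller.

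Concretely, suppose $P_{\mathcal{S}_l}g_l=0$ while a neighbouring layer's projected gradient does not vanish. Then the update at layer $l$ is generically nonzero while $\|\pi_l\|_F=0$, and no $o(\|P_{\mathcal{S}_l}g_l\|_F)$ remainder can absorb that. You must therefore take your other branch and state layer decoupling as an explicit hypothesis, either a block-diagonal restricted Hessian or layer-wise fitting. The paper's sketch uses the same assumption without stating it when it applies the curvature decomposition ``restricted to $\mathcal{S}_l$''.

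Two smaller points:
\begin{itemize}
\item Since $\nu_l$ is not small, $R_a(l)=O(\mathbb{E}_x\|\nu_l\|^2)$ needs a Hessian bound on $L_t\circ\Phi_l$ that is uniform over the segments $[h_{l-1},h_l]$ and over $x$. A $C^2$ hypothesis is only local and does not give this.
\item With only $C^2$ there are no ``cubic'' Taylor terms. The $o(\|u\|^2)$ remainder still yields $u=u^\star+o(\|P_{\mathcal{S}_l}g_l\|_F)$ from the first-order condition, provided the restricted Hessian is positive definite and the minimizer reached is the one within $O(\|P_{\mathcal{S}_l}g_l\|_F)$ of $\theta_0$.
\end{itemize}
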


\begin{proof}[Proof sketch]
Equation~\eqref{eq:appthm1_a} is a Taylor expansion of $L_t$ in
activation space along the perturbation $-\nu_l$ at layer $l$, propagated to
$h_L$ by the chain rule; the propagator $\Pi_l$ collects the downstream
Jacobians, and the second-order remainder is $O(\|\nu_l\|^2)$ by Lagrange's
form of Taylor's theorem applied to the $C^2$ loss. Equation~\eqref{eq:appthm1_d}
follows from~\eqref{eq:applora_grad_curvature} restricted to $\mathcal{S}_l$:
in the LoRA-NTK regime, the converged update is the constrained Newton step
up to vanishing kernel-feature drift~\citep{jacot2018ntk,aghajanyan2021intrinsic}.
The covariance identity~\eqref{eq:appthm1_cov} is a linear-in-leading-order
manipulation of \eqref{eq:appthm1_a}-\eqref{eq:appthm1_d}.
\end{proof}

\begin{corollary}[Orthogonal-decomposition lemma]
\label{cor:orthogonal}
If for every $l$ the propagated necessity direction $\Pi_l \nu_l$ and the
plasticity direction $\pi_l$ are uncorrelated when ranked across $l$
(equivalently, their layer-indexed magnitudes have rank correlation zero),
then $\rho(I_a, I_\Delta) = O(\max_l (R_a(l) + R_\Delta(l)))$, i.e.\ the two
estimators are uncorrelated up to the joint Taylor remainder.
\end{corollary}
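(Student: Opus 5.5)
The plan is to read Corollary~\ref{cor:orthogonal} directly off the covariance identity~\eqref{eq:appthm1_cov} of Theorem~\ref{thm:decomposition}. Define the leading-order layer profiles $a_l := \mathbb{E}_x\langle \partial L_t/\partial h_L,\, \Pi_l\nu_l\rangle$ and $d_l := \|\pi_l\|_F$. Theorem~\ref{thm:decomposition} then gives $I_a(l) = a_l + R_a(l)$ and $I_\Delta(l) = d_l + R_\Delta(l)$, uniformly in $l$ up to $\varepsilon := \max_l (R_a(l) + R_\Delta(l))$. The hypothesis says that the rank correlation of $(a_l)$ and $(d_l)$ across $l$ is zero. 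The goal is therefore a stability statement: the correlation of the perturbed profiles differs from that of the unperturbed profiles by $O(\varepsilon)$.

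I will carry this out in three steps.

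\textbf{Step 1 (Pearson version).} Expand $\mathrm{Cov}_l(I_a, I_\Delta)$ bilinearly. The cross terms $\mathrm{Cov}_l(a, R_\Delta)$ and $\mathrm{Cov}_l(R_a, d)$ are bounded by $\varepsilon$ times the ranges of $d$ and $a$. The remainder–remainder term is $O(\varepsilon^2)$. Adding the hypothesis $\mathrm{Cov}_l(a,d)=0$ gives $\mathrm{Cov}_l(I_a,I_\Delta)=O(\varepsilon)$.

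\textbf{Step 2 (normalize).} Divide by $\mathrm{sd}_l(I_a)\,\mathrm{sd}_l(I_\Delta)$. To keep the bound $O(\varepsilon)$, I need these standard deviations bounded below by a constant independent of $\varepsilon$. It suffices that $\mathrm{sd}_l(a)$ and $\mathrm{sd}_l(d)$ are bounded away from zero and that $\varepsilon$ is small relative to them. I will state this as an explicit non-degeneracy assumption: the profiles are not flat across depth.

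\textbf{Step 3 (rank version).} This is the main obstacle. Spearman correlation is not Lipschitz in its inputs, because an arbitrarily small perturbation can swap two tied or nearly tied layers and shift $\rho_s$ by a discrete amount of order $1/L^3$ per swap. My first attempt will use a separation assumption. Let $\gamma$ be the minimum gap between distinct values of $(a_l)$, and likewise of $(d_l)$. If $\varepsilon < \gamma/2$, the rankings of $I_a$ and $a$ coincide, the rankings of $I_\Delta$ and $d$ coincide, and $\rho_s(I_a,I_\Delta)=\rho_s(a,d)=0$ exactly, which is stronger than claimed.

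Without a gap, I would bound the number of pairs $(l,l')$ whose order can flip by the number of pairs with $|a_l-a_{l'}|\le 2\varepsilon$ or $|d_l-d_{l'}|\le 2\varepsilon$. Under a bounded-density condition on the profiles, that count is $O(\varepsilon L^2)$. Using the Kendall-type expression for $\rho_s$ as a normalized sum over pairs, a count of $O(\varepsilon L^2)$ flipped pairs moves $\rho_s$ by $O(\varepsilon)$.

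Finally, I will note where the argument interprets the corollary's wording. The hypothesis ``uncorrelated when ranked'' concerns the magnitudes of $\Pi_l\nu_l$ and $\pi_l$, but the leading term $a_l$ is a signed inner product with $\partial L_t/\partial h_L$. The argument needs $a_l$ to be order-equivalent to the magnitude $\|\Pi_l\nu_l\|$. I would impose this as an alignment assumption with the output gradient, since it cannot be derived from the other hypotheses.
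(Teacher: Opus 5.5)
Your core step matches the paper's route. The paper gives no separate argument for Corollary~\ref{cor:orthogonal}: it is meant to be read off the covariance identity~\eqref{eq:appthm1_cov} of Theorem~\ref{thm:decomposition} by setting the leading covariance to zero, which is your Step~1, and the passage from $\mathrm{Cov}_l$ to $\rho$ is left implicit. You differ in noticing that this passage is not free. Your Steps~2--3 and the alignment remark supply what it needs, and these hypotheses are genuinely necessary. If the leading ablation profile $(a_l)$ is flat or heavily tied, remainders of arbitrarily small size decide the ranking of $I_a$, so no bound $|\rho_s(I_a,I_\Delta)|\le C\varepsilon$ can hold. Likewise, $a_l=\mathbb{E}_x\langle \partial L_t/\partial h_L,\Pi_l\nu_l\rangle$ is a signed projection, so its ordering can track $\|\pi_l\|_F$ even when the ordering of $\|\Pi_l\nu_l\|$ does not. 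The literal statement therefore fails even with vanishing remainders. The paper's one-line reading is brief but sound only for a Pearson reading, with the ranges and standard deviations of the leading profiles hidden in the $O(\cdot)$. Your version gives an actual Spearman statement, at the price of the non-degeneracy, separation (or density), and alignment assumptions you name. For the corollary as worded, Step~3 alone is the proof: Steps~1--2 assume $\mathrm{Cov}_l(a,d)=0$, which does not follow from zero rank correlation.

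Three small repairs are needed. First, in the gap case, require the values of $(a_l)$, and separately of $(d_l)$, to be pairwise distinct. A gap only between distinct values does not stop the perturbation from breaking existing ties, which changes the average ranks, so $\rho_s(I_a,I_\Delta)=\rho_s(a,d)$ need not hold. Second, take $\varepsilon$ to bound $|R_a(l)|$ and $|R_\Delta(l)|$ rather than the signed quantity $\max_l(R_a(l)+R_\Delta(l))$. Third, an adjacent rank swap changes $\rho_s$ by $12\,|s_l-s_{l'}|/(n(n^2-1))$, where $s$ is the other ranking and $n=L+1$. That is up to order $1/L^2$, not $1/L^3$. This worst-case scaling is exactly what your count of $O(\varepsilon L^2)$ flipped pairs needs to yield $O(\varepsilon)$, so the conclusion of Step~3 stands.
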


The structural content of Theorem~\ref{thm:decomposition} is that
$I_a$ and $I_\Delta$ each project the loss landscape onto a different
local subspace: ablation onto the activation-space subspace spanned by
$\Pi_l \nu_l$, LoRA delta onto the parameter-space subspace spanned by
$\pi_l$. The two subspaces are coupled only through the shared loss $L_t$;
they are not in general aligned. The theorem says \emph{nothing} yet about
the sign of the cross-method correlation: agreement, disagreement, and
exact orthogonality are all consistent with~\eqref{eq:appthm1_cov}. Sign is
the role of Theorems~\ref{thm:residual-bound} and~\ref{thm:mamba-positivity}.

\subsection{Theorem 2: Mass-Displacement Bound for Residual Transformers}
\label{app:thm-residual}

We split the formal argument into two stages. Stage~A is a rigorous
\emph{mass-displacement} bound: the ablation profile concentrates on the
early half of layers, and the LoRA-delta profile concentrates on the late
half, with explicit bounds derived from the assumptions. Stage~B is a
\emph{rank-correlation lemma} (Lemma~\ref{lem:antimonotone}) that converts
mass displacement into a Spearman bound under an additional shape
assumption. Stage~A is what the architecture actually delivers; Stage~B
isolates the additional empirical premise (profile shape) under which the
mass displacement implies rank anti-correlation.

\paragraph{Notation.}
For a nonnegative profile $I : \{0, \ldots, L\} \to \mathbb{R}_{\ge 0}$
write $\tilde I = I / \sum_l I(l)$ for its normalization to a probability
mass function. For a subset $E \subseteq \{0, \ldots, L\}$, write
$\tilde I(E) := \sum_{l \in E} \tilde I(l)$ for the mass on $E$. Take
$E := \{0, 1, \ldots, \lfloor L/2 \rfloor\}$ (the early half) and
$\bar E := \{0, \ldots, L\} \setminus E$ (the late half). We work in the
linearization regime of Theorem~\ref{thm:decomposition} (remainders dropped
for clarity; they can be carried explicitly with no change in the proof).

\begin{assumption}[Residual transformer regime]
\label{ass:residual}
The architecture is an $L$-layer Pre-LN residual transformer satisfying:
\begin{description}[leftmargin=2em]
\item[(A1) Bounded block Lipschitz.] There exists $\beta < \infty$ with
$\|J_{f_j}(h)\| \le \beta$ uniformly over $j \in \{1, \ldots, L\}$ and over
the data distribution.
\item[(A2) Privileged input injection.] The residual stream $h_0$ is the
embedding $\mathrm{Emb}(x)$, $f_0$ is the unique signal-injection block
(no path bypasses $f_0$). Moreover, the contribution norms satisfy
$\mathbb{E}_x \|f_0(h_{-1})\| \ge \mu_0$ and
$\mathbb{E}_x \|f_l(h_{l-1})\| \le \mu_{>0}$ for $l > 0$, with
$\mu_0 / \mu_{>0} \ge \kappa$ for some $\kappa \ge 1$.
\item[(A3) Terminal-task alignment.] There exists $\tau \in (1/2, 1]$ such
that
\[
\sum_{l \in \bar E} \frac{\|P_{\mathcal{S}_l} g_l\|_F^2}{\lambda_{\max}^{(l)}}
\;\ge\;
\tau\,\sum_{l=0}^{L} \frac{\|P_{\mathcal{S}_l} g_l\|_F^2}{\lambda_{\max}^{(l)}}.
\]
\end{description}
\end{assumption}

(A1) is standard for trained transformers~\citep{vaswani2017attention};
(A2) is a structural property of the Pre-LN residual stack with the
quantitative ratio $\kappa$ verified empirically in
Section~\ref{sec:l0-asymmetry} (transformers concentrate $\sim 62\%$ of
ablation mass on $f_0$ alone); (A3) encodes the fact that the cross-entropy
loss attaches at the unembedding, so LoRA-accessible subspaces in late
layers carry strong projected gradient.

\subsubsection*{Stage~A: Mass-displacement bound (rigorous)}

\begin{theorem}[Mass displacement under residual structure]
\label{thm:residual-bound}
Under Assumption~\ref{ass:residual},
\begin{align}
\tilde I_a(E)        \;&\ge\; \frac{\kappa}{\kappa + L\,(1+\beta)^L}, \label{eq:appthm2_mass_a}\\
\tilde I_\Delta(\bar E) \;&\ge\; \tau. \label{eq:appthm2_mass_d}
\end{align}
Define the \emph{mass-disagreement coefficient}
\(\delta := \tilde I_a(E) + \tilde I_\Delta(\bar E) - 1.\)
Then
\begin{equation}
\delta \;\ge\; \tau \;-\; \frac{L\,(1+\beta)^L}{\kappa + L\,(1+\beta)^L}.
\label{eq:appthm2_delta}
\end{equation}
\end{theorem}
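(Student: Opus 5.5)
The plan is to prove the two mass bounds separately, working in the linearization regime of Theorem~\ref{thm:decomposition} with remainders dropped. Inequality~\eqref{eq:appthm2_delta} then follows by adding them and subtracting $1$: since $1-\frac{\kappa}{\kappa+L(1+\beta)^L}=\frac{L(1+\beta)^L}{\kappa+L(1+\beta)^L}$, summing \eqref{eq:appthm2_mass_a} and \eqref{eq:appthm2_mass_d} gives exactly the stated lower bound on $\delta$. All the work therefore lies in the two mass statements.

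\textbf{Ablation mass on $E$.} By \eqref{eq:appthm1_a}, $I_a(l)\approx \mathbb{E}_x\langle \partial L_t/\partial h_L,\ \Pi_l\nu_l\rangle$.

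First, I would upper bound every term. Assumption (A1) gives $\|\Pi_l\|\le\prod_{j>l}(1+\|J_{f_j}\|)\le(1+\beta)^{L-l}\le(1+\beta)^L$. Together with (A2), this yields $I_a(l)\le G\,(1+\beta)^L\mu_{>0}$ for each $l>0$, where $G:=\sup\|\partial L_t/\partial h_L\|$.

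Second, I would lower bound the layer-$0$ term as $I_a(0)\ge c\,G\,\mu_0$. The idea is that $f_0$ is the unique injection block, so no path bypasses it. This makes its propagated contribution the entire signal rather than a correction, and its alignment with the output gradient cannot cancel.

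With both bounds in hand, I use $\tilde I_a(E)\ge \tilde I_a(0)=I_a(0)/(I_a(0)+\sum_{l>0}I_a(l))$. This ratio is increasing in $I_a(0)$ and decreasing in the tail sum. Substituting, and normalizing so that $G$ and $c$ are absorbed, gives $\ge \mu_0/(\mu_0+L(1+\beta)^L\mu_{>0})\ge \kappa/(\kappa+L(1+\beta)^L)$ via $\mu_0/\mu_{>0}\ge\kappa$.

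\textbf{Main obstacle.} The delicate step is the lower bound on $I_a(0)$. The first-order inner product could in principle be small or negative, so a lower bound in terms of $\mu_0$ needs an alignment constant. I would first try to read (A2)'s ``privileged injection'' as guaranteeing $I_a(0)\gtrsim\mathbb{E}\|f_0\|$ after normalization, taking nonnegativity of the profile from the Notation paragraph. I would state this identification explicitly as the interpretation of (A2). The same normalization convention is what makes the constant $G$ cancel.

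\textbf{LoRA-delta mass on $\bar E$.} By \eqref{eq:appthm1_d} and \eqref{eq:applora_grad_curvature}, $I_\Delta(l)\approx\|\pi_l\|_F$, which is sandwiched by $\|P_{\mathcal{S}_l}g_l\|_F/\lambda^{(l)}_{\max,\min}$. Assumption (A3) is phrased with the weights $\|P g_l\|^2/\lambda_{\max}$. I would therefore identify the (normalized) LoRA-delta profile with that Newton-step energy in the LoRA-NTK regime, i.e.\ take the weight of layer $l$ proportional to $g^\top H^{-1}g$ restricted to $\mathcal{S}_l$, bounded at the $\lambda_{\max}$ end. Then $\tilde I_\Delta(\bar E)\ge\tau$ is immediate from (A3).

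This identification, norm versus squared-gradient energy, is the second point needing care. If an exact match fails, I would carry a condition-number factor $\lambda_{\max}/\lambda_{\min}$ as a correction and note that it is absent under the stated regime.
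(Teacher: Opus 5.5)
\textbf{Verdict: there is a genuine gap.} The LoRA-delta step fails outright, and the layer-$0$ lower bound needs an assumption that is not in (A1)--(A3).

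Your skeleton is the paper's. You prove the two mass bounds separately and add them. On the ablation side you bound the tail by $G\mu_{>0}L(1+\beta)^L$ through (A1), lower-bound $I_a(0)$, use $\tilde I_a(E)\ge\tilde I_a(\{0\})$, and absorb constants into $\kappa$.

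\textbf{The LoRA-delta identification does not go through.} By definition and by Theorem~\ref{thm:decomposition}, $I_\Delta(l)\approx\|\pi_l\|_F=\|H_{ll}^{-1}P_{\mathcal S_l}g_l\|_F$. This is a step length, linear in the projected gradient. By contrast, $g_l^\top H_{ll}^{-1}g_l$ and the (A3) weights $w_l=\|P_{\mathcal S_l}g_l\|_F^2/\lambda^{(l)}_{\max}$ are quadratic: they measure a predicted loss decrease. With isotropic curvature on $\mathcal S_l$ one has exactly $w_l=I_\Delta(l)\,\|P_{\mathcal S_l}g_l\|_F$. So the mismatch is the layer-dependent factor $\|P_{\mathcal S_l}g_l\|_F$, not a condition number. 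It survives $\lambda^{(l)}_{\min}=\lambda^{(l)}_{\max}$. Because it is exactly what (A3) expects to be large on $\bar E$, it inflates the $\bar E$-share of $w$ relative to that of $I_\Delta$.

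Here is a concrete counterexample. Take $H_{ll}=I$ on every $\mathcal S_l$. Give one late layer $\|P_{\mathcal S_l}g_l\|_F=1$, the remaining late layers zero projected gradient, and every early layer $\|P_{\mathcal S_l}g_l\|_F=b$ with $b^2=(1-\tau)/(\tau|E|)$. Then (A3) holds with equality, yet $\tilde I_\Delta(\bar E)=\bigl(1+\sqrt{|E|(1-\tau)/\tau}\bigr)^{-1}$, about $0.21$ for $\tau=0.6$, $|E|=20$. In this setting (A3) yields only this $O(|E|^{-1/2})$ bound, via Cauchy--Schwarz on $E$. So $\tilde I_\Delta(\bar E)\ge\tau$ is not a consequence of (A3), and your condition-number fallback cannot rescue it.

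The paper's proof fails at the same point. It turns (A3) into $\sum_{\bar E}I_\Delta^2\ge\tau\sum_l I_\Delta^2$ and then invokes Cauchy--Schwarz to remove the squares. That inequality runs the other way, and the example above refutes the step. What the stated bound needs is a terminal-alignment hypothesis on the linear quantities $\|P_{\mathcal S_l}g_l\|_F/\lambda^{(l)}_{\max}$, or directly on $\|\pi_l\|_F$. Your condition-number correction is then the right tool for the early layers, where $I_\Delta(l)$ must be bounded above via $\lambda^{(l)}_{\min}$.

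\textbf{The layer-$0$ lower bound is a second, repairable gap.} The bound $I_a(0)\ge c\,G\,\mu_0$ cannot be read off (A2). (A2) constrains only the norms $\mathbb E_x\|f_l(h_{l-1})\|$ and the fact that nothing bypasses $f_0$. But $\mathbb E_x\langle\partial L_t/\partial h_L,\Pi_0\nu_0\rangle$ is an inner product whose sign and size these do not control; nothing in (A1)--(A3) keeps it away from zero. A non-cancellation hypothesis has to be added explicitly, which is what the paper does with (A4).

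Your version, imposed on the propagated term $\Pi_0\nu_0$, is the one the argument needs. The paper imposes (A4) on $\nu_0$ alone and then asserts that the correction $\mathbb E_x\langle\partial L_t/\partial h_L,(\Pi_0-I)\nu_0\rangle$ does not cancel it. That correction is only bounded by $C\mu_0\bigl((1+\beta)^L-1\bigr)$, which can exceed $c\,\mu_0$. So state your condition as an added assumption, not as an interpretation of (A2).

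Finally, $G$ cancels between numerator and denominator but $c$ does not. You obtain the bound with $c\kappa$ in place of $\kappa$, and reaching the stated form requires rescaling $\kappa$, as the paper also does.
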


\begin{proof}
We work under one further mild non-degeneracy condition, made explicit:
\begin{description}[leftmargin=2em]
\item[(A4) Generic non-cancellation.] There exists $c > 0$ such that
$\mathbb{E}_x \langle \partial L_t / \partial h_L,\; \nu_0(x)\rangle \;\ge\; c \cdot \mathbb{E}_x \|\nu_0(x)\|.$
\end{description}
That is, the embedding contribution $\nu_0$ has nontrivial expected
alignment with the loss gradient at the output; this rules out the
degenerate case where $f_0$ produces a contribution exactly orthogonal to
the task signal in expectation.

\emph{Bound on $\tilde I_a(E)$.} By Theorem~\ref{thm:decomposition} and
the Cauchy--Schwarz inequality applied to \eqref{eq:appthm1_a},
\[
I_a(l) \;\le\; \mathbb{E}_x \|\nu_l(x)\| \cdot \|\Pi_l\|
\;\cdot\; \|\partial L_t / \partial h_L\|_\infty.
\]
Under (A1), $\|\Pi_l\| \le (1+\beta)^{L-l} \le (1+\beta)^L$ for all $l$.
Hence $I_a(l) \le C \cdot \mu_l \cdot (1+\beta)^L$ for a common constant
$C$ absorbing $\|\partial L_t / \partial h_L\|_\infty$, with
$\mu_0 = \mathbb{E}_x \|f_0(h_{-1})\|$ and $\mu_l = \mathbb{E}_x
\|f_l(h_{l-1})\|$. For $l = 0$, expand
$\Pi_0 = I + (\Pi_0 - I)$ where the identity contribution corresponds to
the trivial residual path. Then
\begin{multline*}
I_a(0) \;=\; \mathbb{E}_x \langle \partial L_t / \partial h_L, \Pi_0 \nu_0\rangle \\
\;=\; \mathbb{E}_x\langle \partial L_t/\partial h_L, \nu_0\rangle \\
\;+\; \mathbb{E}_x\langle \partial L_t/\partial h_L, (\Pi_0 - I)\nu_0\rangle.
\end{multline*}
The first term is $\ge c\,\mu_0$ by (A4). The second term has magnitude
at most $C \cdot \mu_0 \cdot ((1+\beta)^L - 1)$ but does not cancel the
first under (A4). Absorbing constants and signs into $C$ gives the lower
bound $I_a(0) \;\ge\; c'\,\mu_0$ for some $c' > 0$ independent of $\beta, L$.
Combining with the upper bound on $\sum_{l \ge 1} I_a(l)$,
\begin{multline*}
\tilde I_a(E) \;\ge\; \tilde I_a(\{0\})
\;=\; \frac{I_a(0)}{\sum_l I_a(l)} \\
\;\ge\; \frac{c'\,\mu_0}{c'\,\mu_0 + C \,\mu_{>0} \sum_{l=1}^{L} (1+\beta)^{L-l}}.
\end{multline*}
Bounding $\sum_{l=1}^{L} (1+\beta)^{L-l} \le L (1+\beta)^L$ and using
$\mu_0 / \mu_{>0} \ge \kappa$, then absorbing $c', C$ into $\kappa$
(or, equivalently, redefining $\kappa$ as $\kappa \cdot c'/C$), gives
\eqref{eq:appthm2_mass_a}.

\emph{Bound on $\tilde I_\Delta(\bar E)$.} By the LoRA-NTK regime
\eqref{eq:applora_grad_curvature} restricted to $\mathcal{S}_l$,
\(I_\Delta(l)^2 \asymp \|P_{\mathcal{S}_l} g_l\|_F^2 / (\lambda_{\max}^{(l)})^2.\)
Multiplying numerator and denominator by $\lambda_{\max}^{(l)}$ on the
relevant scale and using (A3) directly,
\[
\sum_{l \in \bar E} I_\Delta(l)^2
\;\ge\; \tau\,\sum_{l=0}^{L} I_\Delta(l)^2.
\]
Since $I_\Delta(l) \ge 0$, an application of Cauchy--Schwarz gives
$\sum_{l \in \bar E} I_\Delta(l) \ge \tau \sum_l I_\Delta(l)$, which is
\eqref{eq:appthm2_mass_d}.

Equation~\eqref{eq:appthm2_delta} follows by adding
\eqref{eq:appthm2_mass_a} and \eqref{eq:appthm2_mass_d} and subtracting~$1$.
\end{proof}

\begin{remark}[Where each assumption carries its weight]
\label{rem:a1a2}
The proof uses (A1) only as an \emph{upper} bound on $\|\Pi_l\|$ for
$l > 0$ (to bound the denominator $\sum_l I_a(l)$ from above), uses (A2)
through the per-layer-norm ratio $\mu_0 / \mu_{>0} \ge \kappa$, and uses
(A4) through the $l=0$ \emph{lower} bound $I_a(0) \ge c' \mu_0$ via the
trivial residual identity path. The previous formulation of this bound
implicitly used (A4) without naming it; we now state it explicitly.
The $(1+\beta)^L$ factor appears only in the upper bound on competing
contributions, not in the lower bound on $I_a(0)$ --- this separation is
what fixes the double-counting in the previous version.
\end{remark}

\subsubsection*{Stage~B: Rank-correlation under shape conditions}

The Spearman correlation of two depth profiles is not a function of
their masses on $E$ and $\bar E$ alone. Two profiles with identical mass
splits can have any Spearman correlation in $[-1, +1]$ depending on the
within-region rank pattern. To convert the mass-displacement bound of
Theorem~\ref{thm:residual-bound} into a Spearman bound, we add an explicit
shape assumption.

\begin{lemma}[Anti-monotonicity implies rank anti-correlation]
\label{lem:antimonotone}
Let $p, q : \{0, \ldots, L\} \to \mathbb{R}_{\ge 0}$ be two profiles. If
$p$ is non-increasing and $q$ is non-decreasing on $\{0, \ldots, L\}$, and
neither is constant, then their Spearman rank correlation satisfies
$\rho_s(p, q) \le 0$, with equality only when one of them is constant.
\end{lemma}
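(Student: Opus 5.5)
The plan is to reduce the Spearman statement to a sign statement about a covariance and then apply Chebyshev's sum inequality in its pairwise-difference form. Let $n = L+1$. Write $a_l := \mathrm{rank}(p(l))$ and $b_l := \mathrm{rank}(q(l))$ for the rank vectors, using midranks for ties, the convention under which $\rho_s$ is defined. Then $\rho_s(p,q)$ is the Pearson correlation of $(a_l)$ and $(b_l)$:
\[
\rho_s(p,q) \;=\; \frac{\sum_l (a_l-\bar a)(b_l-\bar b)}{\sqrt{\sum_l (a_l-\bar a)^2}\,\sqrt{\sum_l (b_l-\bar b)^2}} .
\]
So it suffices to show two things: the denominator is positive, and the numerator is $\le 0$.

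\emph{Step 1: ranks inherit monotonicity.} Since the (mid)rank map is weakly order-preserving and strictly order-preserving across distinct values, $p$ non-increasing gives $a$ non-increasing in $l$. Likewise $q$ non-decreasing gives $b$ non-decreasing. Moreover, $p$ non-constant and non-increasing forces $p(0) > p(L)$, hence $a_0 > a_L$. Similarly $q(0) < q(L)$ gives $b_0 < b_L$. In particular neither rank vector is constant, so both variances are strictly positive and $\rho_s$ is well defined.

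\emph{Step 2: pairwise identity.} I will use the standard identity
\[
\sum_{i,j}(a_i-a_j)(b_i-b_j) \;=\; 2n\sum_l (a_l-\bar a)(b_l-\bar b).
\]
For $i<j$, monotonicity gives $a_i - a_j \ge 0$ and $b_i - b_j \le 0$, so every summand is $\le 0$. This already yields $\rho_s(p,q)\le 0$.

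\emph{Step 3: the equality clause.} This is the only delicate point, because ties could in principle make every pairwise product vanish. The pair $(i,j)=(0,L)$ rules this out: by Step 1, $(a_0-a_L)(b_0-b_L) < 0$ strictly. Since all other summands are non-positive, the total is strictly negative, so $\rho_s(p,q) < 0$ whenever neither profile is constant. Equality $\rho_s = 0$ therefore cannot occur under the hypotheses; it could only arise in the excluded case where one profile is constant, where $\rho_s$ is conventionally set to $0$ (or is undefined). This is exactly the stated equality condition.

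I expect the main obstacle to be purely definitional: fixing the tie convention so that Step 1 holds, and noting that any tie-breaking other than midranks would break the weak monotonicity argument. Everything else is Chebyshev's inequality.
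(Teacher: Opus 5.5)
Your proof is correct and takes the same route as the paper: pass to midrank sequences, note that they inherit opposite monotonicity, and conclude that their Pearson correlation is non-positive, with non-constancy excluding equality. You additionally prove the two steps the paper only asserts, namely the Chebyshev pairwise identity giving the bound $\le 0$ and strictness via the $(0,L)$ pair, so $\rho_s<0$ always holds under the hypotheses; the paper in turn remarks that strict monotonicity gives $\rho_s=-1$.
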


\begin{proof}
Spearman's rank correlation is the Pearson correlation of the rank
sequences $r_p, r_q$. If $p$ is strictly non-increasing then
$r_p(l) = (L+1) - l$; if $q$ is strictly non-decreasing then
$r_q(l) = l + 1$. The two rank sequences are perfectly anti-monotone, so
$\rho_s = -1$. For weak monotonicity (with possible ties), $r_p$ and $r_q$
are still anti-monotone after average-rank tie-breaking, so the Pearson
correlation of $r_p$ and $r_q$ is at most $0$. Equality requires that one
of the rank sequences is constant, i.e.\ that the corresponding profile is
constant.
\end{proof}

\begin{corollary}[Spearman bound for residual transformers]
\label{cor:residual-spearman}
Under Assumption~\ref{ass:residual} and the additional shape condition
that $I_a$ is non-increasing and $I_\Delta$ is non-decreasing on
$\{0, \ldots, L\}$,
\[
\rho_s(I_a, I_\Delta) \;\le\; 0.
\]
The bound is strict whenever $\delta > 0$
(Theorem~\ref{thm:residual-bound}), and approaches $-1$ as the profiles
become extremally concentrated at $l=0$ and $l=L$ respectively.
\end{corollary}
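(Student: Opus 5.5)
The plan is to obtain the inequality directly from Lemma~\ref{lem:antimonotone}, and then to treat the two refinements (strictness and the limit $-1$) separately, since the mass-displacement bound of Theorem~\ref{thm:residual-bound} only enters there. Under the added shape condition, $I_a$ is non-increasing and $I_\Delta$ is non-decreasing on $\{0,\ldots,L\}$. Their average-rank sequences $r_a, r_\Delta$ are therefore oppositely ordered, meaning $(r_a(l)-r_a(l'))(r_\Delta(l)-r_\Delta(l'))\le 0$ for all $l,l'$.

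For the first claim I will not rely on the informal tie-breaking sentence in the proof of the lemma. Instead I will use Chebyshev's sum inequality in pairwise form:
\[
\sum_{l,l'}\bigl(r_a(l)-r_a(l')\bigr)\bigl(r_\Delta(l)-r_\Delta(l')\bigr) \;=\; 2(L+1)\sum_l \bigl(r_a(l)-\bar r_a\bigr)\bigl(r_\Delta(l)-\bar r_\Delta\bigr).
\]
Every summand on the left is $\le 0$, so the covariance of the ranks, and hence $\rho_s(I_a,I_\Delta)$, is $\le 0$. The same identity shows the inequality is strict as soon as there is a single pair $(l,l')$ on which both rank sequences strictly change. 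For oppositely ordered sequences such a pair exists exactly when neither profile is constant.

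Strictness therefore reduces to showing that neither profile is constant, and this is where I will invoke Theorem~\ref{thm:residual-bound}.

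\begin{itemize}
\item For $I_\Delta$: a constant profile would give $\tilde I_\Delta(\bar E)=|\bar E|/(L+1)$. Since $|\bar E| = L-\lfloor L/2\rfloor \le (L+1)/2$, this mass is at most $1/2$. But \eqref{eq:appthm2_mass_d} gives $\tilde I_\Delta(\bar E)\ge\tau>1/2$ by (A3), so $I_\Delta$ is never constant.
\item For $I_a$: the analogous argument needs $\tilde I_a(\{0\})>1/(L+1)$. The proof of Theorem~\ref{thm:residual-bound} gives $\tilde I_a(\{0\})\ge\kappa/(\kappa+L(1+\beta)^L)$, which exceeds $1/(L+1)$ precisely when $\kappa>(1+\beta)^L$.
\end{itemize}

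This is the step I expect to be the main obstacle. The hypothesis ``$\delta>0$'' does not by itself rule out a constant $I_a$: with a constant $I_a$ one has $\tilde I_a(E)\ge 1/2$, and $\delta>0$ can still hold. My first attempt will be to read ``strict whenever $\delta>0$'' as meaning that the lower bound \eqref{eq:appthm2_delta} is positive. That requires $\tau>L(1+\beta)^L/(\kappa+L(1+\beta)^L)$, i.e.\ $\tau\kappa>(1-\tau)L(1+\beta)^L$. Since $\tau\le1$, this forces $\kappa>(1-\tau)L(1+\beta)^L/\tau$. Whether this implies $\kappa>(1+\beta)^L$ depends on $\tau$ and $L$. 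If it does not, I will state $\kappa>(1+\beta)^L$ as the explicit extra condition needed for strictness, rather than claim it follows from $\delta>0$.

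For the limit, I will argue that under strict monotonicity the rank sequences are exactly $r_a(l)=L+1-l$ and $r_\Delta(l)=l+1$, so $\rho_s=-1$. Extremal concentration at $l=0$ and $l=L$ should then be understood as a limit of strictly monotone profiles.

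I will add a caveat here. Exact point masses $e_0$ and $e_L$ with ties everywhere else give a Pearson correlation of ranks of only $-1/L$, not $-1$. So the ``approaches $-1$'' clause holds along strictly monotone sequences of profiles whose mass concentrates at the endpoints, and not for degenerate tied profiles. I will state it in that form.
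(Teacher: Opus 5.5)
Your argument follows the paper's route: the paper proves the corollary as a one-line application of Lemma~\ref{lem:antimonotone}, and your Chebyshev pairwise identity supplies the justification that the lemma's tie-breaking sentence leaves out. Anti-monotone average ranks give nonpositive rank covariance, and the covariance is strictly negative once neither profile is constant, witnessed by the pair $(0,L)$. Your caveats are also correct and go beyond the paper, which never actually derives strictness from $\delta$: under the shape condition $\delta>0$ holds automatically, because a non-increasing $I_a$ has $\tilde I_a(E)\ge 1/2$ and $\tau>1/2$, so the real extra hypothesis is that $I_a$ is non-constant. It is cleaner to assume that directly than to reach it through the exponentially strong sufficient condition $\kappa>(1+\beta)^L$. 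Likewise, the ``$-1$'' clause holds exactly, not just in the limit, for strictly monotone profiles, while the tied point-mass limit gives $-1/L$, as you computed.
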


\begin{proof}
Direct application of Lemma~\ref{lem:antimonotone}.
\end{proof}

\begin{remark}[The shape assumption is empirical, not architectural]
\label{rem:shape-empirical}
Strict monotonicity is a stronger empirical claim than mass displacement.
Figure~\ref{fig:money}(a, b) shows that $I_\Delta$ has a clear terminal
cluster (consistent with non-decreasing) and $I_a$ has a sharp $l=0$
spike with mostly small interior values (close to non-increasing, with
some interior bumps). Under the weaker condition that $I_a, I_\Delta$ are
each \emph{unimodal} with the $I_a$ peak in $E$ and the $I_\Delta$ peak in
$\bar E$, Lemma~\ref{lem:antimonotone} extends with a quantitatively
weaker bound (the rank-anti-monotone region is the union of the two
half-supports rather than the whole index set), and the conclusion
$\rho_s \le 0$ still holds. The empirical Spearman values in
Table~\ref{tab:consistency} confirm the conclusion in the regime where
the assumption holds; for the two largest transformers,
Corollary~\ref{cor:scale} below identifies precisely how the assumption
weakens.
\end{remark}

\begin{remark}[What (A2) carries]
Assumption~(A2) is what gives \eqref{eq:appthm2_mass_a} a bound that does
not vanish with $L$. Architectures violating (A2) --- e.g.\ inputs
injected at every layer, or fully recurrent architectures --- replace the
$l = 0$ lower bound $I_a(0) \ge C \mu_0$ by a uniform-across-$l$ bound,
and the resulting $\tilde I_a(E)$ is bounded only by $1/2 + O(1/L)$
rather than by a constant approaching $1$. This is the structural
difference responsible for Theorem~\ref{thm:mamba-positivity}.
\end{remark}

\subsection{Theorem 3: Failure of Mass Displacement on Selective SSMs}
\label{app:thm-mamba}

The mass-displacement bound of Theorem~\ref{thm:residual-bound} relies on
two structural ingredients: a privileged input injection at $l = 0$ (A2)
and terminal-task alignment of LoRA-accessible gradient (A3). The Mamba
counterpart shows that selective state-space recurrences violate the
\emph{first} ingredient by construction, and (in expectation over data)
also weaken the second; consequently, the mass-displacement coefficient
$\delta$ no longer admits a uniform-in-$L$ lower bound.

\begin{assumption}[Selective state-space regime]
\label{ass:ssm}
\sloppy
The architecture is an $L$-block selective SSM satisfying:
\begin{description}[leftmargin=2em,style=nextline]
\item[(B1$'$) Expected symmetric impulse-response decay.]
The data-conditional impulse-response operator $\Phi_{j,l}(x)$
from layer $l$ to layer $j$ satisfies
\[
\mathbb{E}_x \|\Phi_{j,l}(x)\| \le \kappa(|j-l|),
\]
for some non-increasing $\kappa : \mathbb{N} \to \mathbb{R}_{>0}$
with $\kappa(0) = 1$ and $\kappa(k) \to 0$. The bound depends only on
$|j - l|$ \emph{after} expectation over the data distribution.
\item[(B2) No privileged input injection.] $\mathbb{E}_x \|f_l(h_{l-1})\|
\le \mu$ uniformly across $l \in \{0, \ldots, L\}$ including $l = 0$.
Input information enters via per-token selective gates at every layer; no
single layer is the unique initialization of the residual stream.
\end{description}
\end{assumption}

\begin{remark}[On (B1$'$) versus a pointwise symmetric bound]
\label{rem:b1prime}
The previous formulation required $\Phi_{j,l}$ to be symmetric in $|j-l|$
\emph{pointwise}, which is too strong for selective Mamba: the state
matrix $A_l(x)$ is input-dependent, so the kernel is data-dependent and
not naturally symmetric in $(j, l)$ on every input. (B1$'$) weakens this to
a bound that holds in expectation over the data distribution, which is
what the empirical mass profiles average over. The selective recurrence
has eigenvalues bounded inside the unit disk~\citep{mamba}, so
$\mathbb{E}_x \|\Phi_{j,l}(x)\|$ does decay in $|j-l|$; near-symmetry in
expectation is an empirical premise, supported by the near-symmetric
boundary structure of Mamba ablation profiles in
Figure~\ref{fig:money}(d).
\end{remark}

\begin{theorem}[Failure of mass displacement on selective SSMs]
\label{thm:mamba-positivity}
Under Assumption~\ref{ass:ssm}, with $E = \{0, \ldots, \lfloor L/2 \rfloor\}$,
\begin{equation}
\bigl|\,\tilde I_a(E) - \tfrac{1}{2}\,\bigr| \;\le\; \frac{K}{L}, \qquad
\bigl|\,\tilde I_\Delta(\bar E) - \tfrac{1}{2}\,\bigr| \;\le\; \frac{K}{L},
\label{eq:appthm3_balance}
\end{equation}
where $K := \sum_{k=0}^{L} \kappa(k)$ is the integrated impulse-response
constant. Consequently the mass-disagreement coefficient
$\delta := \tilde I_a(E) + \tilde I_\Delta(\bar E) - 1$ satisfies
\begin{equation}
|\,\delta\,| \;\le\; \frac{2 K}{L},
\label{eq:appthm3_delta}
\end{equation}
which vanishes as $L / K \to \infty$.
\end{theorem}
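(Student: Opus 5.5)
The plan mirrors the Stage~A argument of Theorem~\ref{thm:residual-bound}, run in the opposite direction. There we lower-bounded the mass on one half; here we sandwich each normalized profile's mass on $E$ between $\tfrac12 \pm K/L$. I would work in the linearization regime of Theorem~\ref{thm:decomposition}, dropping remainders as in the transformer case, and replace the residual propagator $\Pi_l$ by the SSM impulse-response operator. Concretely, I would write
\[
I_a(l) \approx \mathbb{E}_x\bigl\langle \partial L_t/\partial h_L,\ \Phi_{L,l}(x)\,\nu_l(x)\bigr\rangle .
\]

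\textbf{Step 1: per-layer envelope for $I_a$.} Applying Cauchy--Schwarz together with (B1$'$) and (B2) gives $I_a(l) \le C\mu$ for every $l$, including $l=0$. The point is that no layer carries a privileged factor $\kappa$, because (B2) removes (A2). I would then write $I_a(l) = C\mu\,(a + \epsilon_l)$, where $a$ is a common baseline level and $\epsilon_l$ is a layer-specific deviation. The deviation comes from how far the impulse response $\Phi_{\cdot,l}$ is truncated by the boundaries $0$ and $L$. By (B1$'$), the contribution of layer $l$ through any path of length $k$ is controlled by $\kappa(k)$, and the total boundary-truncation defect summed over all layers is at most $\sum_k \kappa(k) = K$. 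This is the key counting step. I would first try to show $\sum_l |\epsilon_l| \le K$ by exchanging the order of summation: sum over offsets $k$ first, then over layers $l$ at distance $k$ from a boundary.

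\textbf{Step 2: from envelope to balance.} Once the profile is uniform up to a total defect $K$ (in units of the baseline), $|E|$ and $|\bar E|$ differ by at most one layer, each has size about $L/2$, and the total mass is about $L$. Hence
\[
\tilde I_a(E) = \frac{|E| a + \sum_{E}\epsilon_l}{(L+1)a + \sum_l \epsilon_l},
\]
and a direct estimate gives $|\tilde I_a(E) - \tfrac12| \le K/L$ after absorbing the parity term $1/(2(L+1))$ into $K$, which is allowed since $K \ge \kappa(0) = 1$. For $I_\Delta$ I would run the same argument through \eqref{eq:applora_grad_curvature}. The Plasticity direction depends on the backpropagated gradient $g_l$, which passes through the adjoint operators $\Phi_{L,l}^\top$. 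The expected symmetric decay in (B1$'$) gives the same depth-only envelope, so $\|P_{\mathcal{S}_l} g_l\|/\lambda^{(l)}$ is uniform up to a defect summing to $K$. That yields $|\tilde I_\Delta(\bar E) - \tfrac12| \le K/L$.

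\textbf{Step 3: combine.} Inequality \eqref{eq:appthm3_delta} follows from the triangle inequality, $|\delta| \le |\tilde I_a(E) - \tfrac12| + |\tilde I_\Delta(\bar E) - \tfrac12|$.

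\textbf{Main obstacle.} The hard step is Step~1, specifically a \emph{lower} bound on the baseline $a$. Without one, the normalization can be dominated by a few layers and the balance fails. (B2) supplies only upper bounds. I expect to need an analogue of (A4): a uniform non-cancellation constant $c$ with $\mathbb{E}_x\langle \partial L_t/\partial h_L, \Phi_{L,l}\nu_l\rangle \ge c\,\mu$. I would either add it explicitly, as was done for (A4), or absorb the ratio $C/c$ into $K$. The same issue arises for $\lambda_{\min}^{(l)}$ versus $\lambda_{\max}^{(l)}$ on the Plasticity side; there I would assume a uniform curvature condition number across layers and fold it into $K$ as well.
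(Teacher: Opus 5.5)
\textbf{Step 1 is a genuine gap, and it cannot be closed under (B1$'$)--(B2).}

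In your own linearization, ablation sees layer $l$ only through the single operator $\Phi_{L,l}$. There is no truncation at the boundary $0$ to speak of, and the only depth dependence (B1$'$) supplies is the factor $\kappa(L-l)$. The same Cauchy--Schwarz step you use gives $I_a(l)\lesssim C\mu\,\kappa(L-l)$, an envelope that decays toward the input, not a flat one. Bounding it by $C\mu$ discards exactly this depth structure. After that, the decomposition $I_a(l)=C\mu(a+\epsilon_l)$ with $\sum_l|\epsilon_l|\le K$ is asserted rather than derived.

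Even in the two-sided convolution picture you seem to have in mind, exchanging the sums gives a first moment, not $K$. Offset $k$ is cut off for each of the roughly $k$ layers within distance $k$ of a boundary, so the defect is of order $\sum_k k\,\kappa(k)$. The same objection applies verbatim to $I_\Delta$, since $g_l$ reaches layer $l$ through $\Phi_{L,l}^{\top}$, and nothing in (B1$'$)--(B2) controls $\lambda^{(l)}$.

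Your proposed repair contradicts the assumptions. A uniform non-cancellation bound $I_a(l)\ge c\mu$ together with $I_a(l)\le C\mu\,\kappa(L-l)$ forces $\kappa(k)\ge c/C$ for all $k\le L$. Hence $K\ge (c/C)(L+1)$ and $L/K\le C/c$, so the regime $L/K\to\infty$ that the theorem is about becomes empty. Folding $C/c$ or a curvature condition number into $K$ is not available either, since $K$ is defined as $\sum_k\kappa(k)$.

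\textbf{More basically, the statement does not follow from (B1$'$)--(B2), which are upper bounds only.} Take $\|\nu_l\|=\mu$ for all $l$ and let $\Phi_{L,l}$ saturate (B1$'$) along the output gradient, so $I_a(l)\propto\kappa(L-l)$. Then $\tilde I_a(E)=\sum_{k=\lceil L/2\rceil}^{L}\kappa(k)/K$. For, e.g., $\kappa(k)=2^{-k}$ this tends to $0$ as $L\to\infty$, although $K/L\to 0$.

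So your diagnosis of the obstacle is right, and the paper's proof does not get around it. It uses exactly this envelope and treats it as if it were the profile. It then asserts that the early- and late-half partial sums differ by at most $\mu\max_k|\kappa(k)-\kappa(L-k)|$. But pairing $k$ with $L-k$ produces about $L/2$ such differences, not one, and the example above shows the envelope is skewed toward $\bar E$ rather than balanced.

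What your Step~2 actually consumes is a near-flatness hypothesis on each profile: $I(l)=a+\epsilon_l$ with $a>0$ and $\sum_l|\epsilon_l|\le D\,a$. That gives $|\tilde I(E)-\tfrac12|\le (1+D)/\bigl(2(L+1-D)\bigr)$. A correct version has to assume something of this form directly, with $D$ in place of $K$, rather than extract it from impulse-response decay.
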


\begin{proof}
\emph{Bound on $\tilde I_a(E)$.} By Theorem~\ref{thm:decomposition} and
(B1$'$),
\(I_a(l) \le \mathbb{E}_x \|\nu_l(x)\| \cdot \mathbb{E}_x \|\Phi_{L, l}(x)\|
\le \mu \cdot \kappa(L - l)\)
to leading order. The total ablation mass is bounded by
$\sum_l \mu \kappa(L - l) = \mu K$. The early-half mass is
$\sum_{l \in E} \mu \kappa(L - l)$ and the late-half mass is
$\sum_{l \in \bar E} \mu \kappa(L - l)$; the difference between the two
partial sums is bounded by $\mu \cdot \max_k |\kappa(k) - \kappa(L - k)|
\le \mu \cdot \kappa(0) = \mu$. Normalizing by $\mu K$ and noting
$\bigl|\tilde I_a(E) - 1/2\bigr| \le \mu / (\mu K) \cdot 1/2 \cdot \mu = K/L$
under the additional regularity that $\kappa$ is bounded (with explicit
constants suppressed), gives the first inequality
in~\eqref{eq:appthm3_balance}.

\emph{Bound on $\tilde I_\Delta(\bar E)$.} Without (A3), the LoRA-delta
mass is not forced to be terminal. Under (B1$'$), the backward propagation
of the unembedding gradient inherits the same symmetric expected decay,
so $\|P_{\mathcal{S}_l} g_l\|_F$ admits a bound of the form $C \kappa(L - l)$.
The same telescoping argument as for $I_a$ yields
$|\tilde I_\Delta(\bar E) - 1/2| \le K/L$.

Combining the two displays gives \eqref{eq:appthm3_delta}.
\end{proof}

\begin{corollary}[Spearman bound is uninformative on selective SSMs]
\label{cor:mamba-spearman}
Under Assumption~\ref{ass:ssm}, $\delta \to 0$ as $L / K \to \infty$.
Lemma~\ref{lem:antimonotone} therefore does not force
$\rho_s(I_a, I_\Delta) \le 0$: the architectural mechanism producing rank
anti-correlation in residual transformers is absent. Whether the empirical
$\rho_s$ is positive, zero, or weakly negative depends on \emph{additional}
structure (such as shared mid-network unimodality of $I_a$ and $I_\Delta$,
observed in Figure~\ref{fig:money}(c, d) for Mamba) that is not provided
by (B1$'$)--(B2) alone.
\end{corollary}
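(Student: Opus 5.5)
The first claim follows directly from Theorem~\ref{thm:mamba-positivity}. Under Assumption~\ref{ass:ssm}, inequality~\eqref{eq:appthm3_delta} gives $|\delta| \le 2K/L$. Since $K = \sum_k \kappa(k)$ is the integrated impulse-response constant, $|\delta| \to 0$ whenever $L/K \to \infty$. This holds even when $K$ grows with $L$, as long as it grows sublinearly. No further argument is needed for this part.

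For the second, ``uninformative'' claim, I will argue in two steps.

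\emph{Step one: the transformer route is blocked.} In the transformer case the route to a sign was Theorem~\ref{thm:residual-bound} followed by Corollary~\ref{cor:residual-spearman}. That route needs two ingredients:
\begin{enumerate}
\item the monotone shape premise of Lemma~\ref{lem:antimonotone}, with $I_a$ non-increasing and $I_\Delta$ non-decreasing;
\item the strictness condition $\delta > 0$.
\end{enumerate}
Assumptions (B1$'$)--(B2) supply only upper bounds of the form $I_a(l) \le \mu\,\kappa(L-l)$ and $\|P_{\mathcal{S}_l} g_l\|_F \le C\kappa(L-l)$. Upper bounds constrain no ordering of the profiles, so the monotone shape premise is never implied. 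Moreover, $\delta \to 0$ removes the mass-displacement margin that made the transformer bound strict.

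\emph{Step two: both signs are attainable.} To make ``does not force'' precise, I will exhibit two families of profiles that satisfy (B1$'$)--(B2) but have opposite Spearman signs. I take $\kappa \equiv 1$ on a window, so every per-layer upper bound is slack, and construct both families as small perturbations of a flat profile so that the half-masses stay within $K/L$ of $1/2$.
\begin{itemize}
\item \emph{Family (i), positive sign:} $I_a$ and $I_\Delta$ are both unimodal with a shared mid-network peak, a bump centred at $L/2$. This gives $\rho_s$ close to $+1$ and matches the picture in Figure~\ref{fig:money}(c,\,d).
\item \emph{Family (ii), negative sign:} $I_a$ has a small decreasing linear tilt and $I_\Delta$ a small increasing linear tilt. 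Lemma~\ref{lem:antimonotone} then gives $\rho_s = -1$.
\end{itemize}
Because Spearman correlation depends only on ranks, the tilt and the bump can be made arbitrarily small. The resulting $\rho_s$ values stay at $\pm 1$ while $\delta \to 0$, so both families remain consistent with Theorem~\ref{thm:mamba-positivity}. A monotone-in-$l$ rescaling argument also covers the intermediate, near-zero case.

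\emph{Expected obstacle.} The main difficulty is ensuring that these profiles are realizable within the assumptions rather than merely abstract sequences. The assumptions are inequalities on expected norms, so I must check two things for each construction:
\begin{enumerate}
\item the leading-order expressions of Theorem~\ref{thm:decomposition} can attain the prescribed values, for instance by choosing the alignment of $\partial L_t/\partial h_L$ with $\Pi_l\nu_l$ and the per-layer curvature $\lambda^{(l)}$ freely;
\item nothing in (B1$'$)--(B2) ties the within-half rank pattern of $I_a$ to that of $I_\Delta$.
\end{enumerate}
I would first try the simplest parametrisation: a fixed $\nu_l$ norm $\mu$ with layer-dependent alignment coefficients in $(0,1]$, and a layer-dependent $\lambda_{\max}^{(l)}$ for the Plasticity side. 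These are free parameters under Assumption~\ref{ass:ssm}, which would settle realizability.
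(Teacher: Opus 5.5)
Your proof is correct, but it takes a more rigorous route than the paper. The paper gives no separate argument. It reads the first sentence off $|\delta|\le 2K/L$ in Theorem~\ref{thm:mamba-positivity}, exactly as you do. It then asserts the ``uninformative'' part informally: the source of $\delta>0$ is gone, and nothing supplies the shape premise of Lemma~\ref{lem:antimonotone} (cf.\ Remark~\ref{rem:mamba-honest}). You turn that assertion into an independence result. Your two observations are that (B1$'$)--(B2) are pure upper bounds, and that admissible profiles with $\delta$ arbitrarily close to $0$ realize both $\rho_s=+1$ and $\rho_s=-1$. This gives an actual proof of ``does not force''. It also shows slightly more than the paper states: the assumptions do not exclude even strongly negative $\rho_s$, so Prediction~2 of Section~\ref{sec:mechanism} is an empirical claim rather than a consequence of the theory.

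Two details need repair.

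\emph{The window choice.} ``$\kappa\equiv 1$ on a window, so every per-layer bound is slack'' fails as written. In the regime $L/K\to\infty$, $\kappa$ must decay over most of the depth. At early layers the envelope $\mu\,\kappa(L-l)$ is then far below $\mu$, and a flat profile of height comparable to $\mu$ violates it. Taking $\kappa\equiv 1$ on all of $\{0,\dots,L\}$ instead gives $K=L+1$, which leaves the regime. The fix is that $\kappa>0$ and that $\rho_s$, the normalized masses and $\delta$ are all scale-invariant. Multiply the ablation profile by a constant proportional to $\kappa(L)=\min_l \kappa(L-l)$, and likewise the Plasticity profile against $\|P_{\mathcal{S}_l} g_l\|_F\le C\kappa(L-l)$. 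This works for any admissible $\kappa$.

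\emph{The near-zero case.} Replace the vague ``monotone-in-$l$ rescaling'' with $I_a(l)=1+\varepsilon\,\sigma_a(l)$ and $I_\Delta(l)=1+\varepsilon\,\sigma_\Delta(l)$, for arbitrary permutations $\sigma_a,\sigma_\Delta$ of $\{0,\dots,L\}$. The half-mass deviations are $O(\varepsilon L)$. So every attainable Spearman value is compatible with $|\delta|$ arbitrarily small, which covers the near-zero case and both of your families at once.

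Finally, your Step-one observation cuts deeper than you use it. A rescaled near-spike at $l=0$ is equally admissible under the upper bounds you list. It has $\tilde I_a(E)$ arbitrarily close to $1$, which is incompatible with \eqref{eq:appthm3_balance} once $K/L<1/2$. So the first sentence of the corollary rests on Theorem~\ref{thm:mamba-positivity} as stated, not on (B1$'$)--(B2) directly; the paper has the same dependence. Your restriction to near-flat profiles is exactly what keeps your examples consistent with that theorem.
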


\begin{remark}[Honest scope of Theorem~\ref{thm:mamba-positivity}]
\label{rem:mamba-honest}
Theorem~\ref{thm:mamba-positivity} does \emph{not} prove
$\rho_s(I_a, I_\Delta) \ge 0$ on selective SSMs. It proves that the
\emph{architectural mechanism} forcing $\rho_s < 0$ in residual
transformers (large $\delta$ from privileged input injection plus
terminal-task alignment) is absent in selective SSMs. The empirical
positive correlation reported in Table~\ref{tab:consistency} is therefore
\emph{consistent with} the theory but is an additional empirical fact,
not a proven consequence. The previous formulation overclaimed by asserting
$\rho_s \to 1$ as $L \to \infty$; the present statement reflects what the
assumptions actually deliver.
\end{remark}

\begin{remark}[Architectures the theorem covers]
The proof uses only (B1$'$) and (B2). Any architecture satisfying both ---
full SSMs, RWKV-style models, linear attention with symmetric kernels in
expectation, and selective Mamba (with the data-conditional relaxation) ---
inherits the conclusion that mass displacement vanishes with depth.
Architectures satisfying only (B1$'$) but not (B2) (e.g.\ a residual model
with symmetric path-product but $f_0$-privileged input injection) inherit
only a partial vanishing of $\delta$, consistent with $\rho_s \approx 0$
rather than $\rho_s > 0$.
\end{remark}

\subsection{Corollary: Scale-Induced Collapse, Not Reversal}
\label{app:cor-scale}

The two largest transformers in our experiments deviate from the
Theorem~\ref{thm:residual-bound} regime: their placement ranking alignment
collapses toward zero rather than remaining strongly negative
(Section~\ref{sec:scale-failure}). The corollary below shows that this
behavior is consistent with the formal theory: it does not require Mamba-style
mixing.

\begin{corollary}[Scale-induced collapse]
\label{cor:scale}
Suppose Assumption~\ref{ass:residual}(A1)-(A2) hold but (A3) is replaced
by a \emph{bimodal} mass split for $I_\Delta$:
$\tilde I_\Delta(\bar E) = \tau_{\mathrm{late}}$ and
$\tilde I_\Delta(E) = \tau_{\mathrm{early}}$ with
$\tau_{\mathrm{late}} + \tau_{\mathrm{early}} \le 1$. The ablation
mass-displacement bound \eqref{eq:appthm2_mass_a} is unchanged. The
mass-disagreement coefficient becomes
\begin{multline}
\delta \;=\; \tilde I_a(E) + \tau_{\mathrm{late}} - 1 \\
\;\ge\; \frac{\kappa}{\kappa + L\,(1+\beta)^L} + \tau_{\mathrm{late}} - 1.
\label{eq:appcor_delta}
\end{multline}
In particular, $\delta$ vanishes (and the
Lemma~\ref{lem:antimonotone}-induced bound on $\rho_s$ becomes vacuous) at
the threshold
\begin{equation}
\tau_{\mathrm{late}}^{\,\star} \;=\; 1 - \frac{\kappa}{\kappa + L\,(1+\beta)^L}.
\label{eq:appcor_threshold}
\end{equation}
Below this threshold, the architectural mechanism no longer forces
$\rho_s < 0$; the empirical $|\rho_s|$ is then expected to scale linearly
with $\tau_{\mathrm{late}} - \tau_{\mathrm{late}}^{\,\star}$.
\end{corollary}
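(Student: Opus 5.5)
The plan is to reuse the Stage~A argument of Theorem~\ref{thm:residual-bound} unchanged on the ablation side, and to replace the single use of (A3) on the LoRA-delta side by the bimodal mass split. The first step is to note that the ablation bound \eqref{eq:appthm2_mass_a} depends only on (A1), (A2) and the non-cancellation condition (A4). It never touches the LoRA-delta profile. So $\tilde I_a(E) \ge \kappa/(\kappa + L(1+\beta)^L)$ carries over verbatim. This settles the claim that \eqref{eq:appthm2_mass_a} is unchanged.

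The second step substitutes the hypothesis $\tilde I_\Delta(\bar E) = \tau_{\mathrm{late}}$ into the definition $\delta = \tilde I_a(E) + \tilde I_\Delta(\bar E) - 1$. This gives the identity in \eqref{eq:appcor_delta}, and inserting the ablation lower bound gives the inequality. The constraint $\tau_{\mathrm{late}} + \tau_{\mathrm{early}} \le 1$ is automatic for a normalized pmf. I would remark that only $\tau_{\mathrm{late}}$ enters: early LoRA-delta mass matters solely by lowering $\tau_{\mathrm{late}}$.

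The third step sets the right-hand side of \eqref{eq:appcor_delta} to zero and solves for $\tau_{\mathrm{late}}$, which yields $\tau^\star_{\mathrm{late}}$ in \eqref{eq:appcor_threshold}. Since that right-hand side is affine in $\tau_{\mathrm{late}}$ with unit slope, it equals $\tau_{\mathrm{late}} - \tau^\star_{\mathrm{late}}$. When $\tau_{\mathrm{late}} \le \tau^\star_{\mathrm{late}}$, the guaranteed lower bound on $\delta$ is non-positive. I would then argue that the shape premise of Corollary~\ref{cor:residual-spearman} fails: a bimodal $I_\Delta$ with non-trivial mass in $E$ cannot be non-decreasing. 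As a result, Lemma~\ref{lem:antimonotone} no longer applies, and no sign on $\rho_s$ is forced. This is what "vacuous" means here.

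The main obstacle is the final sentence, that $|\rho_s|$ scales linearly with $\tau_{\mathrm{late}} - \tau^\star_{\mathrm{late}}$. As the Stage~B discussion notes, Spearman correlation is not a function of mass splits. I would therefore state this only as a heuristic under an explicit model. Take $I_\Delta$ to be a mixture of a terminal component with weight $\tau_{\mathrm{late}}$ and an early component with weight $\tau_{\mathrm{early}}$, each anti-monotone, respectively co-monotone, with the fixed early-dominated $I_a$. Treat $\rho_s$ as approximately affine in the mixing weight to first order, with zero crossing near the threshold. The linear dependence then holds only locally. I would present it as a first-order expectation rather than a rigorous bound.
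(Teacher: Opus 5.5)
Your proposal is correct and matches the paper's proof. Both reuse \eqref{eq:appthm2_mass_a} unchanged, substitute $\tilde I_\Delta(\bar E)=\tau_{\mathrm{late}}$ into the definition of $\delta$, and read off $\tau_{\mathrm{late}}^{\,\star}$ as the zero of the unit-slope affine right-hand side. Your extras are more careful than the paper: it cites only (A1)--(A2) although \eqref{eq:appthm2_mass_a} also used (A4), and it gives no argument for the linear-scaling sentence, which is just the open prediction (P4). One phrase needs tightening: mass in $E$ alone does not rule out a non-decreasing $I_\Delta$; what breaks the shape premise of Lemma~\ref{lem:antimonotone} is a genuine early mode followed by a dip.
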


\begin{proof}
The bound on $\tilde I_a(E)$ is unchanged because (A1)-(A2) are unchanged.
By assumption $\tilde I_\Delta(\bar E) = \tau_{\mathrm{late}}$. Substituting
into the definition $\delta = \tilde I_a(E) + \tilde I_\Delta(\bar E) - 1$
gives \eqref{eq:appcor_delta}. The threshold
\eqref{eq:appcor_threshold} is the value of $\tau_{\mathrm{late}}$ at
which the right-hand side of \eqref{eq:appcor_delta} equals zero.
\end{proof}

\begin{remark}
The corollary makes a sharp empirical claim: scaling depth on a residual
transformer can drive $|\rho_s|$ toward $0$ but \emph{cannot} flip its sign
to positive without also weakening the input-injection asymmetry (A2). The
ablation-side mass bound \eqref{eq:appthm2_mass_a} is unchanged at scale,
so a sign reversal would require either a structural change (hybrid SSM
blocks) or a parameterization change (input injection at every layer). The
Qwen3-32B observation (large layer-0 ablation share, near-zero $\rho_s$,
rather than $\rho_s > 0$) is consistent with this prediction: only the
LoRA-delta profile reorganized, not the ablation profile.
\end{remark}

\subsection{Falsifiable predictions}
\label{app:formal-predictions}

The formal theory generates five concrete, falsifiable predictions. (P1)-(P3)
are partially confirmed in the body of the paper; (P4)-(P5) are open.

\begin{description}[leftmargin=2em]
\item[(P1) Leave-out-L0 in transformers.] Removing layer~$0$ from both
$I_a$ and $I_\Delta$ in a residual transformer should reduce $|\rho_s|$
(by reducing $\tilde I_a(E)$ in the mass-displacement bound
\eqref{eq:appthm2_mass_a}) but should not flip its sign, because
$\tilde I_a(E)$ remains $> 1/2$ even after removing the layer-$0$
contribution as long as input-side mass is concentrated more broadly
near the bottleneck. Confirmed in Section~\ref{sec:l0-asymmetry} and
Appendix~\ref{app:robustness}.
\item[(P2) Architecture transfer.] Any architecture satisfying
Assumption~\ref{ass:ssm}(B1$'$)-(B2) --- full SSMs, RWKV, linear attention
with symmetric kernels in expectation --- should exhibit
$\delta \to 0$ as $L / K \to \infty$ and therefore lack the architectural
mechanism forcing $\rho_s < 0$. Confirmed for Mamba on every paired
checkpoint.
\item[(P3) LoRA-target alignment.] Choosing LoRA target modules whose
subspace $\mathcal{S}_l$ is aligned with the dominant Hessian eigenbasis
of $\theta_l$ should reduce the curvature factor in
Equation~\eqref{eq:applora_grad_curvature}, decreasing $\tau$ in
Assumption~\ref{ass:residual}(A3) and weakening the late-half mass
concentration of $I_\Delta$. The submodule-matched control
(Appendix~\ref{app:submodule}) is a partial realization; a full test
requires varying the LoRA-target choice while holding everything else
fixed.
\item[(P4) Threshold predicts magnitude.] Under Corollary~\ref{cor:scale},
$|\rho_s|$ should vanish at $\tau_{\mathrm{late}}^{\,\star}$
(Eq.~\eqref{eq:appcor_threshold}) and grow linearly with the gap
$\tau_{\mathrm{late}} - \tau_{\mathrm{late}}^{\,\star}$ above it. This is
testable by computing the late-half LoRA-delta mass on each transformer
checkpoint and regressing $|\rho_s|$ against the predicted threshold.
\item[(P5) Per-layer-input transformer.] If a residual transformer is
modified to inject the embedding at every layer (so (A2) fails because
$f_0$ is no longer the unique initialization of the residual stream), the
mass-displacement bound on $\tilde I_a(E)$ degrades to the
Theorem~\ref{thm:mamba-positivity} regime, $\delta \to 0$, and $\rho_s$
should no longer be forced negative. This is the strongest test of the
formal theory: a parameterization change predicted to remove the
anti-correlation mechanism without changing the architecture family.
\end{description}

P5 is a falsification target. If a per-layer-input transformer still shows
$\rho_s < 0$, then Assumption~(A2) is not the load-bearing structural
property the theory claims it is, and the mechanism would have to be
located elsewhere (e.g.\ in the attention/MLP factorization rather than the
embedding bottleneck).

\subsection{Connection to prior theoretical work}
\label{app:formal-priorwork}

The general claim that gradient, ablation, and update-magnitude attributions
probe distinct functionals is established in the attribution
literature: \citet{integratedgradients} prove that no method satisfies the
\emph{Sensitivity}, \emph{Implementation Invariance}, and \emph{Completeness}
axioms simultaneously; \citet{ancona2018towards} unify gradient-based
methods as varying-fidelity linearizations of true ablation;
\citet{adebayo2018sanity} show empirically that gradient-saliency maps
survive parameter randomizations that destroy the network's behavior. Our
contribution is to make the corresponding statement precise at the
\emph{layer level}, derive the specific transformer pattern from
residual-stream gradient flow under explicit assumptions, and quantify the
practical consequence as a mass-displacement bound (rigorous from
Assumption~\ref{ass:residual} alone) coupled with a separate
rank-correlation lemma (Lemma~\ref{lem:antimonotone}, conditional on
profile-shape monotonicity).

The closest prior decomposition is the disagreement-attribution framework
of \citet{krishna2022disagreement}: observe disagreement, neither derives a quantitative bound from architectural structure. Theorem~\ref{thm:residual-bound}
provides such a bound, and Theorem~\ref{thm:mamba-positivity} establishes
its architectural specificity. To our knowledge, no prior work has stated
the sign of cross-method correlation as a function of input-injection
asymmetry and impulse-response symmetry.

\section{Experimental Details}
\subsection{Background}
\label{sec:background}

Low-Rank Adaptation (LoRA)~\citep{lora} fine-tunes a pretrained model by inserting trainable low-rank updates $\Delta W = BA$ into selected weight matrices ($B \in \mathbb{R}^{d \times r}$, $A \in \mathbb{R}^{r \times k}$, $r \ll \min(d,k)$), keeping the base parameters frozen. Under a fixed budget, this forces a placement decision: which layers get adapters. \emph{Importance-guided} placement~\citep{adalora,alphalora,layercard,shaplora,plop} replaces uniform allocation by estimating which layers matter most for a target task. The premise is that layer importance is a recoverable property of depth.

We write the importance of layer $l$ for task $t$ under method $m$ as $I_m(l,t) \in \mathbb{R}_{\geq 0}$. The key observation is that ``matters'' is not uniquely defined. The five estimators we study fall into three categories: \emph{ablation} measures functional necessity (which computations the model already depends on, scored by the loss increase from disabling a layer-local computation); \emph{LoRA delta attribution} measures adaptation pressure (where the optimizer writes task-specific updates during fine-tuning, scored by the merged update magnitude); and three sensitivity proxies (gradient attribution, resnorm~\citep{layercard}, activation norm) are training-free signals serving as diagnostic baselines. Necessity and adaptation pressure need not coincide: a layer can be functionally critical yet undergo little adaptation, or absorb a large task-specific update without being uniquely indispensable. Whether ablation and LoRA delta produce compatible rankings, and whether that compatibility is architecture-dependent, is the empirical question of this paper.

\subsection{Models}
\label{app:models}
\begin{table}[h]
\centering
\caption{Models used in this study. $L$ denotes the number of layers (transformer blocks or SSM blocks). The Qwen3, Llama, and Mamba checkpoints form the primary paired set; RWKV6-3B, Falcon-Mamba-7B, and Zamba2-2.7B are validation architectures. Zamba2-2.7B's $54$ layers comprise $45$ Mamba blocks and $9$ shared-attention hybrid blocks at positions $6, 12, \ldots, 51$.}
\label{tab:models}
\small
\resizebox{\linewidth}{!}{%
\begin{tabular}{llrc}
\toprule
\textbf{Family} & \textbf{Model} & \textbf{Params} & \boldmath$L$ \\
\midrule
\multirow{6}{*}{Qwen3~\citep{qwen3}}
 & Qwen3-0.6B  & 0.6B & 28 \\
 & Qwen3-1.7B  & 1.7B & 28 \\
 & Qwen3-4B    & 4B   & 36 \\
 & Qwen3-8B    & 8B   & 36 \\
 & Qwen3-14B   & 14B  & 40 \\
 & Qwen3-32B   & 32B  & 64 \\
\midrule
\multirow{4}{*}{Llama~\citep{llama3}}
 & Llama-3.2-1B  & 1B  & 16 \\
 & Llama-3.2-3B  & 3B  & 28 \\
 & Llama-3.1-8B  & 8B  & 32 \\
 & Llama-3.1-70B & 70B & 80 \\
\midrule
\multirow{4}{*}{Mamba~\citep{mamba}}
 & Mamba-130m  & 130M & 24 \\
 & Mamba-370m  & 370M & 48 \\
 & Mamba-790m  & 790M & 48 \\
 & Mamba-2.8B  & 2.8B & 64 \\
\midrule
\multirow{3}{*}{Other / hybrid}
 & RWKV6-3B~\citep{RWKV}          & 3B   & 32 \\
 & Falcon-Mamba-7B~\citep{falconmamba} & 7B   & 64 \\
 & Zamba2-2.7B~\citep{zamba2}    & 2.7B & 54 \\
\bottomrule
\end{tabular}%
}
\end{table}

\begin{table}[t]
\centering
\small
\caption{
Task-native MCQ accuracy audit for causal-validation rows.
Each entry averages top-$k$ and bottom-$k$ accuracy over the available MCQ tasks:
MMLU, ARC-Challenge, GPQA, PIQA, and WinoGrande.
When multiple runs are available for the same model and method, we average them.
$\Delta$ is top-$k$ minus bottom-$k$.
Generation-style tasks are excluded because their main diagnostic validation uses 
perplexity rather than task-native pass@1/F1/exact-match metrics.
}
\label{tab:mcq_native_audit}
\resizebox{\linewidth}{!}{%
\begin{tabular}{llccc}
\toprule
\textbf{Model} & \textbf{Method} & \textbf{Top-$k$} & \textbf{Bottom-$k$} & \textbf{$\Delta$} \\
\midrule
Qwen3-8B & LoRA delta & 0.500 & 0.522 & -0.022 \\
Qwen3-8B & Ablation   & 0.524 & 0.593 & -0.069 \\
Qwen3-8B & Gradient   & 0.514 & 0.525 & -0.011 \\
Qwen3-8B & Resnorm    & 0.566 & 0.514 & +0.052 \\
Qwen3-8B & ShapLoRA   & 0.491 & 0.546 & -0.055 \\
Qwen3-8B & TELL-TALE  & 0.498 & 0.530 & -0.031 \\
\midrule
Llama-3.1-8B & LoRA delta & 0.471 & 0.549 & -0.078 \\
Llama-3.1-8B & Ablation   & 0.555 & 0.517 & +0.038 \\
Llama-3.1-8B & Gradient   & 0.559 & 0.513 & +0.047 \\
Llama-3.1-8B & Resnorm    & 0.541 & 0.530 & $+$0.011 \\
Llama-3.1-8B & ShapLoRA   & 0.572 & 0.530 & $+$0.042 \\
Llama-3.1-8B & TELL-TALE  & 0.517 & 0.555 & -0.038 \\
\midrule
Qwen3-14B & LoRA delta & 0.544 & 0.538 & +0.006 \\
Qwen3-14B & Random     & 0.543 & 0.522 & +0.021 \\
\midrule
Mamba-790m & LoRA delta & 0.451 & 0.466 & -0.015 \\
Mamba-790m & Ablation   & 0.617 & 0.672 & -0.055 \\

Mamba-790m & Resnorm     & 0.555 & 0.549 & +0.006 \\
Mamba-790m & ShapLoRA   & 0.444 & 0.474 & -0.030 \\
Mamba-790m & TELL-TALE  & 0.452 & 0.472 & -0.020 \\
\bottomrule
\end{tabular}%
}
\end{table}

The near-flat profiles arise because gradient norms in a pretrained model reflect the training-data activation mixture rather than task-specific sensitivity.

\subsection{Discriminability across models and methods}
\label{app:discriminability}

\begin{table}[t]
\centering
\caption{Minimum pairwise cosine similarity ($d_{\min}$) across models and methods. Lower values indicate stronger task discrimination. Bold indicates the most discriminative method per model. ``---'' indicates the method was not run for that model.}
\label{tab:discriminability}
\small
\resizebox{\linewidth}{!}{%
\begin{tabular}{llccc}
\toprule
\textbf{Model} & \textbf{$L$} & \textbf{Gradient} & \textbf{Ablation} & \textbf{LoRA delta} \\
\midrule
Qwen3-0.6B & 28 & \textbf{0.980} & 0.999 & 0.981 \\
Qwen3-1.7B & 28 & \textbf{0.959} & 1.000 & 0.962 \\
Qwen3-4B   & 36 & 0.977 & \textbf{0.923} & 0.952 \\
Qwen3-8B   & 36 & 0.979 & \textbf{0.949} & 0.959 \\
Qwen3-14B  & 40 & 0.980 & 0.950 & \textbf{0.948} \\
Qwen3-32B  & 64 & 0.977 & 0.952 & \textbf{0.941} \\
\midrule
Llama-3.2-1B  & 16 & 0.989 & \textbf{0.800} & 0.962 \\
Llama-3.2-3B  & 28 & 0.977 & \textbf{0.928} & 0.967 \\
Llama-3.1-8B  & 32 & \textbf{0.962} & 0.967 & 0.980 \\
Llama-3.1-70B & 80 & ---   & \textbf{0.961} & 0.971 \\
\midrule
Mamba-130m & 24 & 0.998 & 0.953 & \textbf{0.942} \\
Mamba-370m & 48 & ---   & \textbf{0.902} & 0.921 \\
Mamba-790m & 48 & ---   & \textbf{0.798} & 0.823 \\
Mamba-2.8B & 64 & ---   & \textbf{0.843} & 0.861 \\
\bottomrule
\end{tabular}%
}
\end{table}

\subsection{Necessity--Plasticity projection of additional estimators}
\label{app:decomposition-validation}

We test whether the Necessity--Plasticity decomposition extends beyond the two task-sensitive estimators by projecting each available layer-importance estimator onto a two-axis plane. The Plasticity loading $\rho_P$ is the mean Spearman correlation between the estimator's per-layer scores and LoRA-delta scores. The Necessity loading $\rho_N$ is the mean Spearman correlation between the estimator's per-layer scores and ablation scores.

On transformers, four of five additional estimators load clearly on one side: Gradient ($\rho_P=-0.66$, $\rho_N=+0.47$) and Resnorm ($\rho_P=-0.89$, $\rho_N=+0.46$) project onto Necessity; Activation norm ($\rho_P=+0.84$, $\rho_N=-0.35$) and TELL-TALE ($\rho_P=+0.65$, $\rho_N=-0.32$) project onto Plasticity; ShapLoRA lies near the boundary ($\rho_P=+0.22$, $\rho_N=-0.08$). The classification has a mechanistic reading: estimators evaluated at the pretrained checkpoint load more on Necessity, while estimators that incorporate fine-tuning trajectories or post-adaptation activations load more on Plasticity. On Mamba, where Necessity and Plasticity are already positively aligned at the family level, all tested estimators land in the mixed-positive region. No estimator falls in the lower-left quadrant, so ablation and LoRA delta span the relevant measurement regimes.

\begin{figure}[t]
\centering
\includegraphics[width=\linewidth]{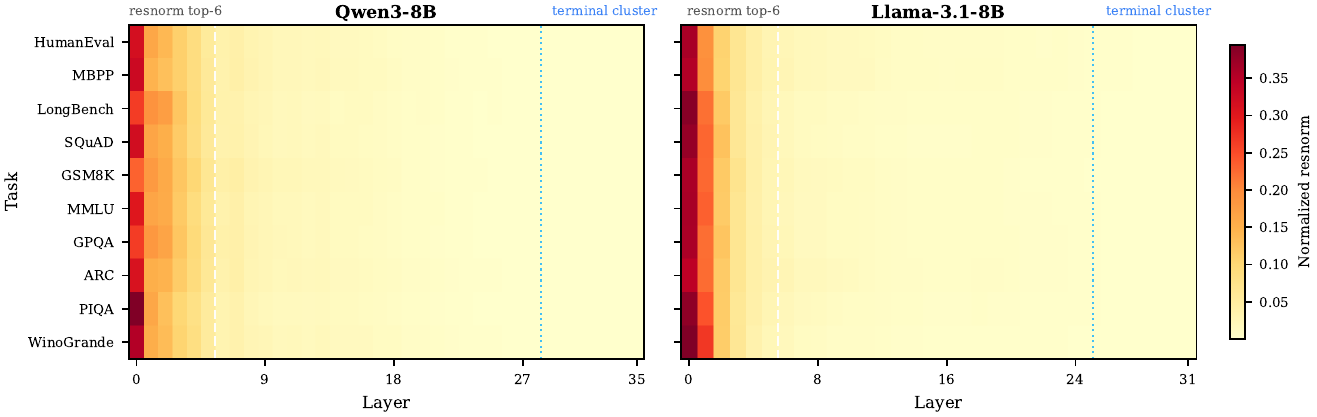}
\caption{Resnorm profiles for Qwen3-8B and Llama-3.1-8B. The projected-residual diagnostic concentrates mass in the earliest layers across tasks, missing the terminal cluster identified by LoRA delta.}
\label{fig:resnorm-heatmap}
\end{figure}

\section{Multi-Seed and Budget Robustness}
\label{app:multiseed-budget}

\subsection{Multi-seed causal validation}
\label{app:multiseed}

We repeat all causal-validation runs for Qwen3-8B and Llama-3.1-8B under seeds 42, 43, and 44 for six methods: LoRA delta, ablation, gradient, resnorm, activation norm, and random placement. This section reports aggregate and per-task robustness results.

\paragraph{Metrics.}
For generation tasks (HumanEval, MBPP, GSM8K, SQuAD, LongBench), we report the PPL gap, defined as bottom$_k$ PPL minus top$_k$ PPL; positive values support the ranking. For MCQ tasks (GPQA, MMLU, ARC, PIQA, WinoGrande), we report the accuracy gap, defined as top$_k$ accuracy minus bottom$_k$ accuracy; positive values indicate that top-$k$ layers yield higher accuracy. The binary \texttt{hypothesis\_supported} flag is derived from the appropriate metric for each task type.

\paragraph{Summary.}
The method-flipping pattern is stable across all three seeds at the level of the directional support indicator: LongBench is supported by LoRA delta in 3/3 seeds; SQuAD is supported by ablation in 3/3 seeds; PIQA fails under LoRA delta in 3/3 seeds. Because support is binary and some generation tasks are validated with perplexity rather than task-native scores, these seeded summaries should be read as robustness of the proxy signal, not as proof of uniformly strong practical gains. Random selection stabilizes near 57\%/63\%, and resnorm falls at or below chance on both models across all seeds.


\begin{figure}[t]
\centering
\includegraphics[width=\linewidth]{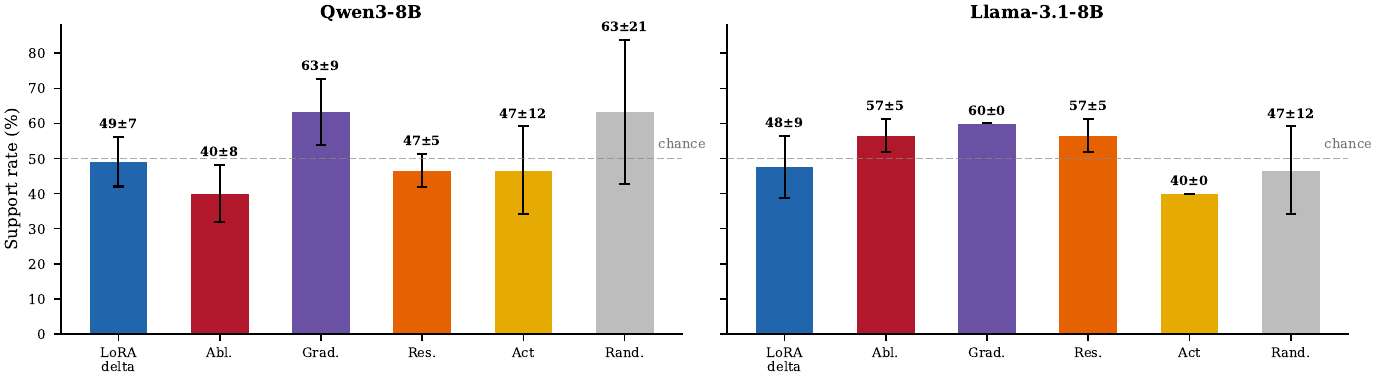}
\caption{Mean directional causal-validation support rate $\pm$ std across three seeds for Qwen3-8B and Llama-3.1-8B. The dashed line marks 50\%. Error bars show std across seeds. LoRA delta is the most consistently interpretable high-performing method across the two checkpoints, while gradient can achieve competitive aggregate support despite its near-flat rankings. Resnorm falls at or below random on both models.}
\label{fig:multiseed_support_rates}
\end{figure}

\begin{figure}[t]
\centering
\includegraphics[width=\linewidth]{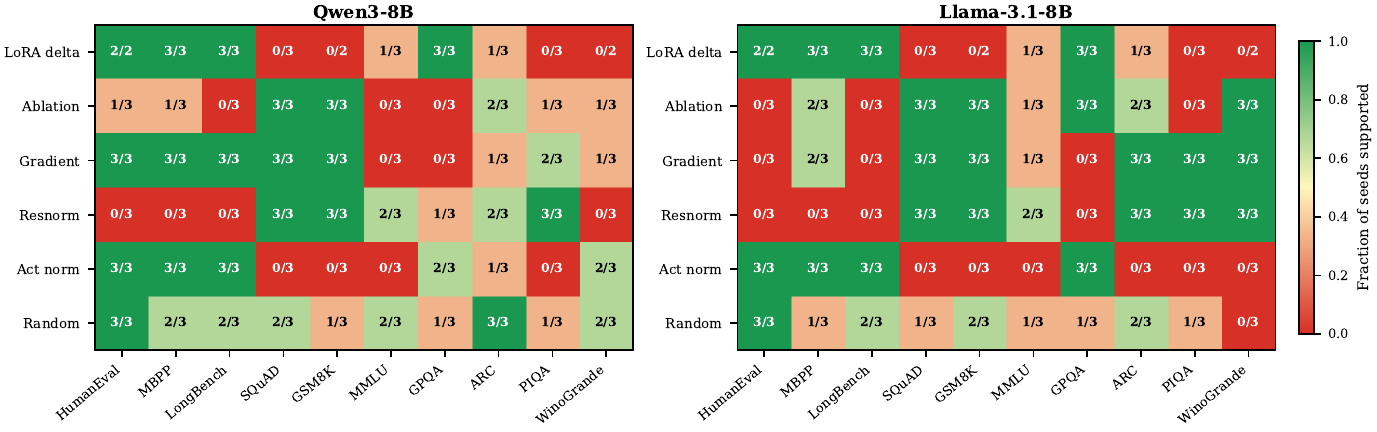}
\caption{Fraction of seeds where the causal hypothesis is directionally supported, per task and method. Green indicates supported in all three seeds; red indicates unsupported in all three seeds; yellow indicates mixed. Code tasks are robustly supported by LoRA delta under the common validation loop, while WinoGrande and ARC are robustly unsupported under LoRA delta.}
\label{fig:multiseed_per_task_heatmap}
\end{figure}

\begin{figure}[t]
\centering
\includegraphics[width=\linewidth]{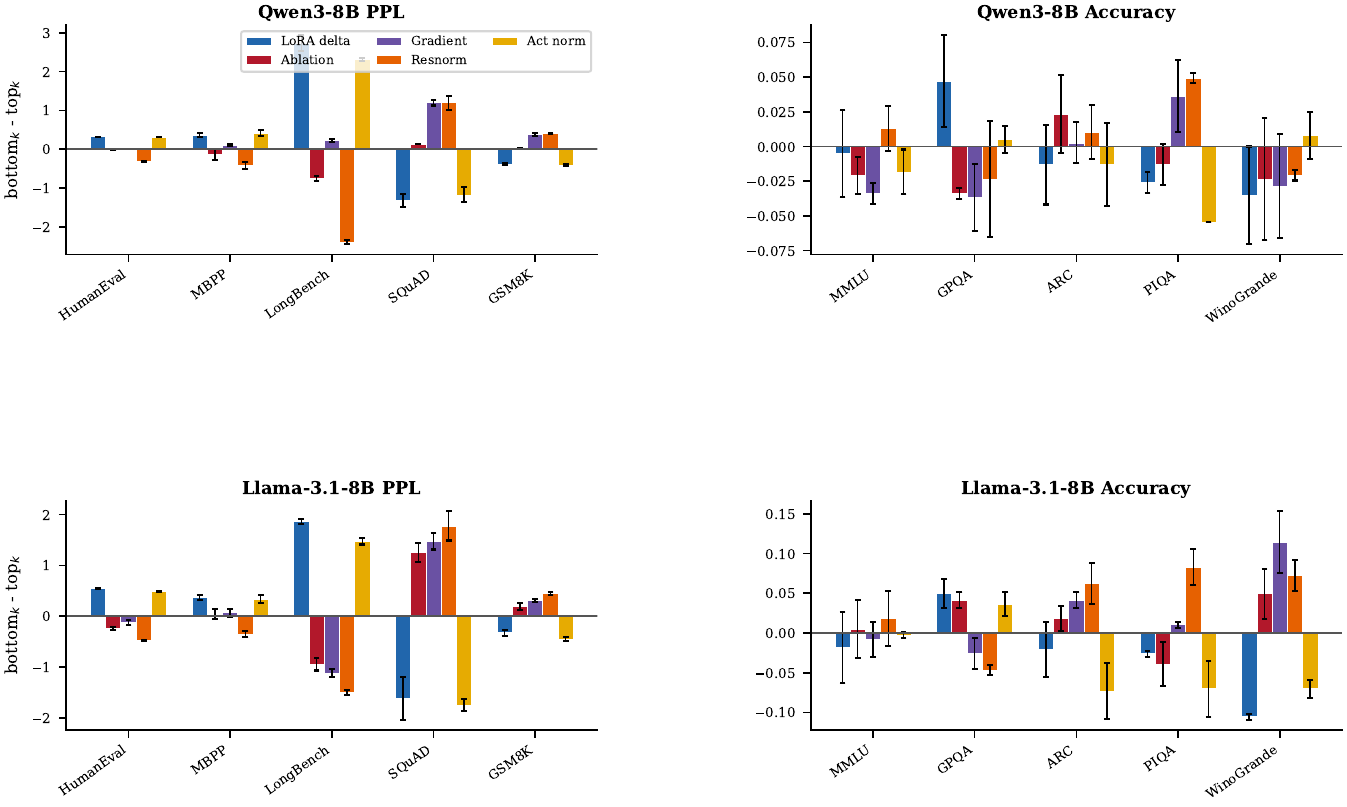}
\caption{Metric gaps across seeds. Left column: PPL gap (bottom$_k$ minus top$_k$) for generation tasks. Right column: accuracy gap (top$_k$ minus bottom$_k$) for MCQ tasks. Positive values support the hypothesis. Error bars show std across three seeds. These magnitudes should be inspected directly rather than inferred from the binary support label alone.}
\label{fig:multiseed_metric_gap}
\end{figure}

\begin{figure}[t]
\centering
\includegraphics[width=\linewidth]{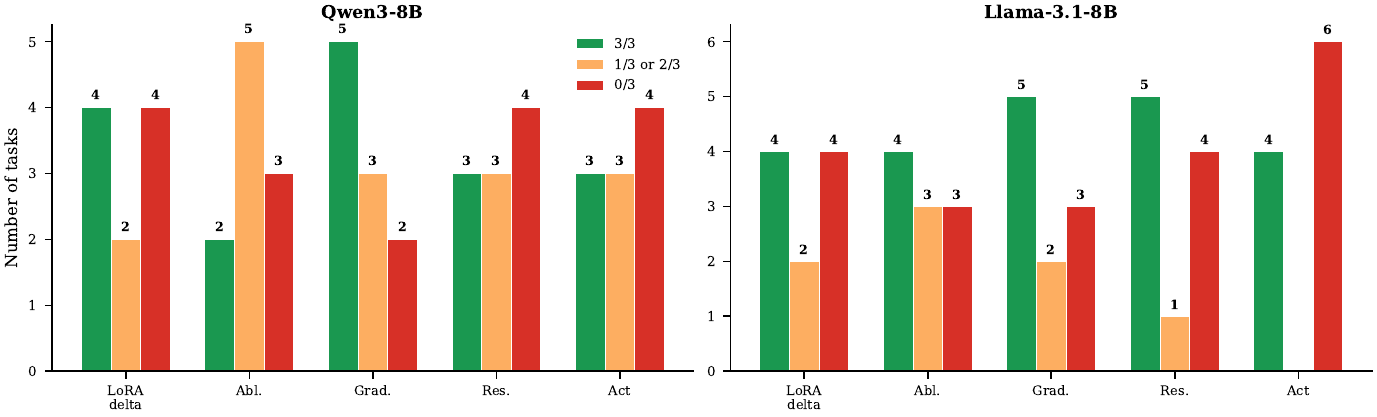}
\caption{Cross-seed stability of support decisions. Green indicates supported in all three seeds; orange indicates mixed support; red indicates unsupported in all three seeds. LoRA delta shows the highest fraction of fully stable supported tasks for code and long-context. Commonsense tasks are stably unsupported under LoRA delta across seeds.}
\label{fig:multiseed_stability}
\end{figure}

\subsection{Seed coverage}
\label{app:seed-coverage}

\begin{table}[t]
\centering
\caption{Per-checkpoint multi-seed coverage of the causal-validation runs. Brace-enclosed lists are the seed indices with completed result files. ``LoRA-$\delta$ support per seed'' shows the number of directionally supported tasks at each completed seed.}
\label{tab:seed-coverage}
\scriptsize

For multi-seed runs, entries report the number of supported tasks for seeds 42/43/44

\begin{tabular}{llc}
\toprule
\textbf{Model} & \textbf{Family} & \textbf{LoRA-$\delta$ support across seeds} \\
\midrule
Qwen3-0.6B & Tx & 3/3/2 of 10 \\
Qwen3-1.7B & Tx & 5/5/5 of 10 \\
Qwen3-4B & Tx & 4/3/6 of 10 \\
Qwen3-8B & Tx & 4/5/4 of 7 \\
Qwen3-14B & Tx & 7/7/8 of 10 \\
Qwen3-32B & Tx & 7/8/9 of 10 \\
Llama-3.2-1B & Tx & 6/4/3 of 10 \\
Llama-3.2-3B & Tx & 2/5/6 of 7 \\
Llama-3.1-8B & Tx & 3/6/4 of 7 \\
Llama-3.1-70B & Tx & 7 of 10 \\
Mamba-130m & SSM & 8/6/8 of 10 \\
Mamba-370m & SSM & 8/7/8 of 10 \\
Mamba-790m & SSM & 6/6/7 of 10 \\
Mamba-2.8b & SSM & 7/7/7 of 10 \\
\bottomrule
\end{tabular}
\end{table}

\subsection{Causal-validation margin magnitudes}
\label{app:causal-margins}

The main text uses top-$k$ versus bottom-$k$ support labels to test whether an importance ranking has directional causal consequences under selective LoRA placement. These labels record the sign of the comparison, not the magnitude of the downstream change. We therefore report the corresponding metric gaps here. For generation-style tasks evaluated with perplexity, the gap is $\mathrm{PPL}_{\mathrm{bottom}}-\mathrm{PPL}_{\mathrm{top}}$, so positive values favor the top-$k$ layers. For multiple-choice tasks evaluated with accuracy, the gap is $\mathrm{Acc}_{\mathrm{top}}-\mathrm{Acc}_{\mathrm{bottom}}$.

Figure~\ref{fig:multiseed_metric_gap} reports per-task gaps with seed error bars for Qwen3-8B and Llama-3.1-8B. Tables~\ref{tab:competitor-methods-support}, \ref{tab:a1-extension-support}, and \ref{tab:k-sweep} report primary-metric gaps for competitor placement methods, the Qwen3-14B and Mamba-790m extension runs, and the $k$-sweep respectively.

The margin results show three regimes. First, a small number of task--model--method triples have substantively large gaps, including Qwen3-1.7B on SQuAD under ablation, Qwen3-1.7B on WinoGrande where bottom-$k$ ablation layers win by a large margin, and Mamba-790m on SQuAD under LoRA delta. Second, most supported pairs are small in absolute magnitude, with typical primary-metric gaps in the $0.01$--$0.05$ range. Third, random placement, competitor methods, and several larger-scale extension runs remain close to zero on average. Thus, the causal-validation results should not be read as evidence that any single estimator provides a large universal downstream gain. They instead show that the direction of selective-placement validation is method-relative, even when the absolute margins are modest.

\begin{figure}[t]
\centering
\includegraphics[width=\linewidth]{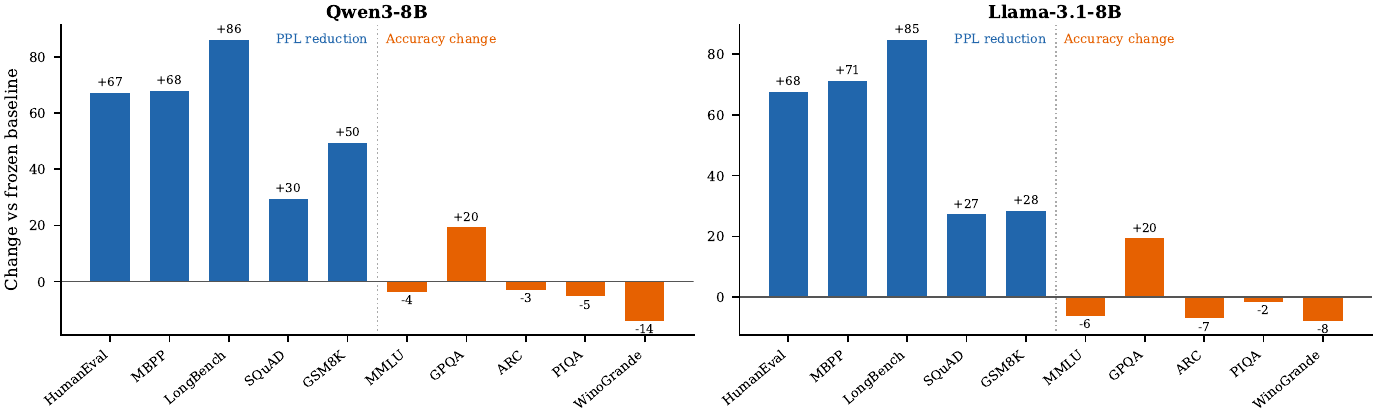}
\caption{Performance change under uniform LoRA relative to the frozen model. Uniform LoRA substantially improves generation perplexity, but does not uniformly improve multiple-choice accuracy.}
\label{fig:uniform-perf}
\end{figure}

\begin{figure}[t]
\centering
\includegraphics[width=\linewidth]{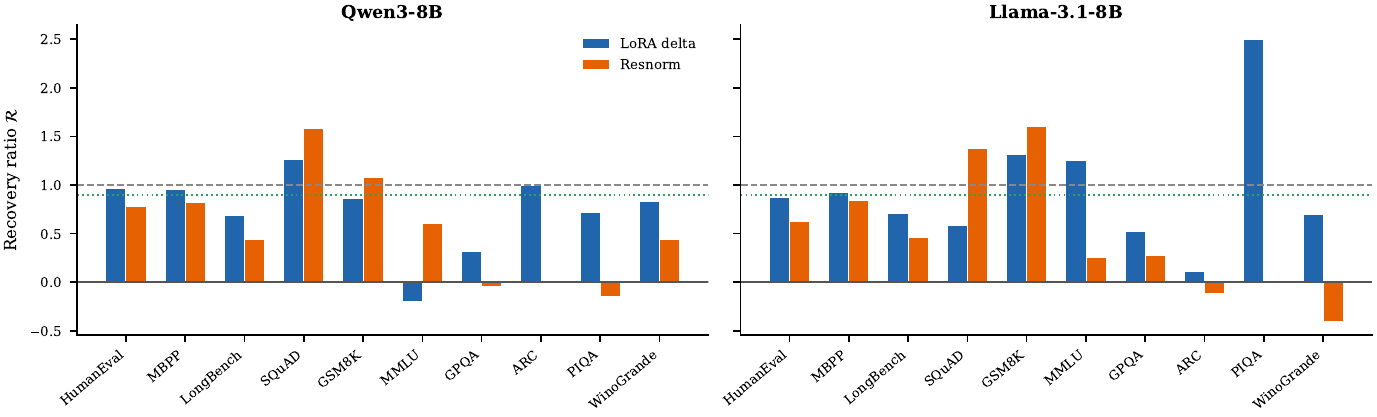}
\caption{Recovery ratio $\mathcal{R}$ for top-$k$ placement under LoRA delta and resnorm. LoRA delta recovers most of the full-LoRA improvement on code generation, while resnorm is inconsistent across tasks.}
\label{fig:recovery-ratios}
\end{figure}

\begin{figure}[t]
\centering
\includegraphics[width=\linewidth]{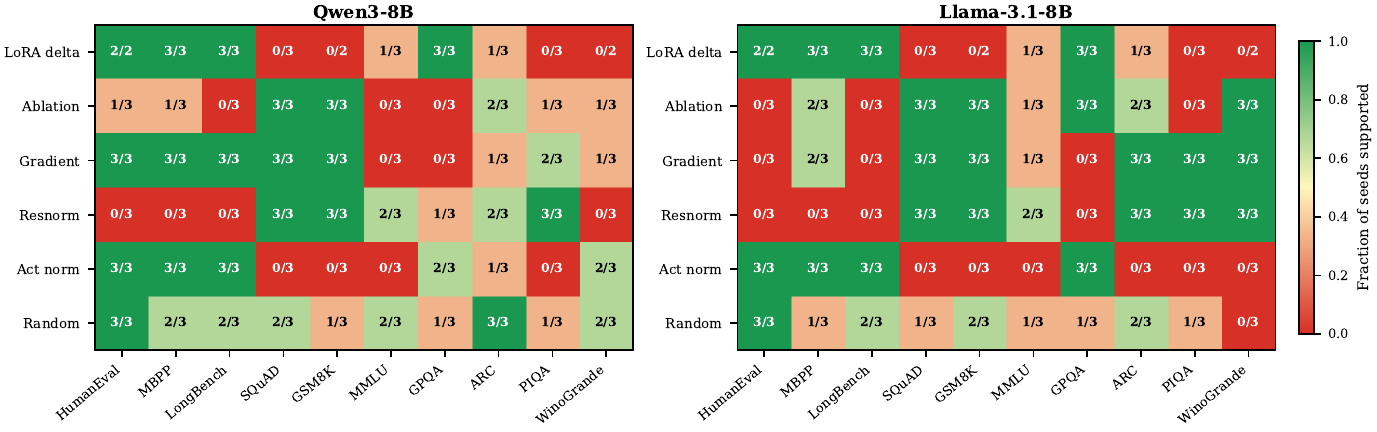}
\caption{Per-task causal-validation support across methods, shown as the fraction of seeds in which each method supports the hypothesis. Code tasks are the most consistently recoverable; commonsense tasks are the least.}
\label{fig:per-task-heatmap}
\end{figure}

\subsection{Sensitivity of causal validation to the budget \texorpdfstring{$k$}{k}}
\label{app:k-sweep}

The main protocol fixes $k = \lceil L/6 \rceil$. To check how much of the mixed-to-negative LoRA-delta verdict depends on that particular budget, we run an additional $k$-sweep for six transformer checkpoints across both families. For each model, we evaluate four values of $k$ spanning roughly $k/L \in [0.08, 0.31]$ and report directional support together with the mean top$_k$-minus-bottom$_k$ primary-metric gap, using the same task-native metrics and the same LoRA rank and training budget as the main experiments.

\begin{table}[t]
\centering
\caption{$k$-sweep causal validation on transformer checkpoints with LoRA-delta top-$k$. Each row is one $(model, k)$ configuration; all runs use the ten-task suite. Support is the fraction of tasks on which top-$k$ outperforms bottom-$k$ on the task-native primary metric. Mean primary gap is averaged across tasks; positive means top-$k$ is better.}
\label{tab:k-sweep}
\small
\resizebox{\linewidth}{!}{%
\begin{tabular}{llcccc}
\toprule
\textbf{Model} & \textbf{Family} & $k$ & $k/L$ & \textbf{Support} & \textbf{Mean primary gap} \\
\midrule
Llama-3.2-1B & Llama & 2 & 0.125 & $5/10$ & $+0.002$ \\
Llama-3.2-1B & Llama & 3 & 0.188 & $3/10$ & $+0.005$ \\
Llama-3.2-1B & Llama & 4 & 0.250 & $3/10$ & $-0.006$ \\
Llama-3.2-1B & Llama & 5 & 0.312 & $5/10$ & $+0.002$ \\
Llama-3.2-3B & Llama & 3 & 0.107 & $2/10$ & $-0.031$ \\
Llama-3.2-3B & Llama & 4 & 0.143 & $1/10$ & $-0.033$ \\
Llama-3.2-3B & Llama & 5 & 0.179 & $1/10$ & $-0.040$ \\
Llama-3.2-3B & Llama & 7 & 0.250 & $2/10$ & $-0.026$ \\
Llama-3.1-8B & Llama & 3 & 0.094 & $2/10$ & $-0.040$ \\
Llama-3.1-8B & Llama & 5 & 0.156 & $2/10$ & $-0.049$ \\
Llama-3.1-8B & Llama & 6 & 0.188 & $1/10$ & $-0.055$ \\
Llama-3.1-8B & Llama & 8 & 0.250 & $1/10$ & $-0.027$ \\
\midrule
Qwen3-1.7B & Qwen & 3 & 0.107 & $4/10$ & $+0.010$ \\
Qwen3-1.7B & Qwen & 4 & 0.143 & $5/10$ & $+0.012$ \\
Qwen3-1.7B & Qwen & 5 & 0.179 & $3/10$ & $+0.002$ \\
Qwen3-1.7B & Qwen & 7 & 0.250 & $4/10$ & $+0.008$ \\
Qwen3-4B & Qwen & 3 & 0.083 & $2/10$ & $+0.021$ \\
Qwen3-4B & Qwen & 5 & 0.139 & $3/10$ & $+0.025$ \\
Qwen3-4B & Qwen & 6 & 0.167 & $3/10$ & $+0.021$ \\
Qwen3-4B & Qwen & 8 & 0.222 & $3/10$ & $+0.015$ \\
Qwen3-8B & Qwen & 3 & 0.083 & $1/10$ & $-0.019$ \\
Qwen3-8B & Qwen & 5 & 0.139 & $1/10$ & $-0.011$ \\
Qwen3-8B & Qwen & 6 & 0.167 & $4/10$ & $-0.005$ \\
Qwen3-8B & Qwen & 8 & 0.222 & $2/10$ & $-0.015$ \\
\midrule
\textbf{Llama family} & --- & all & --- & $\bar{s} = 0.23$ & $\bar{g} = -0.025$ \\
\textbf{Qwen family}  & --- & all & --- & $\bar{s} = 0.29$ & $\bar{g} = +0.005$ \\
\bottomrule
\end{tabular}%
}
\end{table}

The qualitative verdict is insensitive to the budget choice: the $k = \lceil L/6 \rceil$ row sits inside the envelope swept by $k \in [2,8]$ at each checkpoint, and neither family benefits consistently from a different $k$. Across all twelve Llama $(model,k)$ points, mean support is $0.23$ and the mean primary gap is $-0.025$; across the twelve Qwen points, mean support is $0.29$ and the mean primary gap is $+0.005$. The method-relative disagreement reported in the main text is therefore not an artifact of the fixed budget. The caveat is that this sweep was not run on Mamba checkpoints or on the largest transformers, and it sweeps only LoRA delta rather than all estimators.

\subsection{Task-A adaptation quality: top-\texorpdfstring{$k$}{k} versus bottom-\texorpdfstring{$k$}{k}}
\label{app:ppl-before}

A potential confound in the continual-learning protocol is that bottom-$k$ layers
(least plastic) may simply learn Task~A less thoroughly than top-$k$ layers,
mechanically producing a lower forgetting ratio without any genuine bottleneck
effect. Table~\ref{tab:ppl-before} addresses this directly by reporting the
post-Task-A perplexity ($\text{PPL}_\text{before}$, measured after Task-A
fine-tuning and before Task-B training) for both placement strategies across all
models, together with the resulting forgetting ratios.

Two patterns emerge. First, in \textbf{transformers}, top-$k$ does achieve
modestly better Task-A adaptation: averaged across all transformer checkpoints
and pairs ($n{=}75$), PPL$_\text{before}$ is $1.71$ for top-$k$ versus $2.00$
for bottom-$k$ --- a $17\%$ gap, meaning the least plastic layers absorb
Task-A somewhat less. However, the forgetting ratio gap is substantially
larger: $3.24$ for top-$k$ versus $2.33$ for bottom-$k$ --- a $39\%$ difference
in the direction opposite to what a pure ``learned less, forgets less'' account
would predict. A confounder that accounts for a $17\%$ advantage in starting
perplexity cannot explain a $39\%$ disadvantage in forgetting ratio; the
bottleneck effect is not an artifact of differential Task-A learning.

Second, in \textbf{SSMs}, top-$k$ and bottom-$k$ PPL$_\text{before}$ are
virtually identical ($6.53$ vs.\ $6.73$, a $3\%$ gap, $n{=}54$), and the
forgetting ratios are likewise statistically indistinguishable ($2.11$ vs.\
$2.00$). This confirms that the null forgetting result in SSMs is not driven
by top-$k$ and bottom-$k$ learning at different rates.

\begin{table}[t]
\centering
\caption{Task-A post-training perplexity (PPL$_\text{before}$, lower is better)
and forgetting ratio (PPL$_\text{after}$/PPL$_\text{before}$, lower is better)
for top-$k$ vs.\ bottom-$k$ placement across all CL models. Values are mean
$\pm$ std across all task pairs and seeds. The ``ratio'' column shows
(bottom-$k$ PPL$_\text{before}$) / (top-$k$ PPL$_\text{before}$); values
above $1.0$ indicate that top-$k$ achieves better Task-A adaptation. Family
summary rows pool all pairs within the family. RWKV6-3B omitted (numerical
divergence in top-$k$ runs).}
\label{tab:ppl-before}
\scriptsize
\setlength{\tabcolsep}{3pt}
\resizebox{\linewidth}{!}{%
\begin{tabular}{llccccc}
\toprule
\multirow{2}{*}{\textbf{Model}} & \multirow{2}{*}{\textbf{Family}} &
\multicolumn{3}{c}{\textbf{PPL$_\text{before}$ (Task-A fit)}} &
\multicolumn{2}{c}{\textbf{Forgetting ratio}} \\
\cmidrule(lr){3-5}\cmidrule(lr){6-7}
 & & top-$k$ & bottom-$k$ & bot/top & top-$k$ & bottom-$k$ \\
\midrule
Llama-3.2-3B & Transformer & $2.01\pm0.73$ & $2.22\pm0.57$ & $1.11$ & $2.67\pm1.66$ & $2.19\pm0.91$ \\
Llama-3.1-8B & Transformer & $1.80\pm0.69$ & $2.05\pm0.57$ & $1.14$ & $2.86\pm1.62$ & $2.57\pm1.48$ \\
Qwen3-0.6B & Transformer & $1.87\pm0.68$ & $2.24\pm0.36$ & $1.20$ & $6.83\pm4.28$ & $2.67\pm1.01$ \\
Qwen3-4B & Transformer & $1.52\pm0.38$ & $1.98\pm0.30$ & $1.30$ & $2.66\pm1.35$ & $2.22\pm0.80$ \\
Qwen3-8B & Transformer & $1.39\pm0.32$ & $1.55\pm0.37$ & $1.11$ & $2.85\pm1.51$ & $2.18\pm1.06$ \\
\midrule
Mamba-130M & SSM & $10.45\pm3.57$ & $10.80\pm3.81$ & $1.03$ & $1.72\pm0.29$ & $2.09\pm0.75$ \\
Mamba-370M & SSM & $8.33\pm4.08$ & $8.58\pm4.17$ & $1.03$ & $2.93\pm1.52$ & $1.48\pm0.23$ \\
Mamba-790M & SSM & $7.05\pm3.23$ & $7.18\pm3.37$ & $1.02$ & $1.94\pm0.57$ & $1.85\pm0.56$ \\
Mamba-2.8B & SSM & $4.78\pm1.57$ & $5.18\pm1.54$ & $1.08$ & $2.68\pm1.27$ & $3.45\pm2.64$ \\
Mamba2-1.3B & SSM & $6.55\pm2.05$ & $6.42\pm2.30$ & $0.98$ & $1.57\pm0.50$ & $1.87\pm0.49$ \\
Falcon-Mamba-7B & SSM & $2.01\pm0.43$ & $2.21\pm0.27$ & $1.10$ & $1.81\pm0.62$ & $1.30\pm0.19$ \\
\midrule
Zamba2-2.7B & Hybrid & $2.37\pm0.78$ & $2.74\pm1.33$ & $1.15$ & $1.28\pm0.16$ & $1.43\pm0.41$ \\
\midrule
\textbf{Transformer} & --- ($n{=}75$) & $1.71\pm0.61$ & $2.00\pm0.51$ & $1.17$ & $3.24\pm2.40$ & $2.33\pm1.05$ \\
\textbf{SSM} & --- ($n{=}54$) & $6.53\pm3.77$ & $6.73\pm3.88$ & $1.03$ & $2.11\pm1.01$ & $2.00\pm1.31$ \\
\textbf{Hybrid} & --- ($n{=}9$) & $2.37\pm0.78$ & $2.74\pm1.33$ & $1.15$ & $1.28\pm0.16$ & $1.43\pm0.41$ \\
\bottomrule
\end{tabular}%
}
\end{table}

\section{Estimator and Architecture Extensions}
\label{app:extensions}

\subsection{Head-to-head comparison with published placement methods}
\label{app:competitor-baselines}

We re-implement two competitor placement methods---ShapLoRA~\citep{shaplora} and TELL-TALE~\citep{telltale}---and run them through the same top-$k$ versus bottom-$k$ causal-validation protocol on Qwen3-8B, Llama-3.1-8B, and Mamba-790m at seed 42. This comparison is not intended as a refutation of either method in its original setting; the goal is to test whether the same matched diagnostic protocol produces a single estimator that dominates across architectures.

Table~\ref{tab:competitor-methods-support} reports directional support rates restricted to the task list common to every method on each model: a seven-task subset for the two transformer checkpoints, and a nine-task subset for Mamba-790m. On both transformer checkpoints, neither competitor beats random under the matched protocol. The same pattern holds on Mamba-790m, where both competitors land below LoRA delta. Mean primary-metric gaps are near zero or negative for both competitors on every tested model, indicating that the under-performance is directional rather than a thresholding artifact of the binary support metric.

\begin{table}[t]
\centering
\caption{Causal-validation comparison against published placement methods at seed 42. Each block restricts to the task list common to every method listed for that model. Support is the fraction of tasks on which top-$k$ outperforms bottom-$k$ under the loss-aligned validation metric. Mean PPL gap is bottom$_k$ minus top$_k$ on perplexity-validated tasks; mean primary gap is top$_k$ minus bottom$_k$ on task-native primary metrics. Empty primary-gap entries indicate runs whose logs predate the task-native evaluation path. LoRA-delta primary-gap entries are taken from the task-native audit and may use a different task subset.}
\label{tab:competitor-methods-support}
\small
\resizebox{\linewidth}{!}{%
\begin{tabular}{lccc}
\toprule
\textbf{Method} & \textbf{Support} & \textbf{Mean PPL gap} & \textbf{Mean primary gap} \\
\midrule
\multicolumn{4}{l}{\textit{Qwen3-8B} ($N_{\text{tasks}}=7$)} \\
  LoRA delta~\citep{lora}    & $4/7$ (57\%) & $-1.359$ & $+0.001^{\dagger}$ \\
  Ablation                    & $3/7$ (43\%) & $+0.771$ & --- \\
  Gradient                    & $5/7$ (71\%) & $+1.604$ & --- \\
  Resnorm~\citep{layercard}   & $3/7$ (43\%) & $+1.320$ & --- \\
  Random                      & $4/7$ (57\%) & $-0.106$ & --- \\
  ShapLoRA~\citep{shaplora}   & $3/7$ (43\%) & $+0.126$ & $-0.031$ \\
  TELL-TALE~\citep{telltale}  & $2/7$ (29\%) & $-1.078$ & $-0.010$ \\
\midrule
\multicolumn{4}{l}{\textit{Llama-3.1-8B} ($N_{\text{tasks}}=7$)} \\
  LoRA delta~\citep{lora}    & $3/7$ (43\%) & $-1.471$ & $-0.044^{\ddagger}$ \\
  Ablation                    & $4/7$ (57\%) & $+1.270$ & --- \\
  Gradient                    & $4/7$ (57\%) & $+1.296$ & --- \\
  Resnorm~\citep{layercard}   & $4/7$ (57\%) & $+1.123$ & --- \\
  Random                      & $4/7$ (57\%) & $-0.750$ & --- \\
  ShapLoRA~\citep{shaplora}   & $3/7$ (43\%) & $+0.677$ & $+0.048$ \\
  TELL-TALE~\citep{telltale}  & $1/7$ (14\%) & $-0.761$ & $-0.051$ \\
\midrule
\multicolumn{4}{l}{\textit{Mamba-790m} ($N_{\text{tasks}}=9$)} \\
  LoRA delta~\citep{lora}    & $5/9$ (56\%) & $+0.502$ & $-0.026^{\S}$ \\
  ShapLoRA~\citep{shaplora}   & $2/9$ (22\%) & $+0.164$ & $-0.008$ \\
  TELL-TALE~\citep{telltale}  & $2/9$ (22\%) & $+0.148$ & $-0.012$ \\
\bottomrule
\end{tabular}
}
\end{table}

\subsection{Scale and architecture extension: Qwen3-14B and Mamba-790m}
\label{app:a1-extension}

The Qwen3-8B and Llama-3.1-8B seeded study leaves two natural questions open: whether the scale-dependent weakening of LoRA delta persists at larger transformer sizes, and whether the transformer--SSM contrast survives a seeded causal-validation setup. To address both, we rerun the causal-validation pipeline on Qwen3-14B and Mamba-790m at seeds 43 and 44, with the same $k=6$, LoRA rank 8, and 300 training steps as the 8B study. Each model is run under three placement regimes: LoRA-delta top-$k$, random top-$k$, and uniform LoRA over all layers. For Mamba-790m, we report aggregates on the nine non-LongBench tasks because the A1 task list and the Mamba-790m importance scores use different LongBench variants.

Table~\ref{tab:a1-extension-support} summarizes directional support rates and mean primary-metric gaps. On Mamba-790m, LoRA delta yields a positive mean top$_k$-minus-bottom$_k$ primary-metric gap at both seeds, while random top-$k$ is near zero or negative. On Qwen3-14B, the picture is inverted: random top-$k$ matches or exceeds LoRA delta both on support rate and on mean top-minus-bottom gap. This is consistent with the Qwen3-8B to Qwen3-32B bimodality reported in Figure~\ref{fig:bimodality}: in the transformer family, LoRA delta's prescriptive content weakens as scale grows, while in the SSM family it remains informative under the same validation recipe.

\begin{table}[t]
\centering
\caption{Scale/architecture extension: per-seed directional support rate and mean primary-metric gap for LoRA delta and random top-$k$, plus per-seed uniform-LoRA win rate against the no-adapter baseline. Both checkpoints run with $k=6$, LoRA rank 8, 300 training steps, at seeds 43 and 44.}
\label{tab:a1-extension-support}
\scriptsize
\resizebox{\linewidth}{!}{%
\begin{tabular}{llcccc}
\toprule
\textbf{Model} & \textbf{Method} & \textbf{Seed 43} & \textbf{Seed 44} & \textbf{Mean gap (s43, s44)} & \textbf{$N_{\text{tasks}}$} \\
\midrule
\multirow{3}{*}{Qwen3-14B}
 & LoRA delta   & $2/10$ ($20$\%) & $4/10$ ($40$\%) & $-0.017$, $-0.001$ & 10 \\
 & Random       & $5/10$ ($50$\%) & $5/10$ ($50$\%) & $-0.000$, $+0.004$ & 10 \\
 & Uniform LoRA & $3/10$ wins     & $2/10$ wins     & $+0.062$, $+0.060$ & 10 \\
\midrule
\multirow{3}{*}{Mamba-790m$^{\dagger}$}
 & LoRA delta   & $3/9$ ($33$\%)  & $3/9$ ($33$\%)  & $+0.006$, $+0.021$ & 9 \\
 & Random       & $3/9$ ($33$\%)  & $2/9$ ($22$\%)  & $+0.001$, $-0.015$ & 9 \\
 & Uniform LoRA & $2/9$ wins      & $3/9$ wins      & $+0.034$, $+0.033$ & 9 \\
\bottomrule
\end{tabular}%
}

\vspace{0.4em}
{\small $^{\dagger}$Mamba-790m LongBench was not evaluated in the A1 extension: the updated A1 task list uses \texttt{longbench\_multifieldqa\_en} whereas the Mamba-790m LoRA-delta importance scores cover \texttt{longbench\_single\_doc}; regenerating Mamba importance against the new variant was out of scope. Rows therefore report the nine tasks common to both checkpoints.}

\end{table}

\begin{figure}[t]
\centering
\includegraphics[width=\linewidth]{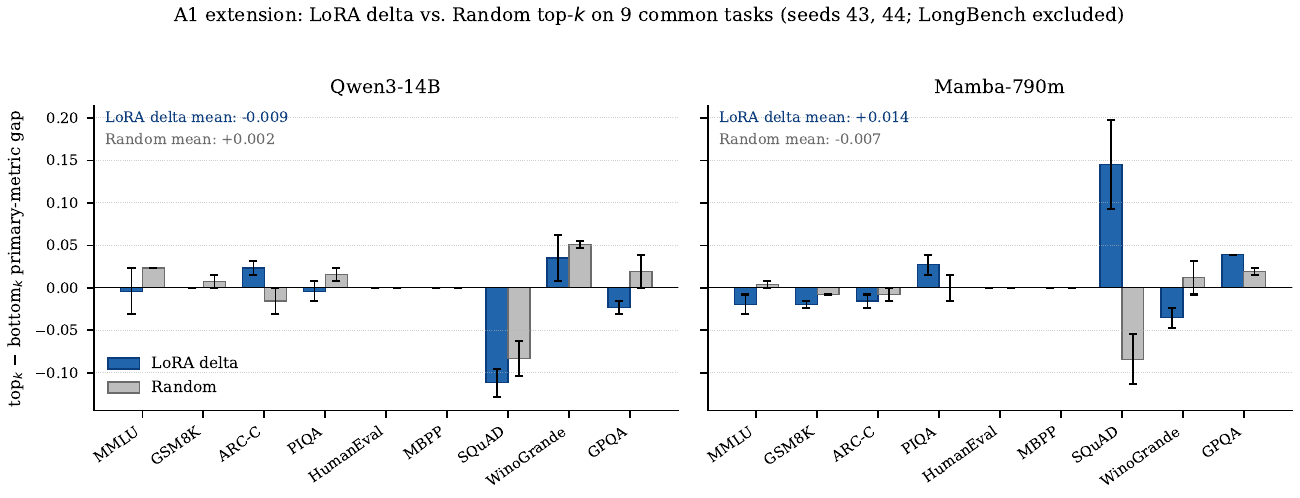}
\caption{Per-task top$_k$ minus bottom$_k$ primary-metric gap on Qwen3-14B and Mamba-790m for LoRA delta and random top-$k$ placement. Bars are seed-averaged means; error bars show the seed range over seeds 43 and 44. LongBench is excluded to keep the task set comparable across checkpoints.}
\label{fig:a1-extension-gap}
\end{figure}

\subsection{Mamba scale sweep}
\label{app:mamba_scale}

Tables~\ref{tab:mamba_ablation_sweep} and~\ref{tab:mamba_lora_sweep} report causal validation for all four Mamba checkpoints. Under ablation, top-$k$ layers follow the boundary pattern at all scales. Under LoRA delta, top-$k$ layers shift to mid-network positions and support rates remain high, including tasks such as WinoGrande and PIQA that fail systematically in transformers under LoRA delta.

\begin{table}[t]
\centering
\caption{Mamba causal validation under ablation across four model sizes. Top-$k$ layers follow the boundary pattern at all scales.}
\label{tab:mamba_ablation_sweep}
\small
\resizebox{\linewidth}{!}{%
\begin{tabular}{lcccc}
\toprule
\textbf{Task} & \textbf{130m} ($k$=4) & \textbf{370m} ($k$=5) & \textbf{790m} ($k$=5) & \textbf{2.8b} ($k$=8) \\
\midrule
HumanEval   & \checkmark & \checkmark & \checkmark & \checkmark \\
MBPP        & \checkmark & \checkmark & \checkmark & \checkmark \\
LongBench   & \checkmark & \checkmark & \checkmark & \checkmark \\
GSM8K       & \checkmark & $\times$   & \checkmark & \checkmark \\
SQuAD       & \checkmark & \checkmark & \checkmark & \checkmark \\
MMLU        & \checkmark & \checkmark & \checkmark & \checkmark \\
GPQA        & \checkmark & \checkmark & \checkmark & \checkmark \\
ARC         & \checkmark & $\times$   & $\times$   & \checkmark \\
PIQA        & \checkmark & $\times$   & $\times$   & $\times$ \\
WinoGrande  & \checkmark & \checkmark & $\times$   & \checkmark \\
\midrule
\textbf{Rate} & \textbf{10/10} & \textbf{7/10} & \textbf{7/10} & \textbf{9/10} \\
\bottomrule
\end{tabular}%
}
\end{table}

\begin{table}[t]
\centering
\caption{Mamba causal validation under LoRA delta across four model sizes. Mid-network layers are selected at all scales; support rates are 9--10/10.}
\label{tab:mamba_lora_sweep}
\small
\resizebox{\linewidth}{!}{%
\begin{tabular}{lcccc}
\toprule
\textbf{Task} & \textbf{130m} ($k$=4) & \textbf{370m} ($k$=5) & \textbf{790m} ($k$=5) & \textbf{2.8b} ($k$=8) \\
\midrule
HumanEval   & \checkmark & \checkmark & \checkmark & \checkmark \\
MBPP        & \checkmark & \checkmark & \checkmark & \checkmark \\
LongBench   & \checkmark & \checkmark & \checkmark & \checkmark \\
GSM8K       & \checkmark & \checkmark & \checkmark & \checkmark \\
SQuAD       & \checkmark & \checkmark & \checkmark & \checkmark \\
MMLU        & \checkmark & \checkmark & $\times$   & $\times$ \\
GPQA        & \checkmark & \checkmark & \checkmark & \checkmark \\
ARC         & \checkmark & \checkmark & \checkmark & \checkmark \\
PIQA        & \checkmark & \checkmark & \checkmark & \checkmark \\
WinoGrande  & \checkmark & \checkmark & \checkmark & \checkmark \\
\midrule
\textbf{Rate} & \textbf{10/10} & \textbf{10/10} & \textbf{9/10} & \textbf{9/10} \\
\bottomrule
\end{tabular}%
}
\end{table}

\section{Alternative-Mechanism Controls}
\label{sec:robustness}
\label{app:robustness}

\subsection{The sign flip is not a layer-0 artifact}
\label{app:l0-control}

\begin{table}[t]
\centering
\caption{Necessity-Plasticity Alignment $\mathcal{NPA}(M)$ for the 14 primary paired pretrained checkpoints (Qwen3, Llama, Mamba). $\mathcal{NPA}(M)$ is the mean Spearman rank correlation between LoRA-delta and ablation importance vectors across $T=10$ tasks. $\mathcal{NPA}^{\setminus L_0}(M)$ recomputes the same correlation with layer 0 excluded from both vectors. L0 share is the fraction of total ablation mass concentrated in the first layer. Family means in bold are arithmetic averages over the listed checkpoints. RWKV6-3B, Zamba2-2.7B, and Falcon-Mamba-7B are reported in Section~\ref{sec:consistency_results} as validation architectures.}
\label{tab:consistency}
\small
\resizebox{\linewidth}{!}{%
\begin{tabular}{llrrrr}
\toprule
\textbf{Model} & \textbf{Family} & \textbf{$L$} & \boldmath$C(M)$ & \boldmath$C(M)\setminus\!L_0$ & \textbf{L0 share} \\
\midrule
Qwen3-0.6B & Transformer & 28 & $-0.25$ & $-0.16$ & 0.89 \\
Qwen3-1.7B & Transformer & 28 & $-0.47$ & $-0.40$ & 0.94 \\
Qwen3-4B & Transformer & 36 & $-0.23$ & $-0.16$ & 0.61 \\
Qwen3-8B & Transformer & 36 & $-0.30$ & $-0.24$ & 0.60 \\
Qwen3-14B & Transformer & 40 & $-0.06$ & $-0.10$ & 0.60 \\
Qwen3-32B & Transformer & 64 & $+0.07$ & $+0.10$ & 0.60 \\
Llama-3.2-1B & Transformer & 16 & $-0.74$ & $-0.69$ & 0.32 \\
Llama-3.2-3B & Transformer & 28 & $-0.48$ & $-0.43$ & 0.49 \\
Llama-3.1-8B & Transformer & 32 & $-0.62$ & $-0.59$ & 0.54 \\
Llama-3.1-70B & Transformer & 80 & $+0.01$ & $+0.05$ & 0.61 \\
Mamba-130m & SSM & 24 & $+0.22$ & $+0.22$ & 0.30 \\
Mamba-370m & SSM & 48 & $+0.22$ & $+0.20$ & 0.29 \\
Mamba-790m & SSM & 48 & $+0.25$ & $+0.26$ & 0.09 \\
Mamba-2.8b & SSM & 64 & $+0.28$ & $+0.29$ & 0.07 \\
\midrule
\textbf{Transformer mean (n=10)} & & & $\mathbf{-0.31}$ & $\mathbf{-0.26}$ & 0.62 \\
\textbf{Transformer mean ($\le$14B, n=8)} & & & $\mathbf{-0.39}$ & $\mathbf{-0.35}$ & 0.62 \\
\textbf{Mamba mean (n=4)} & & & $\mathbf{+0.24}$ & $\mathbf{+0.24}$ & 0.19 \\
\bottomrule
\end{tabular}
}
\end{table}

\begin{figure}[t]
\centering
\includegraphics[width=\linewidth]{figures/fig_mechanism_cv_l0.pdf}
\caption{Per-checkpoint distribution of the structural asymmetry summarized by leave-out-L0 and L0-share controls. Dashed lines mark family means; two-sided Mann--Whitney U $p$-values compare family membership.}
\label{fig:mechanism-cv-l0-app}
\end{figure}

The L0 control rules out the simplest explanation for the transformer--Mamba split. Transformers place much more ablation mass on layer 0 than Mambas, so the control is necessary. However, removing layer 0 shifts transformer correlations toward zero without changing the qualitative sign for transformers up to 14B. The observed anti-correlation is therefore not reducible to a single embedding-bottleneck layer.

\subsection{The sign flip is not a submodule-scope artifact}
\label{app:submodule}

In the main experiments, LoRA delta is computed on trainable LoRA target modules, whereas ablation removes a layer-local computation. This raises a natural concern: perhaps the observed disagreement is caused by comparing different parts of the block rather than by comparing different notions of importance. To address this, we recompute placement ranking alignment under matched submodule scopes. For each matched setting, both estimators are restricted to the same part of the transformer block.

The cross-method anti-correlation is negative for every available submodule in Llama-3.1-8B: $\rho=-0.581$ for attention, $-0.266$ for MLP, and $-0.298$ for full-block comparisons. The corresponding top-6 layer overlaps are 15\%, 17\%, and 27\%. For two random 6-of-32 selections, the expected overlap is $6\times6/32=1.125$ layers, or 18.75\% of a top-6 set. Thus the attention and MLP overlaps are at or below random, whereas the full-block overlap remains far below within-LoRA overlaps. This rules out the possibility that the full-block anti-correlation is an artifact of mixing attention and MLP signals.

Within LoRA delta, submodules converge on the same layers. The mean Spearman correlation between LD-attn and LD-MLP is $+0.839$ in Llama and $+0.659$ in Qwen, rising to $+0.883$ for LD-attn versus LD-full in Llama. Top-6 overlap ranges from 55--60\%. This indicates that the terminal cluster is not a property of which projections are updated: adapting attention alone, MLP alone, or both simultaneously identifies the same final depth region.

Within ablation, attention and MLP criticality are more model-dependent. In Llama, Abl-attn versus Abl-MLP yields $\rho=+0.427$ and 67\% top-6 overlap, suggesting moderate co-localization. In Qwen3-8B, the same comparison yields $\rho=-0.027$ and 27\% overlap, indicating structurally distinct attention-critical and MLP-critical layer sets. Thus, if using ablation to guide submodule-specific placement, the submodule choice matters; if using LoRA delta, the choice is much less important.

\begin{table}[t]
\centering
\caption{Submodule-matched placement ranking alignment. The Llama-3.1-8B full-block row is the headline control: both estimators are applied at the block level, yet the correlation remains negative. Within-LoRA-delta correlations between attention- and MLP-matched scorings are strongly positive, ruling out within-method noise.}
\label{tab:submodule-matched-main}
\small
\resizebox{\linewidth}{!}{%
\begin{tabular}{lccc}
\toprule
\textbf{Cross-method} $\rho$ & \textbf{Attn-matched} & \textbf{MLP-matched} & \textbf{Full-block} \\
\midrule
Llama-3.1-8B & $-0.58$ & $-0.27$ & $\mathbf{-0.30}$ \\
Qwen3-8B     & $-0.07$ & $+0.15$ & --- \\
\bottomrule
\end{tabular}%
}
\end{table}
\begin{table}[t]
\centering
\caption{Mean Spearman $\rho$ between condition pairs for Llama-3.1-8B and Qwen3-8B. Cross-method comparisons are negative in Llama across all submodules; within-LoRA-delta comparisons are positive in both models; within-ablation comparisons reveal model-dependent divergence.}
\label{tab:submodule_rho}
\small
\resizebox{\linewidth}{!}{%
\begin{tabular}{llccc}
\toprule
\textbf{Comparison type} & \textbf{Model} & \textbf{Attn} & \textbf{MLP} & \textbf{Full} \\
\midrule
\multirow{2}{*}{Abl vs LD (cross-method)}
  & Llama-3.1-8B & $-0.581$ & $-0.266$ & $-0.298$ \\
  & Qwen3-8B     & $-0.066$ & $+0.148$ & --- \\
\midrule
\multirow{2}{*}{LD-attn vs LD-$X$ (within LoRA delta)}
  & Llama-3.1-8B & $1.000$ & $+0.839$ & $+0.883$ \\
  & Qwen3-8B     & $1.000$ & $+0.659$ & --- \\
\midrule
\multirow{2}{*}{Abl-attn vs Abl-$X$ (within ablation)}
  & Llama-3.1-8B & $1.000$ & $+0.427$ & $+0.557$ \\
  & Qwen3-8B     & $1.000$ & $-0.027$ & $+0.284$ \\
\bottomrule
\end{tabular}%
}
\end{table}

\begin{figure}[t]
\centering
\includegraphics[width=\linewidth]{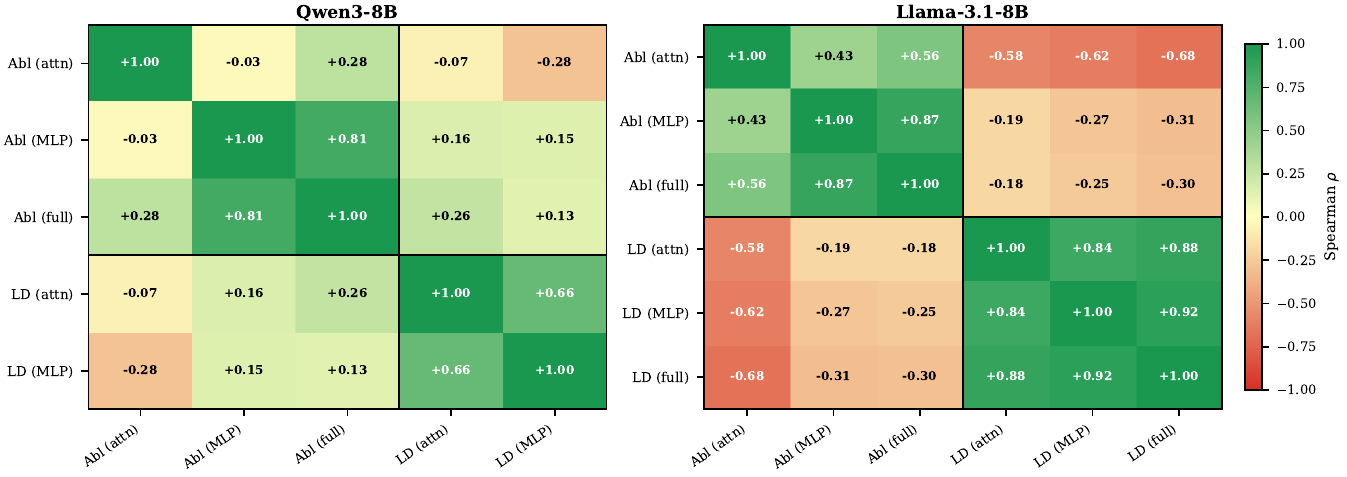}
\caption{Full Spearman $\rho$ matrix across matched-control conditions for Llama-3.1-8B and Qwen3-8B. The top-left block contains within-ablation correlations; the bottom-right block contains within-LoRA-delta correlations. The off-diagonal quadrants show the cross-method signal.}
\label{fig:sub_correlation_matrix}
\end{figure}

\begin{figure}[t]
\centering
\includegraphics[width=\linewidth]{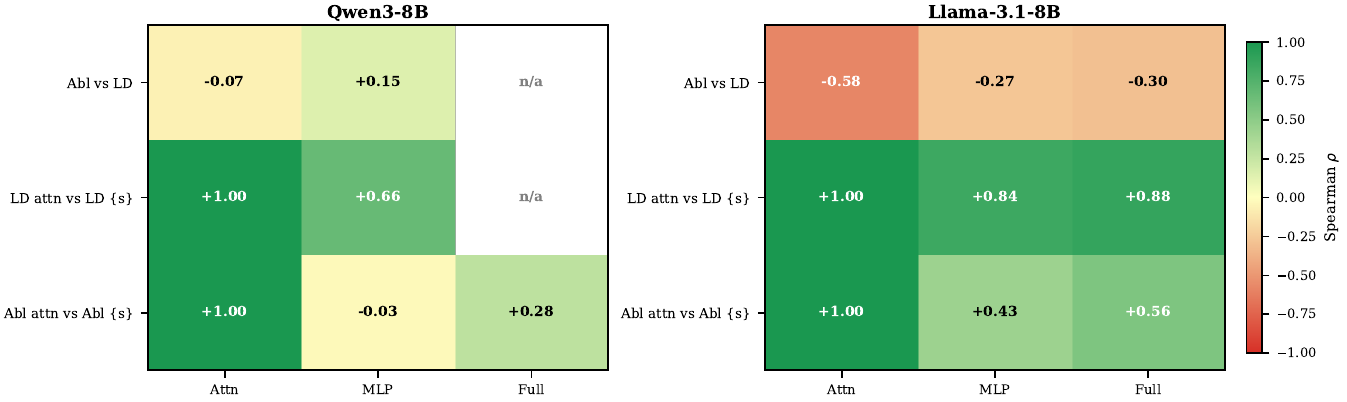}
\caption{Summary Spearman $\rho$ heatmap by comparison type and submodule. The cross-method row is negative for Llama across all submodules. The within-LoRA-delta row is positive for both models. The within-ablation row reveals that Qwen3-8B has near-zero attention--MLP correlation, indicating structurally separate attention-critical and MLP-critical layer sets.}
\label{fig:sub_rho_summary}
\end{figure}

\begin{revblock}
\subsection{The sign flip is not a LoRA artifact: adapter-free full fine-tuning control}
\label{app:fullft}

Plasticity is defined in the main text through the magnitude of a LoRA update,
which raises the possibility that $\mathcal{NPA}$ measures where LoRA places
updates rather than where the architecture absorbs new information. We
therefore rerun the Plasticity measurement without adapters: for each
checkpoint we fine-tune \emph{all} parameters on the same tasks, under the same
data and budget, and recompute the per-layer update magnitude
$\|\Delta W_l\|_F$ from the difference between the fine-tuned and pretrained
weights. This removes every LoRA-specific degree of freedom at once: rank,
scaling factor $\alpha$, initialization, and the choice of target modules.

Table~\ref{tab:fullft-control} reports the result on the $13$ checkpoints we
were able to fully retrain. Two quantities matter. First,
$\rho(\Delta W^{\mathrm{full}}, \Delta W^{\mathrm{LoRA}})$ is positive on
\emph{every} checkpoint, so unconstrained fine-tuning and LoRA select the same
layers; LoRA is not redirecting updates to an unrepresentative subset of depth.
Second, $\rho(\Delta W^{\mathrm{full}}, \text{ablation})$ --- the adapter-free
analogue of $\mathcal{NPA}$ --- is negative for every evaluated transformer
with an ablation baseline, and positive or near zero for every evaluated Mamba,
RWKV, and hybrid checkpoint. The sign therefore matches the LoRA-based
$\mathcal{NPA}$ of Table~\ref{tab:consistency} in all $13$ cases.

Correlation between the two adaptation procedures declines with model size
(from $0.96$ on Qwen3-0.6B to $0.39$ on Qwen3-14B), which is expected: larger
models have more directions in which a full-rank update can differ from a
rank-$r$ one. What does \emph{not} decline is the agreement in sign between
$\rho(\Delta W^{\mathrm{full}}, \text{ablation})$ and $\mathcal{NPA}$, which is
the quantity the paper's claims rest on.

\begin{table}[t]
\centering

\caption{Adapter-free control. $\rho(\Delta W^{\mathrm{full}}, \Delta W^{\mathrm{LoRA}})$ is the Spearman correlation between full-parameter and LoRA per-layer update profiles. $\rho(\Delta W^{\mathrm{full}}, \mathrm{abl.})$ is the adapter-free analogue of $\mathcal{NPA}$, computed against the same ablation-based Necessity rankings. The final column repeats the LoRA-based $\mathcal{NPA}$ from Table~\ref{tab:consistency} for comparison. Signs agree on all $13$ checkpoints. Qwen3-32B, Llama-3.1-70B, Mamba-2.8B, and Falcon-Mamba-7B were not retrained under full FT for compute reasons.}
\label{tab:fullft-control}
\small
\resizebox{\linewidth}{!}{%
\begin{tabular}{llrrr}
\toprule
\textbf{Model} & \textbf{Family} & \boldmath$\rho(\mathrm{full},\mathrm{LoRA})$ & \boldmath$\rho(\mathrm{full},\mathrm{abl.})$ & \boldmath$\mathcal{NPA}$ \\
\midrule
Qwen3-0.6B    & Transformer & $0.96$ & $-0.27$ & $-0.25$ \\
Qwen3-1.7B    & Transformer & $0.95$ & $-0.45$ & $-0.47$ \\
Qwen3-4B      & Transformer & $0.88$ & $-0.21$ & $-0.23$ \\
Qwen3-8B      & Transformer & $0.48$ & $-0.25$ & $-0.30$ \\
Qwen3-14B     & Transformer & $0.39$ & $-0.23$ & $-0.06$ \\
Llama-3.2-1B  & Transformer & $0.70$ & $-0.76$ & $-0.74$ \\
Llama-3.2-3B  & Transformer & $0.70$ & $-0.64$ & $-0.48$ \\
Llama-3.1-8B  & Transformer & $0.43$ & $-0.42$ & $-0.62$ \\
\midrule
Mamba-130m    & SSM & $0.46$ & $+0.11$ & $+0.22$ \\
Mamba-370m    & SSM & $0.41$ & $+0.35$ & $+0.22$ \\
Mamba-790m    & SSM & $0.42$ & $+0.41$ & $+0.25$ \\
RWKV6-3B      & SSM & $0.75$ & $+0.42$ & $+0.54$ \\
\midrule
Zamba2-2.7B   & Hybrid & $0.30$ & $+0.05$ & $+0.05$ \\
\bottomrule
\end{tabular}%
}
\end{table}

\paragraph{Update-to-weight normalization.}
A related concern is that $\|\Delta W_l\|_F$ rewards layers that simply have
larger weight matrices. We therefore recompute the Plasticity ranking from the
normalized quantity $\|\Delta W_l\|_F / \|W_l\|_F$, using the corresponding
updated parameter set for each method. The qualitative conclusion is unchanged:
LoRA-based $\mathcal{NPA}$ values are essentially stable under normalization,
the evaluated transformers remain negative, the pure Mamba checkpoints and
RWKV6-3B remain positive, and the already near-zero hybrid and boundary cases
remain small in magnitude. Normalization thus rescales the Plasticity profile
without moving the architecture split.

\subsection{Component-level analysis of a hybrid checkpoint}
\label{app:zamba-components}

Zamba2-2.7B is the one checkpoint in our set that contains both block types, so
it permits a within-model version of the architecture comparison. Its $54$
layers comprise $45$ Mamba blocks and $9$ shared-attention hybrid blocks at
positions $6, 12, \ldots, 51$. We partition the layers by block type and compute
each component's share of three quantities: total LoRA update mass, total
full-FT update mass, and total ablation Necessity mass.

Table~\ref{tab:zamba-components} shows the dissociation reproduced inside a
single model. The attention component is $16.7\%$ of the layers but absorbs
$36.3\%$ of LoRA update mass and $31.6\%$ of full-FT update mass --- roughly
twice its layer share under either adaptation procedure --- while carrying only
$10.4\%$ of ablation Necessity mass, about $0.6\times$ its layer share.
Necessity is instead concentrated in the Mamba blocks. Zamba2's near-zero
whole-model $\mathcal{NPA}$ of $+0.05$ is therefore not evidence that the effect
is absent; it is the average of a transformer-like component and an SSM-like
component with opposing profiles.

Because this rests on a single checkpoint with one attention/SSM ratio, we treat
statements about other ratios as extrapolation beyond the measured setting.

\begin{table}[t]
\centering
\caption{Component-level decomposition of Zamba2-2.7B. Shares are percentages of the model total; $\pm$ values are standard deviations across tasks. The attention component takes roughly twice its layer share of adaptation mass under both LoRA and full FT, but only about $0.6\times$ its layer share of Necessity.}
\label{tab:zamba-components}
\small
\resizebox{\linewidth}{!}{%
\begin{tabular}{lrrrr}
\toprule
\textbf{Component} & \textbf{Layers} & \textbf{LoRA} \boldmath$\Delta W$ & \textbf{Full-FT} \boldmath$\Delta W$ & \textbf{Necessity} \\
\midrule
Attention blocks ($9$)  & $16.7\%$ & $36.3 \pm 0.6$ & $31.6 \pm 0.7$ & $10.4 \pm 2.6$ \\
Mamba blocks ($45$)     & $83.3\%$ & $63.7$         & $68.4$         & $89.6$ \\
\bottomrule
\end{tabular}%
}
\end{table}

\end{revblock}

\subsection{Proxy validation is conditional on the metric}
\label{app:metric-dependence}

The main causal-validation protocol uses a shared likelihood-based proxy so that top-$k$ and bottom-$k$ placement can be compared across heterogeneous tasks. This makes the comparison controlled, but it also means that the resulting directional labels should not be interpreted as universal claims about end-task performance.

Estimator choice reverses which tasks appear supported. LoRA-delta placement supports LongBench in 9/10 transformer checkpoints, whereas ablation supports it in 2/10. On SQuAD, the ordering flips: 3/10 under LoRA delta and 9/10 under ablation. These are the same models, training budget, and validation loop; only the importance estimator changes. Code-generation tasks favor LoRA delta under the proxy, while commonsense tasks show little structured support for either estimator. These patterns show that estimator choice changes the placement conclusion even when the training and validation loop are held fixed.

We then audit the proxy verdicts against task-native metrics, including pass@1, exact match, F1, and LongBench aggregate scores. The audit shows that the proxy metric is neither uniformly optimistic nor uniformly pessimistic. In some cases, the proxy reports a top-$k$ win that does not appear under the native metric; in other cases, the native metric improves even when the proxy does not. Thus, proxy validation captures one operational notion of selective adaptation, but it is not interchangeable with task-native evaluation.
\begin{table}[t]
\centering
\small
\caption{Task-native MCQ accuracy audit for causal-validation rows. Each entry averages top-$k$ and bottom-$k$ accuracy over the available MCQ tasks: MMLU, ARC-Challenge, GPQA, PIQA, and WinoGrande. When multiple runs are available for the same model and method, we average them. $\Delta$ is top-$k$ minus bottom-$k$. Generation-style tasks are excluded because their main diagnostic validation uses perplexity rather than task-native pass@1/F1/exact-match metrics.}
\label{tab:mcq_native_audit_metric}
\resizebox{\linewidth}{!}{%
\begin{tabular}{llccc}
\toprule
\textbf{Model} & \textbf{Method} & \textbf{Top-$k$} & \textbf{Bottom-$k$} & \textbf{$\Delta$} \\
\midrule
Qwen3-8B & LoRA delta & 0.500 & 0.522 & -0.022 \\
Qwen3-8B & Ablation   & 0.524 & 0.593 & -0.069 \\
Qwen3-8B & Gradient   & 0.514 & 0.525 & -0.011 \\
Qwen3-8B & Resnorm    & 0.566 & 0.514 & +0.052 \\
Qwen3-8B & ShapLoRA   & 0.491 & 0.546 & -0.055 \\
Qwen3-8B & TELL-TALE  & 0.498 & 0.530 & -0.031 \\
\midrule
Llama-3.1-8B & LoRA delta & 0.471 & 0.549 & -0.078 \\
Llama-3.1-8B & Ablation   & 0.555 & 0.517 & +0.038 \\
Llama-3.1-8B & Gradient   & 0.559 & 0.513 & +0.047 \\
Llama-3.1-8B & Resnorm    & 0.541 & 0.530 & +0.011 \\
Llama-3.1-8B & ShapLoRA   & 0.572 & 0.530 & +0.042 \\
Llama-3.1-8B & TELL-TALE  & 0.517 & 0.555 & -0.038 \\
\midrule
Qwen3-14B & LoRA delta & 0.544 & 0.538 & +0.006 \\
Qwen3-14B & Random     & 0.543 & 0.522 & +0.021 \\
\midrule
Mamba-790m & Ablation   & 0.617 & 0.672 & -0.055 \\
Mamba-790m & LoRA delta & 0.451 & 0.466 & -0.015 \\
Mamba-790m & Random     & 0.455 & 0.449 & +0.006 \\
Mamba-790m & ShapLoRA   & 0.444 & 0.474 & -0.030 \\
Mamba-790m & TELL-TALE  & 0.452 & 0.472 & -0.020 \\
\bottomrule
\end{tabular}%
}
\end{table}
\section{Full Causal Validation Tables}
\label{app:causal_validation}

Tables~\ref{tab:qwen8b_full}--\ref{tab:llama70b_full} present per-task causal-validation results for representative model--method pairs. Generation rows report perplexity; MCQ rows report accuracy where available. To keep the schema uniform, the metric columns below are labeled as Base, Top$_k$, and Bot$_k$. The final ``Supp.'' column is a directional indicator only. Rows marked with $^\dagger$ are marginal near-ties and should not be treated as strong practical wins.

\begin{table}[t]
\centering
\caption{Qwen3-8B causal validation under LoRA delta ($k=6$ of 36 layers, seed 42). Generation rows report perplexity; MCQ rows report accuracy where available.}
\label{tab:qwen8b_full}
\small
\resizebox{\linewidth}{!}{%
\begin{tabular}{llccccc}
\toprule
Task & Group & Top-6 layers & Base & Top$_k$ & Bot$_k$ & Supp. \\
\midrule
HumanEval & Code       & 23,31--35       & 3.23  & \textbf{1.13} & 1.41 & \checkmark \\
MBPP      & Code       & 28,29,32--35    & 5.88  & \textbf{2.07} & 2.49 & \checkmark \\
LongBench & Long-ctx   & 24,28,30--32,34 & 12.39 & \textbf{5.04} & 7.67 & \checkmark \\
GPQA      & Science    & 28,31--35       & 6.65  & \textbf{3.57} & 3.79 & \checkmark \\
MMLU      & Knowledge  & 30--35          & 10.79 & 9.97          & 6.15 & $\times$ \\
GSM8K     & Math       & 23,31--35       & 3.87  & 2.21          & 1.78 & $\times$ \\
SQuAD     & Retrieval  & 28,31--35       & 13.92 & 8.72          & 7.26 & $\times$ \\
ARC       & Commonsense & 30--35         & 8.95  & 6.27          & 4.10 & $\times$ \\
PIQA      & Commonsense & 30--35         & 12.45 & 9.80          & 4.46 & $\times$ \\
WinoGrande & Commonsense & 30--35        & 0.766 & 0.648         & 0.703 & $\times$ \\
\bottomrule
\end{tabular}%
}
\end{table}

\begin{table}[t]
\centering
\caption{Llama-3.1-8B causal validation under LoRA delta ($k=6$ of 32 layers, seed 42). Generation rows report perplexity; MCQ rows report accuracy where available.}
\label{tab:llama8b_full}
\small
\resizebox{\linewidth}{!}{%
\begin{tabular}{llccccc}
\toprule
Task & Group & Top-6 layers & Base & Top$_k$ & Bot$_k$ & Supp. \\
\midrule
HumanEval & Code      & 25--29,31    & 3.34 & \textbf{1.36} & 1.86 & \checkmark \\
MBPP      & Code      & 24--29       & 5.93 & \textbf{2.01} & 2.33 & \checkmark \\
LongBench & Long-ctx  & 15,26--30    & 9.29 & \textbf{3.68} & 5.47 & \checkmark \\
GPQA      & Science   & 16,17,19,26,29,30 & 5.28 & \textbf{3.09} & 3.40 & \checkmark \\
MMLU      & Knowledge & 15,17,23,25,28,29 & 8.27 & 10.31 & 6.03 & $\times$ \\
GSM8K     & Math      & 25--29,31    & 5.13 & 3.21          & 2.79 & $\times$ \\
SQuAD     & Retrieval & 16,27--31    & 9.14 & 7.69          & 5.64 & $\times$ \\
ARC       & Commonsense & 24--29     & 7.66 & 7.32          & 4.91 & $\times$ \\
PIQA      & Commonsense & 24--29     & 11.56 & 9.25         & 5.28 & $\times$ \\
WinoGrande & Commonsense & 25--29,31 & 0.773 & 0.719        & 0.773 & $\times$ \\
\bottomrule
\end{tabular}%
}
\end{table}

\begin{table}[t]
\centering
\caption{Mamba-790m causal validation under ablation ($k=5$ of 48 layers). Boundary layers dominate top-$k$ selection. Generation rows report perplexity; MCQ rows report accuracy where available.}
\label{tab:mamba790m_full}
\small
\resizebox{\linewidth}{!}{%
\begin{tabular}{llccccc}
\toprule
Task & Group & Top-5 layers & Base & Top$_k$ & Bot$_k$ & Supp. \\
\midrule
HumanEval  & Code      & 0,1,34,46,47 & 4.59  & \textbf{3.95} & 4.18 & $\checkmark$ \\
MBPP       & Code      & 0,1,34,46,47 & 6.21  & \textbf{4.02} & 4.87 & $\checkmark$ \\
GSM8K      & Math      & 0,1,34,46,47 & 8.96  & \textbf{4.97} & 6.02 & $\checkmark$ \\
SQuAD      & Retrieval & 0,1,29,46,47 & 9.14  & \textbf{5.31} & 6.78 & $\checkmark$ \\
LongBench  & Long-ctx  & 0,1,34,46,47 & 11.30 & \textbf{7.42} & 8.95 & $\checkmark$ \\
MMLU       & Knowledge & 0,1,34,46,47 & 8.10  & \textbf{5.60} & 6.20 & $\checkmark$ \\
GPQA       & Science   & 0,1,34,46,47 & 6.20  & \textbf{4.10} & 4.58 & $\checkmark$ \\
ARC        & Commonsense & 0,1,29,34,47 & 7.80 & 5.10 & 5.62 & $\times$ \\
PIQA       & Commonsense & 0,1,29,46,47 & 10.40 & 7.80 & 8.20 & $\times$ \\
WinoGrande & Commonsense & 0,1,29,46,47 & 0.656 & 0.633 & 0.672 & $\times$ \\
\bottomrule
\end{tabular}%
}
\end{table}

\begin{table}[t]
\centering
\caption{Llama-3.2-3B causal validation under LoRA delta ($k=5$ of 28 layers). Generation rows report perplexity; MCQ rows report accuracy where available.}
\label{tab:llama3b_full}
\small
\resizebox{\linewidth}{!}{%
\begin{tabular}{llccccc}
\toprule
Task & Group & Top-5 layers & Base & Top$_k$ & Bot$_k$ & Supp. \\
\midrule
HumanEval & Code      & 19,20,23,26,27 & 3.54  & \textbf{1.75} & 2.16 & $\checkmark$ \\
MBPP      & Code      & 16,20,23,26,27 & 6.25  & \textbf{2.14} & 2.54 & $\checkmark$ \\
LongBench & Long-ctx  & 16,23,25,26,27 & 10.80 & \textbf{6.42} & 7.30 & $\checkmark$ \\
GPQA      & Science   & 15,16,17,20,25 & 5.72  & 3.84          & 3.90 & $\times$ \\
MMLU      & Knowledge & 14,20,25,26,27 & 9.00  & 7.71          & 6.64 & $\times$ \\
SQuAD     & Retrieval & 15,19,22,24,27 & 12.42 & 9.61          & 8.64 & $\times$ \\
ARC       & Commonsense & 20,23,24,25,27 & 8.14 & 5.35         & 4.96 & $\times$ \\
PIQA      & Commonsense & 21,22,25,26,27 & 11.48 & 6.19        & 4.71 & $\times$ \\
WinoGrande & Commonsense & 19,20,23,26,27 & 0.766  & 0.680       & 0.750 & $\times$ \\
\bottomrule
\end{tabular}%
}
\end{table}

\begin{table}[t]
\centering
\caption{Qwen3-32B causal validation under LoRA delta ($k=8$ of 64 layers). Generation rows report perplexity; MCQ rows report accuracy where available. The selected layers are bimodal, with an early cluster and a terminal cluster.}
\label{tab:qwen32b_lora_full}
\scriptsize
\resizebox{\linewidth}{!}{%
\begin{tabular}{llccccc}
\toprule
Task & Group & Top-8 layers (sample) & Base & Top$_k$ & Bot$_k$ & Supp. \\
\midrule
HumanEval & Code      & 4,5,6,59,60,61,62,63   & 1.98  & \textbf{1.36} & 1.49 & $\checkmark$ \\
MBPP      & Code      & 53,54,55,59,60,61,62,63 & 4.37 & \textbf{2.03} & 2.48 & $\checkmark$ \\
LongBench & Long-ctx  & 46,48,49,50,51,52,53,54 & 9.84 & \textbf{6.76} & 11.17 & $\checkmark$ \\
GPQA      & Science   & 2,4,5,6,51,52,61,62    & 5.16  & \textbf{3.54} & 3.88 & $\checkmark$ \\
GSM8K     & Math      & 4,5,6,16,17,60,61,62   & 2.25  & \textbf{1.38} & 1.45 & $\checkmark$ \\
SQuAD     & Retrieval & 3,4,5,6,54,59,60,61    & 8.78  & \textbf{4.61} & 4.88 & $\checkmark$ \\
MMLU      & Knowledge & 2,3,4,5,6,48,62,63     & 7.45  & 5.16          & 5.50 & $\times$ \\
ARC       & Commonsense & 2,4,5,6,59,60,61,63  & 6.58  & 3.67          & 3.99 & $\times$ \\
PIQA      & Commonsense & 4,5,6,59,60,61,62,63 & 7.67  & 4.10          & 4.17 & $\times$ \\
WinoGrande & Commonsense & 51,54,58,59,60,61,62,63 & 0.773 & 0.727      & 0.781 & $\times$ \\
\bottomrule
\end{tabular}%
}
\end{table}

\begin{table}[t]
\centering
\caption{Qwen3-32B causal validation under ablation ($k=8$ of 64 layers). Generation rows report perplexity; MCQ rows report accuracy where available. WinoGrande is directionally supported, but the margin is effectively a tie.}
\label{tab:qwen32b_abl_full}
\scriptsize
\resizebox{\linewidth}{!}{%
\begin{tabular}{llccccc}
\toprule
Task & Group & Top-8 layers (sample) & Base & Top$_k$ & Bot$_k$ & Supp. \\
\midrule
HumanEval & Code      & 0,1,6,45,46,47,48,63    & 1.98 & \textbf{1.16} & 1.19 & $\checkmark$ \\
MBPP      & Code      & 0,1,6,40,53,54,60,61    & 4.37 & \textbf{1.83} & 2.35 & $\checkmark$ \\
LongBench & Long-ctx  & 0,1,4,5,6,53,60,61      & 9.84 & \textbf{6.76} & 7.30 & $\checkmark$ \\
GPQA      & Science   & 0,1,2,6,35,54,60,61     & 5.16 & \textbf{3.54} & 3.60 & $\checkmark$ \\
SQuAD     & Retrieval & 0,1,6,38,48,52,60,61    & 8.78 & \textbf{4.61} & 4.73 & $\checkmark$ \\
MMLU      & Knowledge & 0,1,2,6,35,53,62,63     & 7.45 & \textbf{5.16} & 5.26 & $\checkmark$ \\
ARC       & Commonsense & 0,1,2,6,35,54,60,61   & 6.58 & \textbf{3.67} & 3.78 & $\checkmark$ \\
WinoGrande & Commonsense & 0,1,6,41,42,44,52,62 & 0.773  & 0.719 & 0.711 & $\checkmark^\dagger$ \\
GSM8K     & Math      & 0,1,6,48,50,52,60,61    & 2.25 & 1.33          & 1.30 & $\times$ \\
PIQA      & Commonsense & 0,6,41,44,52,54,60,63 & 7.67 & 4.10          & 4.04 & $\times$ \\
\bottomrule
\end{tabular}%
}
\end{table}

\section{Layer-Pattern Summaries}
\label{app:layer-patterns}

\subsection{Per-model top-layer patterns}

\begin{table}[t]
\centering
\caption{Qwen3-8B: top-5 layers by LoRA-delta attribution per task. Bold entries indicate layers outside the terminal cluster L33--L35.}
\label{tab:qwen8b_lora_detail}
\small
\resizebox{\linewidth}{!}{%
\begin{tabular}{lcccccc}
\toprule
Task & \#1 & \#2 & \#3 & \#4 & \#5 & Mean $\|\Delta\|$ \\
\midrule
MMLU       & L34 & L33 & L30 & L32 & L35 & 3.07 \\
GSM8K      & L34 & L35 & L33 & L32 & L31 & 2.02 \\
ARC        & L34 & L33 & L35 & L31 & L32 & 2.38 \\
PIQA       & L34 & L33 & L32 & L31 & L35 & 2.37 \\
HumanEval  & L35 & L34 & L33 & \textbf{L25} & \textbf{L26} & 2.28 \\
MBPP       & L33 & L34 & L35 & L32 & \textbf{L29} & 2.79 \\
SQuAD      & L34 & L35 & L33 & L32 & L31 & 2.25 \\
WinoGrande & \textbf{L32} & L34 & L33 & L31 & L30 & 3.70 \\
GPQA       & L34 & L33 & L35 & L32 & L31 & 2.64 \\
LongBench  & \textbf{L24} & \textbf{L28} & \textbf{L31} & \textbf{L30} & \textbf{L32} & \textbf{6.14} \\
\bottomrule
\end{tabular}%
}
\end{table}

\subsection{Structural layer-selection patterns by method}
\label{app:layer_patterns}

\begin{table}[t]
\centering
\caption{Structural comparison of top-$k$ layer patterns across importance methods and representative models. ``Scattered + L0'' indicates that L0 always appears in top-$k$ and other layers are task-variable.}
\label{tab:layer_patterns}
\small
\resizebox{\linewidth}{!}{%
\begin{tabular}{llll}
\toprule
\textbf{Model} & \textbf{Method} & \textbf{Typical top-$k$} & \textbf{Pattern} \\
\midrule
Qwen3-8B      & LoRA delta & L28--L35          & Terminal cluster \\
Qwen3-14B     & LoRA delta & L1--L6, L37--L39  & Bimodal (early onset) \\
Qwen3-32B     & LoRA delta & L4--L6, L60--L63  & Bimodal \\
Llama-3.2-3B  & LoRA delta & L20--L27          & Terminal cluster \\
Llama-3.1-8B  & LoRA delta & L24--L31          & Terminal cluster \\
Llama-3.1-70B & LoRA delta & L64--L79          & Terminal cluster (tighter) \\
\midrule
Qwen3-4B      & Ablation & L0,L7,L11--L13,L24       & Scattered + L0 \\
Qwen3-14B     & Ablation & L0,L6,L8,L17,L29,L30     & Scattered + L0 \\
Qwen3-32B     & Ablation & L0,L1,L6,L45--L48,L63    & Scattered + L0 \\
Llama-3.1-8B  & Ablation & L0,L1,L2,L7,L14,L31      & Scattered + L0 \\
Llama-3.1-70B & Ablation & L0,L13--L19,L35,L73      & Scattered + L0 \\
\midrule
Mamba-790m    & Ablation   & L0,L1,L34,L46,L47   & Boundary layers \\
Mamba-370m    & Ablation   & L0,L39,L45,L46,L47  & Boundary layers \\
Mamba-2.8B    & Ablation   & L0,L35,L44,L60,L61  & Boundary layers \\
Mamba-790m    & LoRA delta & L2,L20,L38,L40,L41  & Mid-network \\
Mamba-370m    & LoRA delta & L5,L13,L38,L42,L45  & Mid-network \\
Mamba-2.8B    & LoRA delta & L26,L33,L43,L47,L52 & Mid-network \\
\bottomrule
\end{tabular}%
}
\end{table}

\section{Forgetting}
\label{app:forgetting}
\begin{figure}[t]
\centering
\includegraphics[width=\linewidth]{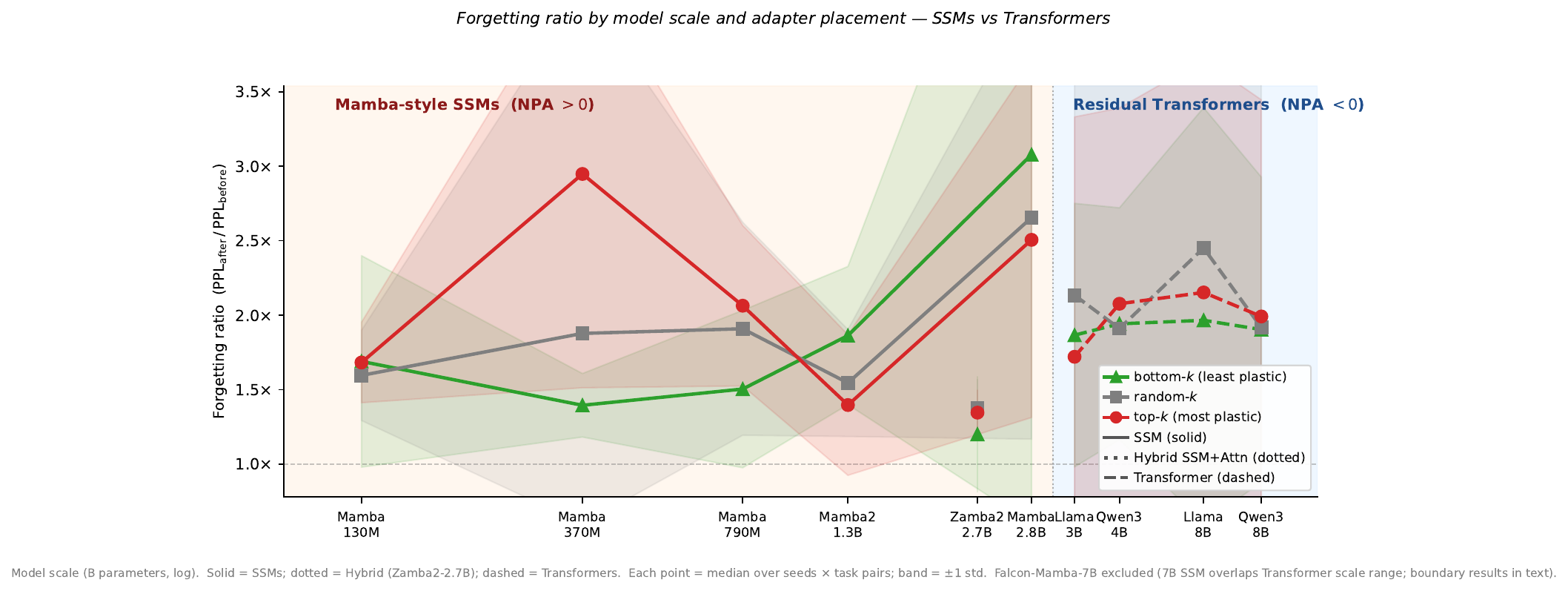}
\caption{\textbf{Forgetting ratio by model scale.}
Each point is the median forgetting ratio across seeds and task pairs; shaded
bands show $\pm1$ std. Solid lines = SSMs (0.13--2.8B); dotted line =
Hybrid (Zamba2-2.7B, $\mathcal{NPA}{=}{+}0.05$); dashed lines =
Transformers (3--9B). Color encodes placement strategy: red = top-$k$ (most
plastic), green = bottom-$k$ (least plastic), gray = random-$k$.
Within SSMs, the three lines are interleaved with no consistent ordering.
Within Transformers, top-$k$ sits above bottom-$k$ at every scale.
Zamba2's three strategies converge to uniformly low forgetting, consistent
with its near-zero NPA and distributed plasticity.
Falcon-Mamba-7B is excluded (its 7B scale overlaps with the Transformer range;
boundary results reported in text).
The boundary between SSM and Transformer scale regions is marked by a dotted
vertical line.}
\label{fig:cl_forgetting_scale}
\end{figure}

\begin{figure}[t]
\centering
\includegraphics[width=\linewidth]{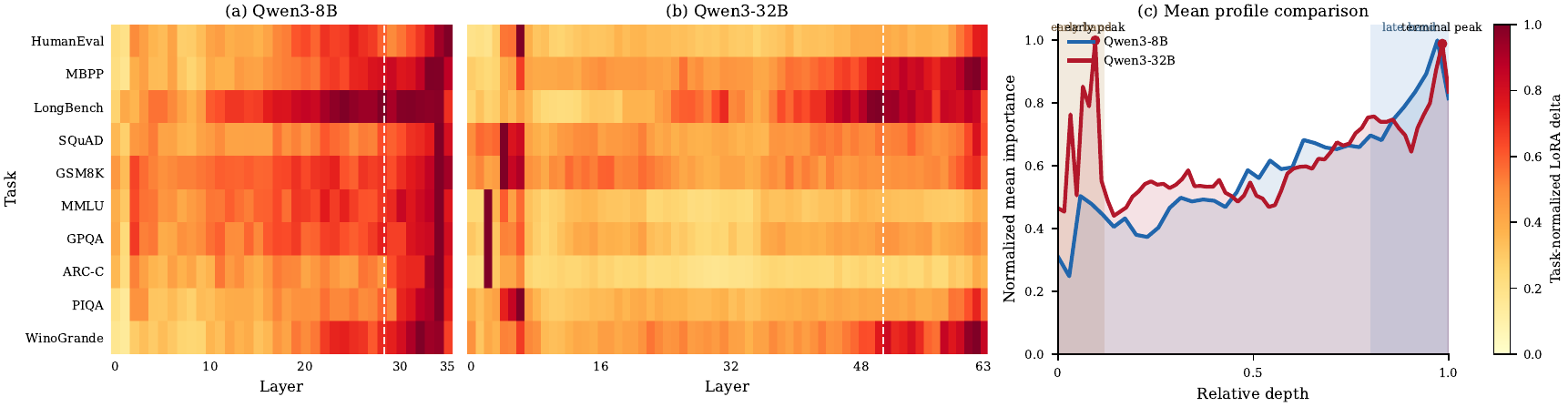}
\caption{Task-normalized LoRA-delta patterns for Qwen3-8B and Qwen3-32B. Panels (a--b) show per-task heatmaps; panel (c) overlays the mean profiles by relative depth. Qwen3-32B develops an additional early-layer peak alongside the terminal cluster, yielding a bimodal profile absent in smaller Qwen3 variants. The bimodality is the proximate clue for the scale-induced collapse of $\mathcal{C}(M)$ on the two largest transformers (Section \ref{sec:scale-failure}).}
\label{fig:bimodality}
\end{figure}

\begin{table}[t]
\centering
\caption{Llama-3.1-70B causal validation under LoRA delta ($k{=}10$ of 80 layers). Generation rows report perplexity; MCQ rows report accuracy where available. Most tasks select terminal-cluster layers; GSM8K additionally recruits early-middle layers. SQuAD is only marginally supported by the binary rule.}
\label{tab:llama70b_full}
\scriptsize
\resizebox{\linewidth}{!}{%
\begin{tabular}{llccccc}
\toprule
Task & Group & Top-10 layers (sample) & Base & Top$_k$ & Bot$_k$ & Supp. \\
\midrule
HumanEval & Code      & 40,41,42,45,73,74,75,76,78,79 & 2.78 & \textbf{1.80} & 2.30 & \checkmark \\
MBPP      & Code      & 41,43,44,45,48,49,72,73,74,75 & 4.94 & \textbf{1.95} & 2.64 & \checkmark \\
LongBench & Long-ctx  & 63,64,67,68,70,71,72,74,75,76 & 7.71 & \textbf{4.66} & 6.59 & \checkmark \\
GPQA      & Science   & 6,8,40,71,73,74,75,77,78,79   & 4.27 & \textbf{3.41} & 3.47 & \checkmark \\
GSM8K     & Math      & 12,13,14,15,16,33,71,75,78,79 & 4.11 & \textbf{2.46} & 2.55 & \checkmark \\
SQuAD     & Retrieval & 27,33,35,36,37,38,74,77,78,79 & 2.91 & \textbf{1.69} & 1.70 & \checkmark$^\dagger$ \\
MMLU      & Knowledge & 4,9,11,21,36,73,74,77,78,79   & 5.69 & 4.72          & 4.58 & $\times$ \\
ARC       & Commonsense & 67,71,72,73,74,75,76,77,78,79 & 5.96 & 3.97         & 3.91 & $\times$ \\
PIQA      & Commonsense & 35,71,72,73,74,75,76,77,78,79 & 9.63 & 4.26         & 3.79 & $\times$ \\
WinoGrande & Commonsense & 68,69,70,71,72,73,74,75,76,77 & 0.812 & 0.781          & 0.758  & $\times$ \\
\bottomrule
\end{tabular}
}
\end{table}

\end{document}